\documentclass[letterpaper]{article} 
\usepackage{aaai2026}  
\usepackage{times}  
\usepackage{helvet}  
\usepackage{courier}  
\usepackage[hyphens]{url}  
\usepackage{graphicx} 
\usepackage{natbib}  
\usepackage{caption} 
\usepackage{algorithm}
\usepackage{algorithmic}

\nocopyright

\usepackage{newfloat}
\usepackage{listings}
\DeclareCaptionStyle{ruled}{labelfont=normalfont,labelsep=colon,strut=off} 
\floatstyle{ruled}
\newfloat{listing}{tb}{lst}{}
\floatname{listing}{Listing}
\title{Towards Understanding the Expressive Power of GNNs with Global Readout}
\author{
    Maurice Funk\textsuperscript{\rm 1, \rm 2} and
    Daumantas Kojelis\textsuperscript{\rm 1}
}
\affiliations{
    \textsuperscript{\rm 1} Leipzig University, Germany\\
    \textsuperscript{\rm 2} ScaDS.AI Center Leipzig/Dresden\\
    maurice.funk@uni-leipzig.de, daumantas.kojelis@uni-leipzig.de
}

\usepackage{amsmath}
\usepackage{amssymb}
\usepackage{amsthm}
\usepackage{amsfonts}
\usepackage[makeroom]{cancel}
\usepackage[T1]{fontenc} 
\usepackage{thm-restate}

\usepackage{xcolor}

\usepackage{tikz}

\theoremstyle{plain}
\newtheorem{lemma}{Lemma}
\newtheorem{theorem}[lemma]{Theorem}

\newtheorem{proposition}[lemma]{Proposition}

\theoremstyle{definition}
\newtheorem{definition}[lemma]{Definition}
\newtheorem{example}[lemma]{Example}

\theoremstyle{remark}
\newtheorem{remark}[lemma]{Remark}

\newcommand{\Rsim}{\ensuremath{\sim^{L, c, *}_{\exists}}}
\newcommand{\Rsimq}{\ensuremath{\sim^{L, c, q'}_{\exists}}}
\newcommand{\RsimP}[1]{\ensuremath{\sim^{#1, c, *}_{\exists}}}
\newcommand{\RsimPP}[2]{\ensuremath{\sim^{#1, c, #2}_{\exists}}}
\newcommand{\RsimPPP}[3]{\ensuremath{\sim^{#1, #2, #3}_{\exists}}}

\newcommand{\tp}{\ensuremath{\mathrm{tp}_{\sim^{L, c}}}}
\newcommand{\tpP}[1]{\ensuremath{\mathrm{tp}_{\sim_{#1, c}}}}
\newcommand{\tpmo}{\ensuremath{\mathrm{tp}_{\sim^{L-1, c}}}}

\newcommand{\hG}{\ensuremath{\widehat{G}}}

\newcommand{\gnn}{\ensuremath{\mathcal{N}}}

\newcommand{\msl}{\ensuremath{\{\!\{}}
\newcommand{\msr}{\ensuremath{\}\!\}}}

\newcommand{\comb}{\ensuremath{\mathsf{comb}}}
\newcommand{\agg}{\ensuremath{\mathsf{agg}}}
\newcommand{\rea}{\ensuremath{\mathsf{read}}}
\newcommand{\cls}{\ensuremath{\mathsf{cls}}}

\newcommand{\tG}{\ensuremath{\widetilde{G}}}
\newcommand{\tV}{\ensuremath{\widetilde{V}}}
\newcommand{\tE}{\ensuremath{\widetilde{E}}}
\newcommand{\tf}{\ensuremath{\widetilde{f}}}

\newcommand{\Ctwo}{\ensuremath{\text{C}^2}}
\newcommand{\GML}{\ensuremath{\text{GML}}}

\begin{document}

\maketitle

\begin{abstract}
    We study the expressive power of message-passing aggregate-combine-readout graph neural networks (ACR-GNNs).
    Particularly, we focus on the first-order (FO) properties expressible
    by this formalism.
    While a tight logical characterisation remains a difficult
    open question, we make two contributions towards answering it.
    First, we show that sum aggregation and readout suffice for GNNs to
    capture FO properties that cannot be expressed in the logic
    $\Ctwo$ on both directed and undirected graphs. This strengthens known
    results by \citet{Hauke_Wałęga_2026} where aggregation and readout functions are specially crafted for the task.
    Second, we identify two natural ways of restoring characterisability (in regards to $\Ctwo$) for ACR-GNNs.
    One option is to limit local aggregation (without imposing restrictions on global readout), whilst
    the second is to run ACR-GNNs over graphs of bounded degree (but unbounded size).
    In both cases, the FO properties captured by GNNs are exactly
    those definable by a formula in graded modal logic with global counting modalities.
    Our results thus establish an innate lower- and upper-bound in terms of how far (fragments of) $\Ctwo$ can be
    taken to characterise GNNs, and imply that is indeed the unbounded interaction of aggregation and readout that
    pushes the logical expressive power of GNNs above $\Ctwo$.
\end{abstract}

\section{Introduction}

Message-passing graph neural networks
(GNNs)~\cite{scarselliGraphNeuralNetwork2009} are popular machine learning
models for graph structured data and are by design permutation-invariant. They
have found applications in diverse areas that involve graph data such as analysis
of molecules~\cite{DuvenaudMABHAA15}, medicine~\cite{ZitnikAL18}, fraud detection~\cite{DengH21}, and
traffic forecasting~\cite{JiangL22}.

Many different variants of GNNs are used in practice, but they
all share central features. They associate each vertex of a graph with an
embedding vector and update this embedding layer-by-layer.
In each layer, every vertex sends messages containing its current embedding to
other vertices, and receives such messages themselves. These received messages
are then aggregated and combined with the current vertex embedding to form the
new embedding.

The main differences between GNN variants lie in which vertices receive a
message and how these messages are aggregated and combined. 
Commonly, messages are only passed along edges of the graph. Other variants are equipped with
global readout functionality -- the ability to aggregate messages from \textit{all} vertices regardless of their connection in the graph.
Popular choices of functions used for processing the messages in aggregation and
readout are dimension-wise sum, max, or mean.
The combination function, which forms the new vertex embeddings from the
aggregated messages, is typically a trainable feed-forward neural network (FFNN).
To understand the limitations and relationships of all these possible choices, a growing body
of literature is dedicated to characterising the expressive power of the GNN variants using logical formalisms.

One main direction of recent research is finding which vertex (or graph)
properties can be expressed by GNNs. The central result
in this area is by \citet{Barcel2020},
who show that GNNs with only local aggregation can express exactly the
first-order (FO) vertex properties that are formulas of graded modal logic
($\GML$), and that sum aggregation suffices for this.
In the same paper, the authors also show that GNNs with both local aggregation and global
readout can express all formulas of the logic $\Ctwo$, that is, the two-variable fragment
of first-order logic with counting quantifiers. The question whether $\Ctwo$
exactly corresponds to GNNs with aggregation and readout was left open.

Surprisingly, the questions was later answered negatively by
\citet{Hauke_Wałęga_2026}. They show that there is an FO vertex property that
can be expressed by certain aggregate-combine-readout GNN (ACR-GNN)
architectures, but cannot be expressed
by $\Ctwo$.
In light of this unexpected expressive power, it is an open question what
logics can be used to characterise GNNs with readout.


More is known when the aggregation and readout functions of ACR-GNNs are not
arbitrary functions, but are bounded in the sense that they are only sensitive
to messages up to a certain multiplicity. \citet{GrauFW26} show that ACR-GNNs with
bounded readout and bounded aggregation correspond exactly to $\GML$ extended with
a global counting modality. This holds for all vertex properties, not just
the ones expressible in FO.

Unfortunately, common aggregation and readout functions such as sum and mean are
not bounded. Likewise, it is not known whether these specific aggregation functions allow
ACR-GNNs to express FO properties outside of $\Ctwo$.
This leaves a gap in the understanding of the expressive power of GNNs,
as the known upper and lower bounds are relatively far apart.
We make two contribution towards closing this gap.

First, we show that \textit{simple} ACR-GNNs which only use (unbounded) sum for
aggregation and readout can express FO properties outside of $\Ctwo$.
We show this for both directed (Theorem~\ref{theorem_C2_over_directed_inexpress}) and undirected graphs
(Theorem~\ref{theorem_C2_over_undirected_inexpress}). This strengthens the
mentioned result by~\citet{Hauke_Wałęga_2026}, which relies on specialized
aggregation, readout and combination functions.
In fact, for directed graphs we use the same FO property as
\citeauthor{Hauke_Wałęga_2026} (strict linear orders), but use a new
characterisation in terms of \emph{homomorphism counts} to show that this
property is expressible using only sum aggregation and readout.
For undirected graphs, we also follow a similar approach as
\citeauthor{Hauke_Wałęga_2026} and simulate directed
edges via undirected gadgets. Compared to the paper above, we use a different
directed graph gadgetisation for technical reasons.

As our second contribution, we identify two relevant cases that each 
result in ACR-GNNs capturing exactly the FO vertex properties expressible in $\GML$
extended a global counting modality on both directed and undirected graphs.
The first case is the restriction to only graphs in which the degree of vertices
is bounded by some constant (Theorem~\ref{prop bounded graphs gml}). This is a
natural assumption for many graph datasets, for example, those that originate from
molecules or road networks.
The second, closely related, case is the case of ACR-GNNs in which the
aggregation functions are bounded but the readout functions can be arbitrary
(Proposition~\ref{theo_main_abstract_logic}).
This, for FO vertex properties, generalizes the result by \citet{GrauFW26},
where \textit{both} aggregation and readout functions are taken to be bounded.
%
It is quite surprising that allowing unbounded readout functions
does not increase the expressive power relative to FO. This reveals that
it is indeed the unbounded interaction of readout and aggregation that
moves GNNs outside of $\Ctwo$ in terms of expressivity.
Our characterisation result relies on the combination of first-order model theory
and architectural properties of GNNs. The constructions used may be of independent
interest in the study of finite model theory.

The rest of this paper is structured as follows. 
Section~\ref{sec:prelim} gives the necessary definitions used throughout the work.
In Section~\ref{section_directed_graphs}, we show that ACR-GNNs that use only
sum for aggregation, readout and combination can express a first-order
property that is outside of $\Ctwo$.
In Section~\ref{sec:undirected} we transfer the result to the setting of undirected graphs.
Section~\ref{section_abstract_logic} then is concerned with the characterisation
of ACR-GNNs using $\GML$ with global counting modalities in the case of graphs
of bounded degree and the case of bounded aggregation functions.
Section~\ref{sec:conclusion} concludes the paper.




\paragraph{Related Work} 
Our results can be viewed as the continuation of the following three works.
\citet{Barcel2020} show that GNNs (without readout) are equivalent in expressive power to graded modal logic (relative to FO).
\citet{Hauke_Wałęga_2026} show that no such equivalence can be established for ACR-GNNs and any fragment of $\Ctwo$.
\citet{GrauFW26} complements the above by providing logical characterisations for ACR-GNNs with bounded aggregation and readout.

In terms of exact characterisations, we mention the work by
\citet{BenediktLMT24}, where expressive power of GNNs with sum
aggregation and bounded activation functions 
is shown to be that of modal logic with Presburger quantifiers.
A result by \citet{tenacucalaCorrespondenceMonotonicMaxsum2023} translates GNNs with max aggregation
into a form of datalog that is close to modal logic.
\citet{walegaPreservationTheoremsUnravellingInvariant2026} identify subclasses
of GNNs that correspond exactly to fragments of $\GML$.
\citet{SchonherrL26} show that GNNs with mean aggregation are related to ratio modal logic, albeit in the non-uniform setting.
When relativised to FO, such GNNs match standard modal logic.

With regard to global readout,
\citet{rosenbluth2024} compare the functions that can be computed by ACR-GNNs with
sum readout and self-attention (which can be considered a form of weighted mean
readout).
\citet{ahvonen2026} show that graph transformers and
GNNs with mean readout, when relativised to FO, match graded modal logic with a
global existence modality. 
An upper bound on the expressive power of ACR-GNNs with sum readout is provided
by \citet{grohe2024} using the logic FO+C which extends FO with very expressive
counting terms.

Another line of work is interested in the distinguishing power of GNNs. Here the
foundational result is that this distinguishing power is bounded by the
1-dimensional Weisfeiler Lemann test~\cite{xu2018howPowerful}.
There are also results on the expressive power of recurrent GNNs that do not
have a fixed number of message passing rounds~\cite{PfluegerCK24,ahvonen2024}. 
\citet{soeteman2026} investigate the expressive power of GNNs for graphs that
are equipped with unique node identifiers.

\section{Preliminaries}\label{sec:prelim}

A multiset $M$ over some set $S$ is a function $S \to \mathbb{N}$. We use $\msl
\cdot \msr$ to denote multisets and $\mathcal{M}(S)$ to denote the set of all
multisets over $S$.
For a multiset $M$ over $S$, $M_{|c}$ denotes the $c$-restriction of $M$, that is, the
multiset given by $M_{|c}(x) = \min(M(x), c)$ for all $x \in S$.
We say that a function $f$ with domain $\mathcal{M}(S)$ is \emph{$c$-bounded} for some $c \in \mathbb{N}$
if $f(M) = f(M_{|c})$ for all $M \in \mathcal{M}(S)$.
We sometimes leave $c$ implicit and simply say that $f$ is \textit{bounded}.

A common bounded function on multisets of vectors is the operation taking the
dimension-wise maximal value $\max$. In fact, $\max$ is $1$-bounded. On the
other hand, functions that take the dimension-wise sum, product or mean of a
given multiset are not $c$-bounded for any $c \in \mathbb{N}$.

If $v$ is an $d$-dimensional vector, we denote its $i$-th component by $v[i]$.

\paragraph{Graphs}
A graph $G = (V, E)$ is a tuple consisting of a non-empty set of vertices $V$ and edges $E \subseteq V \times V$.
Unless explicitly stated, graphs are \emph{directed} and can contain reflexive edges.
In \emph{undirected} graphs $(v, u) \in E$ implicitely means that also $(u, v) \in E$.
The \emph{out-neighbourhood} of a vertex $v \in V$, denoted $N_{\text{out}}^G(v)$, is the set of all vertices reachable from $v$ with an outgoing edge.
More formally, $N_{\text{out}}^G(v) = \{ u \in V \mid (v, u) \in E \}$. The \emph{in-neighbourhood} $N_{\text{in}}^G(v)$ is defined
similarly. When dealing with undirected graphs, the in- and out-neighbourhoods
coincide and are simply referenced as $N^G(v)$.
A graph has \emph{bounded degree} $c$ if $|N_{\text{out}}^G(v)| \leq c$ for all $v \in V$.

We say that a graph $G$ is \emph{$d$-featured} (for some $d \in \mathbb{N}$) if it is
associated with a one-hot embedding $f \colon V \to \{0, 1\}^d$. We then write $G = (V, E, f)$.
A \emph{pointed} graph is a tuple $(G, v)$ where $G = (V, E)$ and $v \in V$. We
also allow pointed graphs to be $d$-featured.

\paragraph{Graph Neural Networks}
The GNNs concerned in this paper are of the aggregate-combine-readout variety.
Formally, an $L$-layer \emph{ACR-GNN} with input dimension $d \in \mathbb{N}$ is a
$(3L{+}1)$-tuple $\gnn = (\{\comb_i\}_{i \leq L}, \{\agg_i\}_{i \leq L}, \{\rea_i\}_{i
\leq L}, \cls)$, such that there exist $d_0, \ldots, d_L \in \mathbb{N}$ with $d = d_0$ and
for all $i \leq L$, $\agg_i$ and $\rea_i$ are functions
$\mathcal{M}(\mathbb{R}^{d_{i - 1}}) \to \mathbb{R}^{{d_{i - 1}}}$, and
$\comb_i$ is a function $\mathbb{R}^{d_{i - 1}} \times \mathbb{R}^{d_{i - 1}} \times
\mathbb{R}^{d_{i - 1}} \to \mathbb{R}^{d_i}$.
We refer to the $\comb_i$ as combination functions, the $\agg_i$ as aggregation
functions, and the $\rea_i$ and readout functions.
The last element $\cls$ is a binary classifier $\mathbb{R}^{d_L} \to \{0, 1\}$.
With an \emph{aggregate-combine GNN} (AC-GNN) we mean an ACR-GNN where all
readout functions are constant.

On a $d$-featured graph $G = (V, E, f)$ an $L$-layer ACR-GNN with input dimension
$d$ computes a series of functions $f^{(i)} \colon V \to \mathbb{R}^{d_i}$ for $i \leq L$, starting with
$f^{(0)} = f$, and then computing 
\begin{align*}
    f^{(i)}(v) := \comb_i \Big( &f^{(i - 1)}(v), \\
     &\agg_i \big( \msl f^{(i - 1)}(u) \mid u \in N_\text{out}^G(v) \msr \big), \\ 
     &\rea_i \big(\msl f^{(i - 1)}(u) \mid u \in V \msr \big) \Big),
\end{align*}
for all $v \in V$ and $i \leq L$.

An $L$-layer ACR-GNN $\gnn$ with input dimension $d$ \emph{accepts} a pointed $d$ featured
graph $(G, v)$ if $\cls(f^{(L)}(v)) = 1$.
To avoid notational clutter we will write $\gnn(G, v) = 1$ when $\gnn$ accepts $(G, v)$, and $\gnn(G, v) = 0$ when it does not.

\paragraph{Simple ACR-GNNs}
Recall that an $L$-layer ACR-GNN $\gnn$ is the tuple
$(\{\comb_i\}_{i \leq L}, \{\agg_i\}_{i \leq L}, \{\rea_i\}_{i \leq L}, \cls)$.
Following \citet{Barcel2020}, we say that $\gnn$ is \textit{simple}
if every $\mathsf{comb}_i$ is a function of the form
\[
    \mathsf{comb}_i(x_1, x_2, x_3) = \mathrm{ReLu}(x_1 A_i + x_2 C_i + x_3 R_i + b_i)
\]
where $A_i, C_i, R_i$ are matrices and $b_i$ is a vector of the appropriate dimensions, 
$\mathrm{ReLu}(x) = \max(0, x)$ is applied dimension-wise; every function
$\mathsf{agg}_i$ and $\mathsf{read}_i$ is sum; and $\mathsf{cls}\colon \mathbb{R} \to
\{0, 1\}$ is a linear threshold classification function.

\paragraph{Graded Modal Logic}
Let us fix a $d \in \mathbb{N}$.
We will make heavy use of $d$-dimensional graded modal logic ($d$-dimensional $\GML$)
as well as $d$-dimensional graded modal logic with graded global modalities ($d$-dimensional $\GML^\exists$).
Formally, $d$-dimensional $\GML$ is the set of all formulas $\varphi$ over the following grammar:
\begin{equation*}
    \varphi ::= \top \mid p_i \mid \neg \varphi \mid \varphi \wedge \varphi \mid \Diamond_{\geq k} \varphi,
\end{equation*}
it being understood that $k \in \mathbb{N}$ and $p_i$ is a proposition letter indexed by integers $i \in [1, d]$.
In $d$-dimensional $\GML^\exists$ we additionally allow $\varphi$ to take the form $\exists_{\geq k} \varphi$.
For readability, we use $\varphi \vee \psi$ in place of $\neg (\neg \varphi \wedge \neg \psi)$.

We evaluate formulas of $d$-dimensional $\GML$ and $\GML^\exists$ over pointed $d$-featured graphs $(G, v)$.
Let $G = (V, E, f)$. We define satisfaction of a formula $\varphi$ at vertex $v$ in $G$,
written $G, v \models \varphi$, inductively as follows:
\begin{align*}
    &G, v \models \top  && \text{always} \\
    &G, v \models p_i && \text{ if } f(v)[i] = 1\\
    &G, v \models \neg \varphi && \text{ if } G, v \not\models \varphi \\
    &G, v \models \varphi_1 \wedge \varphi_2 && \text{ if } G, v \models \varphi_i \text{ for } i \in \{1, 2\}\\
    &G, v \models \Diamond_{\geq k} \varphi &&  \text{ if } |\{ u \in N_{\text{out}}(v) \mid G, u \models \varphi\}| \geq k \\
    &G, v \models \exists_{\geq k} \varphi &&  \text{ if } |\{ u \in V \mid G, u \models \varphi\}| \geq k.
\end{align*}
In the sequel we will suppress reference to $d$-dimensionality if it can be inferred from context.


\paragraph{First-order Logic}

We will also be concerned with vertex properties expressible in \textit{first-order logic} (FO).
Briefly, the language of FO over graphs allows to freely quantify over vertices and describe
desired features, edges, and identities. Of particular interest is the \textit{two-variable fragment (of FO) with counting}
($\Ctwo$). Here, only FO formulas that can be written in two variables
(and counting quantifiers reminiscent of those in $\GML^\exists$) are permitted.
It is known that every $\GML^\exists$ formula is equivalent to a formula in $\Ctwo$.
The converse, however, is not true.
Given a featured pointed graph $(G, v)$ and a formula $\varphi$ with one free variable in any of the above languages,
we continue to write $G, v \models \varphi$ to signal that $\varphi$ is satisfied by $v$ in $G$.
We say that $\varphi$ is of \textit{quantifier rank} $q$ if the maximal nesting depth of quantifiers in $\varphi$
is at most $q$.

In the sequel we will be concerned with the capabilities of formal languages in terms of
classifying featured pointed graphs. Thus, we call FO, $\Ctwo$, and $\GML^\exists$ formulas over
the language of $d$-featured pointed graphs FO, $\Ctwo$, and, respectively, $\GML^\exists$ \textit{classifiers}.
We say that a classifier $\varphi$ of the above sort (or, respectively, an ACR-GNN $\gnn$) \emph{captures} a property $\mathcal{P}$ if for all
featured pointed graphs $(G, v)$, we have $G, v \models \varphi$ (respectively, $\gnn(G, v) = 1$) if and only if $(G, v)$
has the property $\mathcal{P}$. Note that the property $\mathcal{P}$ can itself be defined by a classifier.
\section{ACR-GNNs on Directed Graphs}\label{section_directed_graphs}

We begin by investigating the expressive power of ACR-GNNs on directed graphs.
The main result of this section is the following.
\begin{theorem}\label{theorem_C2_over_directed_inexpress}
    There is an FO vertex property over directed graphs that is not equivalent
    to any $\Ctwo$ formula but is captured by a simple ACR-GNN.
\end{theorem}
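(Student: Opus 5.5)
The property I would use is the one of \citet{Hauke_Wałęga_2026}: a pointed directed graph $(G,v)$ has it iff $E$ is a strict linear order on $V$, that is, irreflexive, antisymmetric, total on distinct vertices and transitive. This is plainly FO-definable. For the $\Ctwo$ half I would reuse their argument (or a Cartesian-product style $\Ctwo$-game construction). It exhibits a linear order and a non-order whose pointed versions agree on every $\Ctwo$ formula of each fixed quantifier rank, since two pebbles cannot witness a violation of transitivity. All the new work is therefore in building a \emph{simple} ACR-GNN, whose only operations are sums and ReLU-linear combinations, that accepts exactly the strict linear orders.

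The approach is to replace the four order axioms by numerical equations that the network can read off. I intend these to be homomorphism counts of small patterns: edges, out-stars and directed 2-paths. Writing $n=|V|$, $d^+(u)$ for the out-degree and $d^-(u)$ for the in-degree, the target characterisation is that $G$ is a strict linear order iff the multiset $\{\!\{ d^+(u) \mid u \in V \}\!\}$ equals $\{0,\dots,n-1\}$ and, for every $u$, $d^+(u)+d^-(u)=n-1$, with no loops or 2-cycles. The justification is the classical fact that a tournament is transitive iff its score sequence is $0,1,\dots,n-1$. The conditions $d^+(u)+d^-(u) = n-1$ for all $u$, together with the absence of loops and 2-cycles, make $G$ a tournament.

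I would first rephrase these conditions so that each is checked at a single vertex. Each local check should produce a quantity that is $0$ on success and a positive integer on failure, so that readouts can later sum the failures. Layer one computes $n$ (sum readout of the constant $1$) and $d^+(v)$ (sum aggregation of $1$). Layer two lets $v$ aggregate quantities from its out-neighbours, such as $\sum_{u \in N_{\text{out}}(v)} d^+(u)$, and read out global sums such as $\sum_u d^+(u)$ and $\sum_u d^+(u)^2$.

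The main obstacle is that simple GNNs are linear up to ReLU, so unbounded products like $d^+(v)^2$ or $n\cdot d^+(v)$ are not directly available. My first attempt would be a ranking trick. In a linear order, $v$'s out-neighbours must have out-degrees exactly $0,\dots,d^+(v)-1$. Instead of forming products, I would compare aggregated sums against thresholds built from counts of vertices with smaller degree. An integer-valued gadget such as $\mathrm{ReLU}(x-y+1)-\mathrm{ReLU}(x-y)$, which is the indicator of $x\ge y$ on integers, can be applied to one vertex's feature minus the readout-broadcast value of another. Failures are then collected by a further sum readout that must equal $0$.

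If the direct product issue cannot be avoided, the fallback is to encode the needed quadratic quantities as homomorphism counts that arise naturally from two rounds of summation, with both sides of each identity produced as sums. The number of 2-walks, $\sum_v \sum_{u\in N_{\text{out}}(v)} d^+(u)$, is one such count. The total number of edges must equal $\binom n2$. For the transitive-tournament condition, I would have the network compare the two global integers $\sum_u \binom{d^+(u)}{2}$ and $\binom n3$ by computing both as counts of out-stars and triples. I expect verifying this step to be the crux of the proof.

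Finally, the classifier thresholds the single accumulated failure count at $0$. Correctness then reduces to checking that the chosen counts, taken together, force irreflexivity, antisymmetry, totality and transitivity.
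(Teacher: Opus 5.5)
You use the same property as the paper and treat the $\Ctwo$ half the same way, by citing the known result (Proposition~\ref{prop: c2 cannot express linear order}). The gap is in the GNN half, exactly at the step you defer.

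Your target characterisation is true. In fact the out-degree and total-degree conditions alone already force a strict linear order, by repeatedly removing the vertex of out-degree $n-1$. The problem is that a simple ACR-GNN cannot check it condition by condition:
\begin{itemize}
\item aggregation runs over out-neighbours only, so $d^-(u)$ is not available at $u$;
\item loops and $2$-cycles are invisible to every ACR-GNN: a directed $3$-cycle and the disjoint union of a loop and a $2$-cycle both consist of three vertices of out-degree $1$, so all embeddings coincide;
\item sum readout hands every vertex the same value $\sum_u g(u)$, so your ranking trick has no way to compare $d^+(v)$ with the individual values $d^+(u)$.
\end{itemize}
So the tournament hypothesis is never established. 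Both the score-sequence fact and the identity ``number of cyclic triangles $=\binom n3-\sum_u\binom{d^+(u)}{2}$'' depend on it. Without it, your fallback test is false. Take $V=\{a,b,c\}$ and $E=\{(a,a),(a,b),(b,a)\}$: then $|E|=3=\binom32$, the out-degrees are $2,1,0$, and $\sum_u\binom{d^+(u)}{2}=1=\binom33$, yet $G$ is not a strict linear order. Computing $\sum_u\binom{d^+(u)}{2}$ would also need $d^+(u)^2$, which you yourself note is unavailable.

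The missing idea is to use the count you mention only in passing: the number of directed $2$-walks,
\[
|\hom(P_2,G)|=\sum_v\sum_{u\in N_{\text{out}}(v)}d^+(u)=\sum_u d^-(u)\,d^+(u).
\]
It needs only two nested sum aggregations and one readout. You then have to prove that $|E|=\binom n2$ together with $|\hom(P_2,G)|=\binom n3$ characterises strict linear orders among \emph{all} directed graphs, not just tournaments (Lemma~\ref{lem:total-order-properties}). Loops and $2$-cycles only add $2$-walks; the example above has $5$ of them instead of $1$.

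This lemma is the real content of the theorem. The paper proves it as follows:
\begin{itemize}
\item sort the out-degrees $a_1\ge\dots\ge a_n$;
\item use the rearrangement inequality, Gale--Ryser and summation by parts to bound $\sum_u d^-(u)d^+(u)$ from below by $\sum_i a_i a'_{n+1-i}$, where $a'$ is the conjugate sequence;
\item show by an exchange argument that, for sequences summing to $\binom n2$, this is at least $\binom n3$, with equality only for $n-1,\dots,0$;
\item conclude that equality throughout forces the degree pairs $(n-1,0),\dots,(0,n-1)$.
\end{itemize}
The targets $\binom n2$ and $\binom n3$ come from iterated readouts of vertex-independent values ($n$, then $n^2$, then $n^3$). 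The final test is the $\mathrm{ReLu}(x-y)+\mathrm{ReLu}(y-x)$ thresholding you already anticipate.
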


To show this, we briefly recall the approach used by \citet{Hauke_Wałęga_2026}
for their result.
They consider the FO property $\varphi(x)$ that is satisfied by $(G, v)$ if and
only if the edge relation of $G$ is a strict linear order. For example, $\varphi$ can be the
conjunction of
\begin{align*}
   & \forall x \ \neg E(x, x)  && \text{(irreflexive)} \\
   & \forall x \forall y \ \big( x = y \lor E(x, y) \lor E(y, x) \big)&& \text{(total)} \\
   & \forall x \forall y \forall z \big( E(x, y) \land E(y, z) \to E(x, z) \big)  && \text{(transitive)}.
\end{align*}

Observe that the subformula for transitivity uses three variables and therefore
does not lie in the two-variable fragment of FO or in $\Ctwo$. Furthermore one
can show the following.

\begin{proposition}[\citet{Hauke_Wałęga_2026}]\label{prop: c2 cannot express linear order}
    There is no $\Ctwo$ formula that is equivalent to $\varphi$.
\end{proposition}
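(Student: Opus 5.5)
The plan is to use an Ehrenfeucht--Fra\"iss\'e style argument: the bijective two-pebble game, equivalently the counting 2-pebble game that characterises $\Ctwo$. It is well known that two pointed graphs $(G,v)$ and $(H,w)$ satisfy the same $\Ctwo$ formulas if and only if they cannot be distinguished by colour refinement (1-WL) applied to the pair, with the points individualised. For each $n$ it therefore suffices to exhibit a graph $G_n$ whose edge relation is a strict linear order and a graph $H_n$ whose edge relation is not, such that $G_n$ and $H_n$ agree on all $\Ctwo$ sentences. Since $\varphi$ does not depend on the chosen point, and every vertex of a linear order is distinguished by its position, it is more convenient to work with sentences: a $\Ctwo$ formula $\psi(x)$ equivalent to $\varphi$ yields the sentence $\exists x\,\psi(x)$, which is equivalent to the sentence ``$E$ is a strict linear order''.

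The key observation is that a finite strict linear order on $n$ elements is rigid, and 1-WL distinguishes all of its vertices, since the out-degrees are $0,1,\dots,n-1$. This rules out any naive approach that fools $\Ctwo$ through degree counts. Instead I would use the fact that $\Ctwo$ sentences have the finite-model-property flavour but cannot express transitivity in the presence of counting. I would therefore compare an \emph{infinite}-style argument with a compactness-free finite one. The concrete plan is to find $H_n$ that is a tournament (irreflexive and total) with the same 1-WL stable colouring statistics as $L_n$, yet containing a directed 3-cycle. Refining from out-degrees, each vertex of $L_n$ receives a distinct colour, so $H_n$ would need a bijection onto $L_n$ preserving the entire out-degree sequence and iterated refinement. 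Any tournament with out-degree sequence $0,\dots,n-1$ is transitive (Landau), so this fails for finite graphs.

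Consequently the argument must exploit that the property ranges over all graphs, possibly including infinite ones if the paper allows them. Alternatively, the right move is quantifier rank: a fixed $\Ctwo$ formula has bounded rank $q$ and bounded counting thresholds $c$. Rank-$q$ counting-bounded colour refinement identifies vertices whose out-degrees exceed $c$ and differ only beyond the threshold. I would therefore take $G$ to be a long linear order and $H$ a tournament obtained by reversing an edge between two ``middle'' vertices whose $(q,c)$-truncated refinement types coincide. I would then show by induction on $q$ that the truncated types of corresponding vertices agree, which yields a winning strategy for Duplicator in the $q$-round bijective 2-pebble game with thresholds $c$.

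The main obstacle is the inductive type computation: showing that reversing a middle edge (or applying a suitable local rotation that keeps the out-degree counts truncated at $c$ unchanged) does not change any rank-$q$ threshold-$c$ types. I would try to choose $n \gg c\cdot q$ and place the modification so that every vertex involved has more than $c$ in- and out-neighbours of each relevant type. The $c$-truncation then makes the modification invisible.
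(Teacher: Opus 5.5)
Your final plan is correct and is essentially the argument the paper relies on. It fixes a rank $q$ and threshold $c$, compares a long linear order with the tournament obtained by flipping one edge between middle vertices, and shows by induction that $c$-truncated counting cannot see the flip. The paper only cites this statement, but its appendix proves the gadgetised analogue, Proposition~\ref{proposition_C2_bisimulation}, in exactly this way: it takes a linear order on $\{-n,\dots,n\}$ with $n = Lc+1$, reverses the edge between $-1$ and $1$, and maintains the invariant that every vertex is equivalent to its copy while all vertices in $[-n+ck,\, n-ck]$ are mutually equivalent at depth $k$.

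When you write it up, state explicitly that the flipped pair must be non-adjacent in the order, since flipping consecutive elements just gives another linear order. Also put the homogeneity of the shrinking middle window into the induction hypothesis; that is exactly what gives the endpoints of the flipped edge more than $c$ neighbours of the relevant type on each side. The earlier detours (agreement on all $\Ctwo$ sentences, infinite models, individualised points) can simply be dropped.
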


\citeauthor{Hauke_Wałęga_2026} continue to show that there is an ACR-GNN that
captures $\varphi$. For this, they characterise strict linear orders in way that
can be checked by an ACR-GNN. This constructed ACR-GNN uses relatively complex
combination, aggregation and readout functions that involve operations such as
exponentiation and which cannot be exactly expressed by FFNNs or by the standard
aggregation functions mean, max and sum.
It thus remained an open question whether ACR-GNNs that use only standard
functions can also capture $\varphi$.

As a first step to answer this question, one can use the characterisation of
linear orders used by \citeauthor{Hauke_Wałęga_2026} to show that only max and
sum as aggregation and readout functions suffice to express $\varphi$. Here, we
go one step further and show that sum suffices as the only aggregation and
readout function. This requires a different characterisation of strict linear
orders, which we introduce next.




A \emph{homomorphism} from a graph $G_1 = (V_1, E_1)$ to a graph $G_2 = (V_2,
E_2)$ is a function $h \colon V_1 \to V_2$ such that $(u, v) \in E_1$ implies
$(h(u), h(v)) \in E_2$.
With $\hom(G_1, G_2)$ we mean the set of all homomorphisms from $G_1$ to $G_2$.
Furthermore, we use $P_2$ to denote the path-shaped graph of length $2$, that is,
$P_2 = (\{1, 2, 3\}, \{ (1, 2), (2, 3)\})$.

\begin{restatable}{lemma}{lemtotalorderproperties}\label{lem:total-order-properties}
    A graph $G = (V, E)$ with $|V| = n$ is a strict linear order if and only if
    $|E| = \binom{n}{2}$ and $|\hom(P_2, G)| = \binom{n}{3}$.
\end{restatable}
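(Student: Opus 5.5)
The plan is to prove the two directions separately, using throughout the identity $|\hom(P_2, G)| = \sum_{v \in V} \mathrm{in}(v)\cdot\mathrm{out}(v)$. Here $\mathrm{in}(v) = |N^G_{\text{in}}(v)|$ and $\mathrm{out}(v) = |N^G_{\text{out}}(v)|$. The identity says that $|\hom(P_2,G)|$ counts directed walks $x \to y \to z$, which need not use distinct vertices.

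For the forward direction, let $E$ be a strict linear order $<$. There is exactly one edge per unordered pair of distinct vertices, so $|E| = \binom{n}{2}$. A walk $x \to y \to z$ satisfies $x<y<z$, so its three vertices are distinct and it is the unique increasing enumeration of its vertex set. This gives a bijection between homomorphisms and $3$-subsets, hence $|\hom(P_2,G)| = \binom{n}{3}$.

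For the converse I will use a triple-averaging argument. Call a walk \emph{proper} if its three vertices are distinct, and let $m'$ be the number of non-loop edges. For a $3$-set $T$, let $e(T)$ be the number of non-loop edges inside $T$ and $p(T)$ the number of proper walks inside $T$. The key local claim is
\[ p(T) \ge e(T) - 2. \]
The reason is that if $e(T) \ge 3$, the edges cannot all go from a set of ``sources'' to a set of ``sinks'' (such a bipartite pattern on three vertices has at most $2$ edges). Adding further edges only creates more walks, which I will check by a small case analysis over $e(T) \in \{3,\dots,6\}$. Each non-loop edge lies in exactly $n-2$ triples. Summing the claim over all triples therefore gives
\[ |\hom(P_2,G)| \ge \sum_T p(T) \ge m'(n-2) - 2\binom{n}{3}. \]
If $G$ has no loops, then $m' = \binom{n}{2}$ and the right-hand side equals $3\binom{n}{3} - 2\binom{n}{3} = \binom{n}{3}$, so equality must hold everywhere. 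Equality has three consequences:
\begin{itemize}
\item There are no non-proper walks. In particular there are no $2$-cycles $a\to b\to a$, so each pair carries at most one edge, and with $\binom{n}{2}$ edges it carries exactly one. This gives totality (and, with no loops, irreflexivity).
\item Every triple attains $p(T) = e(T)-2 = 1$.
\item A tournament on three vertices with exactly one proper $2$-walk is transitive, whereas a cyclic triangle has $3$. Hence every triple is transitive.
\end{itemize}
Transitivity of every triple together with totality, irreflexivity and the absence of $2$-cycles yields transitivity of $E$, so $E$ is a strict linear order.

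The step I expect to be the main obstacle is handling loops. Each loop uses up one unit of the budget $|E| = \binom{n}{2}$, so $\ell$ loops lower the triple bound by $\ell(n-2)$. I must then show that loops generate at least that many extra non-proper walks. A loop at $v$ contributes the walks $v\to v\to v$, $v\to v\to u$ and $u\to v\to v$, i.e.\ $1+\mathrm{in}(v)+\mathrm{out}(v)$ walks up to the double count of $v\to v\to v$. This suffices when $v$ has large degree but not in general. There I would first try a refined accounting that pairs each missing pair-edge with the slack it creates in the local inequality $p(T) \ge e(T)-2$: a triple with fewer edges has more room above the bound. If that fails, I would fall back on a direct minimisation of $\sum_v \mathrm{in}(v)\mathrm{out}(v)$ under the edge-count constraint.
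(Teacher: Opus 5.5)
\textbf{There is a genuine gap: the converse is not proved for graphs with loops.} Your forward direction and your loopless converse are correct. The converse also takes a genuinely different route from the paper. Your local bound $p(T)\ge e(T)-2$ holds for every edge-multiplicity pattern on a triple, including those with $2$-cycles, and double counting then extends the classical count for tournaments. The paper instead argues on the in/out-degree sequences, using Gale--Ryser, the rearrangement inequality and an exchange lemma on integer sequences.

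The loop case cannot be skipped. Graphs in this paper may contain reflexive edges (cf.\ the example $G_3$), so the lemma must exclude graphs with $\binom{n}{2}$ edges some of which are loops. As you note, the per-loop compensation fails at looped vertices of small degree. Your last fallback does not work as stated either. Minimising $\sum_v \mathrm{in}(v)\mathrm{out}(v)$ over pairs of degree sequences with sum $\binom{n}{2}$ gives $0$: put the out-degree and the in-degree on disjoint vertex sets. So one must encode which sequence pairs are realisable by a graph. That is exactly what Gale--Ryser does in the paper, and it treats loops uniformly as diagonal entries of the adjacency matrix.

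\textbf{How to close the gap within your framework.} The missing idea is a global count of empty pairs; triple slack is needed only for strictness.
\begin{itemize}
\item \emph{Notation.} Let $\Lambda$ be the number of loops and $L$ the set of looped vertices. Let $Z$ and $C$ be the numbers of unordered pairs of distinct vertices carrying $0$ and $2$ edges respectively. Let $z_v$ and $t_v$ be the numbers of such pairs containing $v$.
\item \emph{Edge count.} The pairs carry $\binom{n}{2}-\Lambda$ edges, so $Z=C+\Lambda$.
\item \emph{Non-proper walks.} The loop at $v\in L$ lies on exactly $n-z_v+t_v$ walks, namely $v\to v\to v$, $v\to v\to u$ and $u\to v\to v$ with $u\neq v$. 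Each $2$-cycle yields $2$ walks. These are all the non-proper walks.
\item \emph{Exact identity.} With $s(T)=p(T)-e(T)+2\ge 0$, your count becomes
\[
|\hom(P_2,G)|-\binom{n}{3}=\sum_T s(T)+2C+\sum_{v\in L}(2+t_v-z_v).
\]
\item \emph{Inequality.} Each empty pair is counted at most twice in $\sum_{v\in L}z_v$. Hence $\sum_{v\in L}z_v\le 2Z=2C+2\Lambda$, and the right-hand side is at least $\sum_T s(T)+\sum_{v\in L}t_v\ge 0$. The deficit caused by loops is thus paid by the $2$-cycles that the edge count forces.
\item \emph{Strictness when $\Lambda\ge 1$.} Suppose equality held. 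An empty pair exists since $Z\ge\Lambda\ge 1$, and every empty pair $\{u,w\}$ would satisfy $u,w\in L$ and $t_u=t_w=0$. A triple $\{u,w,x\}$ containing a second empty pair would then have pattern $(0,0,k)$ with $k\le 1$, hence slack $2-k\ge 1$. So every $x\notin\{u,w\}$ is adjacent to both $u$ and $w$. Thus each looped vertex lies in at most one empty pair, which gives $2Z\le\Lambda$ and contradicts $Z\ge\Lambda\ge 1$.
\end{itemize}
Hence $\Lambda=0$, and your loopless argument finishes the proof.
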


It is relatively easy to verify that strict linear orders satisfy the two
properties. The other direction is more difficult to show, and, was as far as we
are aware previously only known for the case where $G$ is a
tournament~\cite{hararyTheoryRobinTournaments1966}.
We provide a detailed proof in the appendix.

The intuition behind this characterisation is that every deviation from a strict
linear order while maintaining $|E| = \binom{n}{2}$, increases the number of
homomorphic matches of $P_2$. As an example, consider the following three graphs $G_1, G_2,
G_3$ with three vertices each.
    \begin{center}
        \begin{tabular}{ccc}
        \begin{tikzpicture}[
            baseline = (1),
            node/.style={circle, draw, minimum size=8mm},
            edge/.style={->, thick, >=stealth},
        ]
            \node[node] (1) at (0, 0) {};
            \node[node] (2) at (1.25, -0.75) {};
            \node[node] (3) at (0, -1.5) {};
            \draw[edge] (1) to (2);
            \draw[edge] (1) to (3);
            \draw[edge] (2) to (3);
        \end{tikzpicture} \quad &  \quad
        \begin{tikzpicture}[
            baseline = (1),
            node/.style={circle, draw, minimum size=8mm},
            edge/.style={->, thick, >=stealth},
        ]
            \node[node] (1) at (0, 0) {};
            \node[node] (2) at (1.25, -0.75) {};
            \node[node] (3) at (0, -1.5) {};
            \draw[edge] (1) to (2);
            \draw[edge] (3) to (1);
            \draw[edge] (2) to (3);
        \end{tikzpicture} \quad &  \quad
        \begin{tikzpicture}[
            baseline = (1),
            node/.style={circle, draw, minimum size=8mm},
            edge/.style={->, thick, >=stealth},
        ]
            \node[node] (1) at (0, 0) {};
            \node[node] (2) at (1.25, -0.75) {};
            \node[node] (3) at (0, -1.5) {};
            \draw[edge, bend left] (1) to (2);
            \draw[edge, bend left] (2) to (1);
            \draw[edge, loop above] (3) to (3);
        \end{tikzpicture}
    \end{tabular}
    \end{center}
The graph $G_1$ is a strict total order and has the minimal number $\binom{3}{3}
= 1$ of homomorphic matches of $P_2$. Inverting an edge, as in $G_2$, increases
the number of matches to $3$. The same happens when reflexive loops or cycles of
length $2$ are introduced: $G_3$ has $3$ matches of $P_2$.


It is interesting to observe that Lemma~\ref{lem:total-order-properties}
characterises linear orders using \emph{homomorphism counts}. There exists an
interesting range of results on what properties can be expressed using
homomorphism counts, although these mostly involve an infinite number of patterns
(e.g.~\citet{dvorakRecognizingGraphsNumbers2010}).
Most relevant is perhaps Theorem 4.4 of~\citet{chenAlgorithmsBasedFinitely2025},
which implies that
strict linear orders can be characterized using a finite number of homomorphisms
counts. However, by itself, it is not clear whether this characterization
involves patterns that can be counted by ACR-GNNs, and therefore cannot directly
replace Lemma~\ref{lem:total-order-properties} for our purposes.

Next, we show that the characterisation of strict linear orders in
Lemma~\ref{lem:total-order-properties} allows simple ACR-GNNs to capture
$\varphi$.


\begin{proposition}\label{lemma_simple_gnn_directed_orders}
    There is a simple 6-layer ACR-GNN that accepts $(G, v)$ if and only if
    $G$ is a strict linear order.
\end{proposition}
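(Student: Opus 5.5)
The plan is to use Lemma~\ref{lem:total-order-properties}. A simple ACR-GNN will compute, at every vertex, the quantities $n = |V|$, $|E|$ and $|\hom(P_2, G)|$. It will then test the two polynomial identities $2|E| = n^2 - n$ and $6|\hom(P_2,G)| = n^3 - 3n^2 + 2n$ using only linear maps, ReLU and a threshold classifier.

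The main obstacle is that simple combination functions are affine maps followed by ReLU, so they cannot multiply. Polynomials in $n$ and the product $\text{outdeg}\cdot\text{outdeg}$ hidden in $|\hom(P_2,G)|$ must therefore be produced by sum aggregation and sum readout. All intermediate values will be nonnegative integers, so $\mathrm{ReLu}$ acts as the identity on them and quantities can be carried forward unchanged.

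\textbf{Step 1 (counting).}
\begin{itemize}
\item Since the input features are one-hot, the sum of their coordinates is the constant $1$, which a linear map can write into a fresh coordinate.
\item Layer 1 aggregates this constant over $N_{\text{out}}(v)$ to obtain $\text{outdeg}(v)$. It reads it out over $V$ to obtain $n$, and keeps the constant $1$.
\item Layer 2 aggregates $\text{outdeg}$ over out-neighbours to obtain $s(v) = \sum_{u \in N_{\text{out}}(v)} \text{outdeg}(u)$, which is exactly the number of homomorphisms of $P_2$ mapping vertex $1$ to $v$. It reads out $\text{outdeg}$ to get $|E|$ and reads out $n$ to get $n \cdot n = n^2$. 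It carries $n$ forward.
\item Layer 3 reads out $s$ to obtain $|\hom(P_2,G)|$ and reads out $n^2$ to obtain $n^3$. It carries $n$, $n^2$ and $|E|$ forward.
\end{itemize}
At this point every vertex holds $n, n^2, n^3, |E|, |\hom(P_2,G)|$, all computed with sum aggregation and readout only.

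\textbf{Step 2 (equality test).} Layer 4 forms the affine combinations
\[
a = 2|E| - n^2 + n, \qquad b = 6|\hom(P_2,G)| - n^3 + 3n^2 - 2n,
\]
and outputs the four coordinates $\mathrm{ReLu}(a), \mathrm{ReLu}(-a), \mathrm{ReLu}(b), \mathrm{ReLu}(-b)$. Their sum is $|a| + |b|$, which is $0$ iff both identities hold, and is at least $1$ otherwise because everything is an integer. The linear threshold classifier therefore accepts iff $-(|a| + |b|) \geq -\tfrac12$.

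By Lemma~\ref{lem:total-order-properties}, the network accepts $(G,v)$ iff $G$ is a strict linear order, and it accepts every vertex of such a graph. To match the statement's six layers exactly, insert two identity layers (all values are nonnegative, so ReLU preserves them), or spread the readouts over more layers.
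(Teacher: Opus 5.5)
Your proposal is correct and is essentially the paper's proof. Both arguments rest on Lemma~\ref{lem:total-order-properties}, obtain $|E|$, $|\hom(P_2,G)|$ and the needed powers of $n$ by iterated sum aggregation and readout, and test the two equalities with the pair $\mathrm{ReLu}(x), \mathrm{ReLu}(-x)$ followed by a threshold; checking the scaled identities $2|E| = n^2-n$ and $6|\hom(P_2,G)| = n^3-3n^2+2n$, and packing the counts into parallel coordinates, are only cosmetic differences.

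Two small adjustments are worth making. The paper takes $G$ to be $0$-featured and gets the constant $1$ from the bias, which is more robust than summing one-hot coordinates (that fails whenever a feature vector can be zero, as for the $\iota_v$ in the gadgetisation). It also uses its sixth layer to sum the four $\mathrm{ReLu}$ outputs so that $\cls$ acts on a single real, as the definition of a simple ACR-GNN requires. Each adjustment costs one layer, which brings your construction to exactly six.
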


\begin{proof}
    The desired simple ACR-GNN verifies that the two properties from
    Lemma~\ref{lem:total-order-properties} hold in $G$. 
    Writing $n = |V|$ and taking $G$ to be $0$-featured,
    we first show that the
    required individual values can be computed by simple $4$-layer ACR-GNNs $\gnn_1, \gnn_2, \gnn_3, \gnn_4$.
    Specifically, we want $\gnn_i$ to compute the 1-dimensional feature embedding $f^{(4)}_i \colon V \to \mathbb{R}$ such that, for all $v \in V$:
    \begin{enumerate}
        \item $f^{(4)}_1(v) = |E|$,
        \item $f^{(4)}_2(v) = \binom{n}{2}$,
        \item $f^{(4)}_3(v) = |\hom(P_2, G)|$,
        \item $f^{(4)}_4(v) = \binom{n}{3}$.
    \end{enumerate}

    One can then compose the simple ACR-GNNs $\gnn_1$, $\gnn_2$, $\gnn_3$, and $\gnn_4$ into a single
    4-layer simple ACR-GNN that performs the same computations in separate dimensions.
    Two additional layers then perform the required equality checks using the
    combination function. More precisely, the following
    embedding vector $\bar x$ is computed at layer 5 for all $v \in V$:
    \begin{align*}
        \mathrm{ReLu}\big((|E|& - \textstyle\binom{n}{2}, \textstyle\binom{n}{2} - |E|,
        \\&|\hom(P_2, G)| - \textstyle\binom{n}{3}, \textstyle\binom{n}{3} - |\hom(P_2, G)|)\big),
    \end{align*}
    which can be expressed in terms of $f^{(4)}_1, \dots, f^{(4)}_4$.
    Clearly, $x[1] + x[2] = 0$ when $|E| = \binom{n}{2}$ and $x[3] + x[4] = 0$ when $|\hom(P_2, G)| = \binom{n}{3}$.
    Since $\mathrm{ReLu}$ guarantees that each $x[i]$ is non-negative, we have
    that by Lemma~\ref{lem:total-order-properties}, $x[1] + x[2] + x[3] + x[4] \leq 0$ if and only if the graph is a strict linear
    order.
    Thus, by computing the above sum using the combination function of layer 6
    we can safely classify the graph as a linear order if
    the final embedding is non-positive.

    Finally, we build the advertised simple ACR-GNNs $\gnn_1, \gnn_2, \gnn_3, \gnn_4$: 
    
    The simple ACR-GNN $\gnn_1$ first computes $f^{(1)}_1(v) = 1$ for all $v \in V$
    by using the bias vector of the combination function and ignoring
    aggregation and readout.
    The second layer computes $f^{(2)}_1(v) = \sum_{u \in N^G_{\mathrm{out}}(v)} f^{(1)}_1(u)$
    which is $|N^G_{\mathrm{out}}(v)|$, using the aggregation function.
    The third layer uses readout (without aggregation) to compute
    $f^{(3)}_1(v) = \sum_{u \in V} f^{(2)}_1(u)$ which is $|E|$.
    The last layer is redundant, $f^{(4)}_1(v) = f^{(3)}_1(v)$.

    The simple ACR-GNN $\gnn_2$ does the following for all $v \in V$.
    Again let $f_2^{(1)}(v) = 1$  by using the bias vector.
    The second layer uses readout to compute $f_2^{(2)}(v) = \sum_{u \in V} f_2^{(1)}(v)$ which is $n$.
    The third layer then uses readout to compute
    $f^{(3)}_2(v) = \frac{-1}{2} f^{(2)}_2(v) + \frac{1}{2} \sum_{u \in V} f^{(2)}_2(u)$
    which is $\frac{-1}{2} n +  \frac{1}{2} n^2 = \frac{n^2 - n}{2} = \frac{n (n - 1)}{2} = \binom{n}{2}$.
    Lastly, $f^{(4)}_2(v) = f^{(3)}_2(v)$.

    For each $v \in V$ the simple ACR-GNN $\gnn_3$ computes the following.
    First, again, $f^{(1)}_3(v) = 1$.
    The second layer uses aggregation to compute
    $f^{(2)}_3(v) = \sum_{u \in N^G_{\mathrm{out}}(v)} f^{(1)}_3(u)$ which is $|N^G_{\mathrm{out}}(v)|$.
    The third layer uses aggregation to compute
    $f^{(3)}_3(v) = \sum_{u \in N^G_{\mathrm{out}}(v)} f^{(2)}_3(v)$
    which is $\sum_{u \in N^G_{\mathrm{out}}(v)} |N^G_{\mathrm{out}}(u)|$ and equals
    the number of length 2 paths originating from $v$.
    The fourth layer uses readout to compute
    $f^{(4)}_3(v) = \sum_{u \in V} f^{(3)}_3(u)$ which equals $|\hom(P_2, G)|$.

    For each $v \in V$ the simple ACR-GNN $\gnn_4$ computes the following. Again,
    $f^{(1)}_4(v) = 1$. Via readout,
    $f^{(2)}_4(v) = n$.
    Via readout and the combination function, the following pair is computed: $f^{(3)}_4 = (\sum_{u \in V} f^{(2)}_4(u), f^{(2)}_4(v))$
    which equals $(n^2, n)$.
    Then, using readout once more,
    $f^{(4)}_4(v) = \frac{1}{6} \sum_{u \in V} f^{(3)}_4(u)[1] + \frac{-3}{6} f^{(3)}_4(v)[1] + \frac{2}{6} f^{(3)}_4(v)[2]$.
    This is then $ \frac{n^3 - 3n^2 + 2n}{6} = \frac{n (n - 1) (n - 2)}{6} = \binom{n}{3}$, as required. \qedhere
\end{proof}

Proposition~\ref{prop: c2 cannot express linear order} and
Proposition~\ref{lemma_simple_gnn_directed_orders} together now imply
Theorem~\ref{theorem_C2_over_directed_inexpress}. 

\section{ACR-GNNs on Undirected Graphs}
\label{sec:undirected}

In the same style as~\citet{Hauke_Wałęga_2026} we now transfer
Theorem~\ref{theorem_C2_over_directed_inexpress} to undirected graphs.
More precisely, we obtain the following result.
\begin{theorem}\label{theorem_C2_over_undirected_inexpress}
    There is an FO vertex property over undirected graphs that
    is not equivalent to any $\Ctwo$ formula, but is captured by a simple ACR-GNN.
\end{theorem}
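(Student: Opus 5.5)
We reduce to Theorem~\ref{theorem_C2_over_directed_inexpress} via a gadget encoding of directed graphs as featured undirected graphs. We use $d=2$ features, with $p_1$ marking \emph{original} vertices and $p_2$ marking \emph{gadget} vertices. A directed graph $G=(V,E)$ is encoded as an undirected graph $\tG$ as follows. Each $v\in V$ becomes an original vertex. Each edge $(u,w)\in E$ with $u\neq w$ is replaced by a path $u - a_{uw} - b_{uw} - w$ through two fresh gadget vertices. Each loop $(u,u)$ is encoded by a path $u-a_{uu}-b_{uu}$ with $b_{uu}$ adjacent back to $u$. The direction of an edge is recorded by which gadget vertex touches the tail: an original vertex $u$ sees $a$-type gadget neighbours for its outgoing edges and $b$-type ones for its incoming edges. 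The two types can be told apart locally, for instance by attaching an extra pendant gadget vertex to every $a$-vertex, or by using a third feature. We fix the precise gadget so that it is FO-definable, and so that the property ``$\tG$ is a well-formed encoding of some directed $G$ and $G$ is a strict linear order'' is an FO vertex property $\psi(x)$.

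The first step is non-expressibility in $\Ctwo$. Suppose $\psi$ were equivalent to a $\Ctwo$ formula $\chi$. We translate $\chi$ back into a $\Ctwo$ formula over directed graphs by the usual interpretation argument. Quantifiers are relativised to $p_1$-vertices, and the atom $E(x,y)$ is replaced by ``there is an $a$-neighbour of $x$ whose $b$-neighbour is adjacent to $y$''. The crux is to express this with only two variables by reusing variables, which needs care: both gadget vertices are unique and determined, so with alternating reuse $x\to z\to x$ it can be written as $\exists y(\text{gadget}_a(y)\wedge E(x,y)\wedge \exists x(\ldots))$. However, the last hop back to the original target needs the target variable while both are occupied. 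This is the main obstacle. Rather than translating formulas, I would transfer the \emph{proof} of Proposition~\ref{prop: c2 cannot express linear order}. That proof presumably exhibits, for each quantifier rank, a linear order $G$ and a non-linear-order $G'$ that are $\Ctwo$-equivalent (e.g.\ via bijective 2-pebble games). I would show that the duplicator strategy lifts to $\tG,\tG'$: pebbles on gadget vertices are answered by the gadget of the corresponding edge (or non-edge) response in the original game, using that the encoding is local. This is precisely what \citet{Hauke_Wałęga_2026} do, and I would pick my gadget so that their lifting argument still applies.

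The second step is to build a simple ACR-GNN for $\psi$, extending the proof of Proposition~\ref{lemma_simple_gnn_directed_orders}. We count $n$ as the readout of $p_1$. We count $|E|$ as the readout of the number of $a$-type gadget vertices, or as the number of gadget vertices divided by two. We count $|\hom(P_2,G)|$ by summing over original vertices the number of directed 2-paths starting there. Each directed hop becomes three undirected aggregation steps, masked by features via ReLU so that messages only flow original $\to a \to b \to$ original. This works because $x\cdot[p]$ with $p\in\{0,1\}$ and $x\ge 0$ bounded can be realised as $\mathrm{ReLu}(x - M(1-p))$ only if $x\le M$, and $x$ is unbounded. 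So instead of gating values I will multiply counts by gating the initial constants: start with the indicator $1$ on the correct vertex type and aggregate along sums, subtracting unwanted contributions computed in parallel dimensions (e.g.\ counts of original versus gadget neighbours are separately available, since features are summed). Well-formedness is then checked by similar counting equalities: every gadget vertex has the correct degree and neighbour types, verified as ``$\sum_v(\text{deviation}(v))=0$'' with nonnegative deviations obtained via pairs of ReLUs, and combined with the two equalities of Lemma~\ref{lem:total-order-properties} by a final threshold exactly as before.
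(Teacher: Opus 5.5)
\textbf{Gap in the GNN half: parallel gadgets.} Your choice of gadget breaks the GNN half of the argument. Because you subdivide each directed edge $(u,w)$ into a path $u, a_{uw}, b_{uw}, w$, nothing local prevents two \emph{parallel} gadget paths between the same ordered pair $u,w$. Every gadget vertex still has the right degree and neighbour types, so all your ``$\sum_v \text{deviation}(v)=0$'' well-formedness checks pass. Your sum-based counts then compute $|E|$ and $|\hom(P_2,G)|$ of the underlying directed \emph{multigraph}. Lemma~\ref{lem:total-order-properties} is false for multigraphs; its proof relies on Gale--Ryser for $0/1$ matrices.

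Concretely, take four original vertices and the multigraph $1\to 2$, $1\to 3$ (twice), $2\to 4$ (twice), $3\to 4$. Its out-degrees are $3,2,1,0$ and its in-degrees are $0,1,2,3$. So it has $6=\binom{4}{2}$ gadget pairs and $\sum_v \mathrm{out}(v)\,\mathrm{in}(v)=4=\binom{4}{3}$ directed $2$-walks. Your network therefore accepts its encoding, although this is not the encoding of any strict linear order and $\psi$ is false on it. It has exactly the degree sequence of the $4$-element linear order, so no degree statistic can separate the two. To fix this you would have to detect pairs of parallel gadget paths, i.e.\ $6$-cycles through $u, a, b, w, b', a'$. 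That is not a local condition, and it is unclear how a simple ACR-GNN could do it uniformly in $n$. This is exactly the problem raised in the Remark following Definition~\ref{def:gadget} about the earlier gadgetisation you propose to imitate.

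\textbf{How the paper avoids it.} The paper splits vertices instead of subdividing edges. Each $v$ becomes $s_v, \iota_v, t_v$, and a directed edge $(v,u)$ is the single undirected edge $s_v t_u$. Since undirected graphs are simple, multi-edges cannot arise. Being a gadgetisation is then $\GML^\exists$-definable (each $s$- or $t$-vertex has exactly one $\iota$-neighbour, each $\iota$-vertex exactly one $s$- and one $t$-neighbour, plus edge-type constraints). The quantities $|E|$ and $|\hom(P_2,G)|$ transfer directly.

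\textbf{The $\Ctwo$ half is also unproved.} The paper's choice makes that half clean: every gadget vertex stands for a single original vertex. The paper gives an explicit $L$-turn, $2$-pebble, $c$-counting back-and-forth between two graphs. One is the gadgetised linear order on $2n+1$ points with $n=Lc+1$. The other is the same graph with $(s_{-1},t_1)$ replaced by $(s_1,t_{-1})$. In your encoding a pebble on $a_{uw}$ carries an ordered pair of original vertices, so lifting a directed-graph duplicator strategy is not automatic. Deferring to what a proof ``presumably'' does, for a different gadget, leaves this half unproved too.
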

Complementing Theorem~\ref{theorem_C2_over_undirected_inexpress}, it is known
that every $\Ctwo$ formula is captured by a simple ACR-GNN over undirected
featured graphs~\cite{Barcel2020}.

To show Theorem~\ref{theorem_C2_over_undirected_inexpress}, we encode directed
graphs as undirected graphs. For this, we use the following construction.
\begin{definition}\label{def:gadget}
    The \emph{gadgetisation} of a directed graph $G = (V, E)$
    is a 2-featured undirected graph $\tG = (\tV, \tE, \tf)$
    such that $\tV = \tV_{s} \cup \tV_{t} \cup \tV_{id}$, where
    \begin{itemize}
        \item $\tV_{s} = \{ s_v \mid v \in V \}$,
        \item $\tV_{t} = \{ t_v \mid v \in V \}$, and
        \item $\tV_{id} = \{ \iota_v \mid v \in V \}$
    \end{itemize}
    are pairwise disjoint sets, $\tE = \tE_{st} \cup \tE_{id}$, where
    \begin{itemize}
        \item $\tE_{st} = \{ (s_v, t_u) \mid (v, u) \in E  \}$,
        \item $\tE_{id} = \{ (s_v, \iota_v), (\iota_v, t_v) \mid v \in V \}$; and
    \end{itemize} 
    $\tf(s_v) = (1, 0)$, $\tf(t_v) = (0, 1)$, $\tf(\iota_v) = (0, 0)$ for $v \in V$.
\end{definition}
    
\begin{figure}
    \centering
    \hspace*{5mm} \begin{tikzpicture}[
    v_node/.style={circle, draw, minimum size=8mm},
    iota_node/.style={circle, draw, fill=green!20, minimum size=8mm},
    t_node/.style={circle, draw, fill=orange!20, minimum size=8mm},
    edge/.style={->, thick, >=stealth}
]
    \node[v_node] (v1) at (0,2.5) {$1$};
    \node[v_node] (v2) at (0,1.25) {$2$};
    \node[v_node] (v3) at (0,0) {$3$};

    \draw[edge] (v1) -- (v2);
    \draw[edge] (v2) -- (v3);
    \draw[edge] (v1) to[bend left=45] (v3);
\end{tikzpicture} \hspace*{7mm}
    \begin{tikzpicture}[
    s_node/.style={circle, draw, fill=blue!20, minimum size=8mm},
    iota_node/.style={circle, draw, fill=green!20, minimum size=8mm},
    t_node/.style={circle, draw, fill=orange!20, minimum size=8mm},
    edge/.style={-, thick, >=stealth}
]
    \foreach \i [evaluate=\i as \y using {2-\i*1.25}] in {1,2,3} {
        \node[s_node] (s\i) at (0,\y) {$s_{\i}$};
        \node[iota_node] (iota\i) at (1.5,\y) {$\iota_{\i}$};
        \node[t_node] (t\i) at (3,\y) {$t_{\i}$};
    }
    \draw[edge] (s1) -- (iota1);
    \draw[edge] (iota1) -- (t1);

    \draw[edge] (s2) -- (iota2);
    \draw[edge] (iota2) -- (t2);

    \draw[edge] (s3) -- (iota3);
    \draw[edge] (iota3) -- (t3);

    \draw[edge] (s1) -- (t2);

    \draw[edge] (s2) -- (t3);

    \node (A) at (3.7,1) {};
    \draw[edge] (s1) to[bend left=45] (A) to[bend left=51] (t3);
\end{tikzpicture}
    \caption{ A strict linear order $G$ (left) over vertices $\{1, 2, 3\}$ and its gadgetisation $\tG$ (right).
Vertex colourings indicate the features attributed by $\tf$.}
\label{fig: gadget example}
\end{figure}
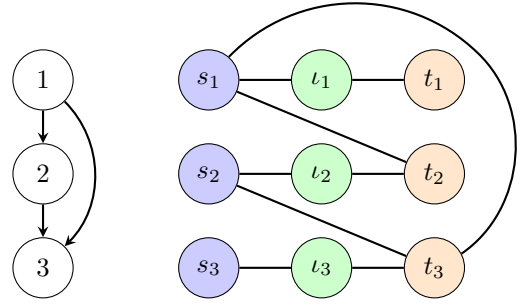

Intuitively, the gadgetisation $\tG$ of a directed graph $G = (V, E)$,
is, when restricted to $\tV_{s} \cup \tV_{t}$, a bipartite graph
in which each vertex $v \in V$ is split into `source' $s_v$ and `sink' $t_v$ components.
We think of edges in $\tE$ involving $t_v$ as incoming, and edges having $s_v$ as outgoing from $v$.
The vertices $\tV_{id}$ help identifying which pairs of $\tV_{s} \times \tV_{t}$
correspond to a single vertex in $V$. The feature embedding $\tf$ keeps track
of 
of which vertex is in which of the sets $\tV_{s}, \tV_{t}, \tV_{id}$.
See the example in Figure~\ref{fig: gadget example}.

\begin{remark}
    The gadgetisation used in \cite[Definition~9]{Hauke_Wałęga_2026} does
    not work with our characterisation of strict linear orders
    in Lemma~\ref{lem:total-order-properties}.
    Using the gadgetisation of the paper mentioned would allow
    multiple (gadgetised) edges starting at some vertex $v$ and ending at some vertex $u$.
    This would then tamper with homomorphism counts.
\end{remark}

The vertex property that we use to show
Theorem~\ref{theorem_C2_over_undirected_inexpress} is then being the
gadgetisation of a strict linear order. By adapting the FO formula $\varphi$ and
the ACR-GNN constructed in the proof of Proposition~\ref{lemma_simple_gnn_directed_orders}
from directed graphs to gadgetisations of directed graphs, we show the following.

\begin{restatable}{proposition}{propgadgetisationsexpressible}\label{prop_gadgetisations_expressible}
    There is an FO classifier and a simple ACR-GNN that capture the
    gadgetisations of strict linear orders.
\end{restatable}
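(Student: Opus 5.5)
The plan has two independent halves: an FO formula that pins down gadgetisations of strict linear orders up to isomorphism, and a simple ACR-GNN built like the one in Proposition~\ref{lemma_simple_gnn_directed_orders}. Throughout, write $\sigma(x) = p_1(x)$, $\tau(x) = p_2(x)$ and $\iota(x) = \neg p_1(x) \wedge \neg p_2(x)$ for the three vertex types.

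\emph{FO classifier.} The formula is a conjunction of structural axioms and a relativised version of $\varphi$. The structural axioms say:
\begin{itemize}
\item there are no loops, and no vertex satisfies both $p_1$ and $p_2$;
\item every $\iota$-vertex has exactly one $\sigma$-neighbour, exactly one $\tau$-neighbour and no other neighbours;
\item every $\sigma$-vertex and every $\tau$-vertex has exactly one $\iota$-neighbour;
\item there are no $\sigma$–$\sigma$ and no $\tau$–$\tau$ edges.
\end{itemize}
These hold in every gadgetisation. Conversely, if they hold, the map $v \mapsto (s_v,\iota_v,t_v)$ is recovered from the $\iota$-vertices. The graph is then isomorphic to the gadgetisation of the directed graph on the $\iota$-vertices with edge relation
\[ D(x,y) := \exists s\,\exists t\,\big(E(x,s) \wedge \sigma(s) \wedge E(s,t) \wedge \tau(t) \wedge E(t,y)\big). \]
The remaining conjunct is $\varphi$ relativised to $\iota$-vertices, with $E$ replaced by $D$. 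I will check that loops $s_v$–$t_v$ and multiple edges cannot arise, so that $D$ faithfully encodes a directed graph with possible reflexive edges. This part is routine.

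\emph{Simple ACR-GNN.} Following the proof of Proposition~\ref{lemma_simple_gnn_directed_orders}, every condition is turned into a non-negative ``error'' value at each vertex via $\mathrm{ReLu}$. These errors are summed by readout, and the network accepts iff the total error is $0$.

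For the structural axioms, one aggregation layer gives each vertex $x$ the counts $N_1(x), N_2(x), N_0(x)$ of its $p_1$-, $p_2$- and $\iota$-neighbours. Since features are $0/1$ inputs to a linear map, the axioms become linear inequalities whose violations can be detected by a ReLU term, without multiplicative masking. For example:
\begin{itemize}
\item $\mathrm{ReLu}(p_1+p_2-1)$ rules out vertices with both features;
\item $\mathrm{ReLu}(N_0-1)$ is a valid unmasked test at every vertex, because in a gadgetisation $N_0 \le 1$ everywhere;
\item $\mathrm{ReLu}(p_1 - N_0)$ forces $\sigma$-vertices to have an $\iota$-neighbour;
\item $\mathrm{ReLu}(N_1 + p_1 - 1)$ forbids $\sigma$–$\sigma$ edges at $\sigma$-vertices, and similar terms handle the other types.
\end{itemize}
Where an axiom cannot be phrased locally like this, I would use readout to compare global counts, for instance the numbers of vertices of each type, which must all equal $n$.

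Once the structure is certified, the relevant quantities of the encoded directed graph are:
\begin{itemize}
\item $n$, obtained as the readout of $1-p_1-p_2$;
\item $|E|$, obtained as the readout of $N_2$ restricted to $\sigma$-vertices; this equals the readout of $N_2$ minus the $n$ contributions of the $\iota$-vertices, which have $N_2 = 1$;
\item $|\hom(P_2,G)| = \sum_u \mathrm{indeg}(u)\,\mathrm{outdeg}(u)$.
\end{itemize}
For the last quantity, the value $\mathrm{outdeg}(u) = N_2(s_u)$ is pushed from $s_u$ through $\iota_u$ to $t_u$ by two aggregation layers. Each $s_w$ then aggregates this value from its $\tau$-neighbours $t_u$, summing over edges $w \to u$, and a final readout yields $\sum_{(w,u) \in E} \mathrm{outdeg}(u)$.

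\emph{Main obstacle.} The hard part is that simple GNNs cannot mask a value by vertex type. Aggregation sums over all neighbours, so spurious contributions leak in: $\iota$-vertices, the $\iota$-neighbour of each $t_u$, and so on. My approach is to carry out each of these aggregations at every vertex, and to compute the spurious terms separately as sums that are also available linearly. Given the certified structure, they are quantities like $n$, $|E|$ and $\sum_u \mathrm{outdeg}(u)$, and the combination function subtracts them.

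Concretely, I would tabulate for each vertex type what every aggregation layer produces, and verify that each leaked term equals a linear combination of quantities already computed. If some leakage cannot be cancelled this way, my fallback is to carry separate feature dimensions initialised from $p_1$, $p_2$ and $1-p_1-p_2$. Aggregating these type-specific channels gives type-resolved counts without any product.

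Finally, the checks $|E| = \binom{n}{2}$ and $|\hom(P_2,G)| = \binom{n}{3}$ are done exactly as in Proposition~\ref{lemma_simple_gnn_directed_orders}, and Lemma~\ref{lem:total-order-properties} gives correctness.
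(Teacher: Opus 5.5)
Your FO classifier is fine and is essentially the paper's. Your structural axioms match its $\psi_1$--$\psi_4$, and relativising $\varphi$ through $D$ is the same as its $\varphi_1$--$\varphi_4$ written via $\varepsilon(s,\iota,t)$. One small point: an edge $s_v$--$t_v$ is legitimate, since it encodes $(v,v)$; it is excluded by irreflexivity, not by the structural axioms.

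The gap is in the GNN, at exactly the step you flag, and the proposed cancellation does not work. Take the most favourable version of your pipeline. First restrict the out-degree to $\sigma$-vertices with $\mathrm{ReLu}(N_2-1+p_1)$, which is legitimate because the discarded values are bounded. Then aggregate twice. The channel is now $\mathrm{outdeg}(u)$ at $t_u$, as desired. But it is $\sum_{(v,u)\in E}\mathrm{outdeg}(v)$ at $\iota_u$, and $\mathrm{outdeg}(u)+\sum_{(u,v)\in E}\sum_{(w,v)\in E}\mathrm{outdeg}(w)$ at $s_u$. After the third aggregation every vertex holds a value; for example, the junk at $s_u$ is collected by $\iota_u$ and by all $\mathrm{outdeg}(u)$ many $\tau$-neighbours of $s_u$. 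The readout is therefore
\[
|\hom(P_2,G)| + 2|E| + 3\sum_u \mathrm{outdeg}(u)^2 + \sum_{(w,u)\in E}\mathrm{outdeg}(w)\,\mathrm{indeg}(u) + \sum_u\Big(\sum_{(w,u)\in E}\mathrm{outdeg}(w)\Big)^2 .
\]
For the three-element order, the last term alone is $13$, against $|\hom(P_2,G)|=1$. This term is not one of the quantities you anticipate ($n$, $|E|$, $\sum_u\mathrm{outdeg}(u)$). It is itself a type-restricted walk count in $\tG$: it counts walks whose vertex types are $\tau,\sigma,\tau,\sigma,\tau$. Computing it separately therefore meets the same obstacle, so the cancellation is circular. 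The term would vanish if the final readout ranged only over $\sigma$-vertices; the sum there is $|\hom(P_2,G)|+\sum_u\mathrm{outdeg}(u)^2$. But restricting the readout means multiplying an unbounded value by a $0/1$ type indicator. No ReLU combination does this exactly, because a piecewise-linear map cannot compute $x\cdot b$ for unbounded $x$.

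Your fallback does not repair this. Channels initialised from $p_1$, $p_2$, $1-p_1-p_2$ give type-resolved \emph{counts}. However, a $\sigma$-supported channel becomes supported on $\iota$- and $\tau$-vertices after one aggregation, and restricting it again is the same unbounded product. ReLU-based restriction works only where the discarded values are bounded. The transfer $s_u\to\iota_u\to t_u$ is exactly where this fails: the $\tau$-vertices receive the unbounded $\sum_{(w,u)\in E}\mathrm{outdeg}(w)$ in the same round in which $\iota_u$ receives $\mathrm{outdeg}(u)$. You need a genuinely new ingredient here, either a way to enforce the vertex types along the walk or an order test that tolerates such leaked terms.

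The paper's own proof does not close this gap for you. It handles the structure via Proposition~\ref{proposition_easy_part_of_characterisation}, then uses $|\tE|=|E|+2|V|$ and $|\hom(P_2,G)|=|\hom(\widetilde{P}_2,\tG)|$ (with $\widetilde{P}_2$ featured). It then appeals to ``obvious modifications'' of Proposition~\ref{lemma_simple_gnn_directed_orders}, leaving this same type restriction implicit.

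A smaller, fixable slip: $\mathrm{ReLu}(N_1+p_1-1)$ is positive at every $t_u$ with $\mathrm{indeg}(u)\ge 2$, for example at the maximum of any order with $n\ge 3$. As written, your network therefore rejects the very graphs it should accept. Either truncate first, using $\min(N_1,1)=N_1-\mathrm{ReLu}(N_1-1)$, or use the $\GML^\exists$ translation as the paper does.
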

For this, first note that that the property of being a gadgetisation of a
directed graph can be expressed by an FO formula and a simple ACR-GNN. The
simple ACR-GNN we again construct to count the number of homomorphisms. This time,
however, we use the homomorphisms of the gadgetisation of $P_2$.
A full proof of Proposition~\ref{prop_gadgetisations_expressible} is in the appendix.

Finally, we show, by utilising the appropriate bisimulations~\cite{Hella96},
that $\Ctwo$ is incapable of expressing being a gadgetisation of strict linear orders,
thus concluding the proof of Theorem~\ref{theorem_C2_over_undirected_inexpress}.
\begin{restatable}{proposition}{propositioncbisimulation}\label{proposition_C2_bisimulation}
    There is no $\Ctwo$ classifier that captures gadgetisations of
    strict linear orders.
\end{restatable}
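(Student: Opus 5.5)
The plan is to use the $\Ctwo$ bisimulation game of \citet{Hella96}: two pointed structures satisfy the same $\Ctwo$ formulas of quantifier rank $q$ if and only if Duplicator wins the $q$-round bijective two-pebble game. We will show that for every $q$ there are a gadgetisation $\tG_1$ of a strict linear order and a gadgetisation $\tG_2$ of a directed graph that is not a strict linear order, together with vertices $w_1,w_2$, such that $(\tG_1,w_1)$ and $(\tG_2,w_2)$ are $\Ctwo$-equivalent. Then no $\Ctwo$ classifier can capture gadgetisations of strict linear orders. The natural source of such pairs is Proposition~\ref{prop: c2 cannot express linear order}, or more concretely the witnesses behind it. For large $n$, take $G_1$ to be the strict linear order on $n$ elements and $G_2$ a directed graph with the same number of vertices that is not a linear order but satisfies the same $\Ctwo$ sentences up to rank $q'$. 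Since the witnesses of \citet{Hauke_Wałęga_2026} are given via games, we would reuse them. Failing that, choose $G_2$ to be a suitable non-transitive regular tournament or a disjoint union of two long linear orders adjusted for totality; we would first check whether the cited source supplies bijective-game equivalence directly.

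The key step is a transfer lemma. If Duplicator wins the $q'$-round bijective 2-pebble game on $(G_1,G_2)$, with $q'$ suitably larger than $q$ (say $q'=2q+c$ for a small constant $c$), then Duplicator wins the $q$-round game on $(\tG_1,\tG_2)$. We would prove this by simulation. Each vertex of $\tG_i$ has the form $s_v$, $t_v$ or $\iota_v$, so we project it to $v$ and its type. Given a bijection $\beta: V_1\to V_2$ proposed by the Duplicator strategy for the original graphs, Duplicator plays on the gadget graphs the bijection $x_v\mapsto x_{\beta(v)}$ for each type $x\in\{s,t,\iota\}$. This respects features. We then check that the partial map between pebbled vertices is a partial isomorphism in the gadget. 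Adjacency between $s_v$ and $t_u$ is exactly $E(v,u)$. Adjacency involving $\iota_v$ is exactly equality of the underlying vertices. Equality is preserved because the original partial map is a partial isomorphism, including equality. Since only two pebbles are used and their positions are projected faithfully, each gadget round corresponds to one original round, so $q'=q$ may even suffice.

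Finally, the pebble on the designated point $w_1=s_{v_1}$ is matched with $s_{v_2}$, where the original game starts from $\Ctwo$-equivalent pointed graphs $(G_1,v_1)$ and $(G_2,v_2)$. This yields the claim. The main obstacle is the first step: obtaining explicit pairs $G_1,G_2$ together with a winning bijective-game strategy, rather than only the bare inexpressibility statement of Proposition~\ref{prop: c2 cannot express linear order}. If the cited proof is not game-based, we would construct the pairs ourselves. One option is to take $G_1$ as the order on $\mathbb{Z}_n$ by value, but that is not vertex-transitive, so the better choice is to compare a large linear order with a graph such as a regular tournament or cycle-like structure having matching counting types. We would verify the bijective 2-pebble strategy by showing that the 1-WL colourings (with both edge directions and non-edges) are stable and agree with each other.
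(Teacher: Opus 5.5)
Your transfer step is sound. $\tG$ is a quantifier-free interpretation of $G$ on three copies of $V$, and the bijection $x_w\mapsto x_{\beta(w)}$ preserves features, equality and adjacency of pebbled vertices, so no rounds are lost.

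The gap is in the first step. You work with Hella's full bijective game, i.e.\ rank-$q$ \Ctwo{} with arbitrary counting thresholds. In that setting the witnesses you want do not exist, so this is not merely a matter of them being hard to find. A strict linear order on $n$ vertices is the only directed graph, loops allowed, whose (out-degree, in-degree) pairs are $(n-1-k,k)$ for $0\le k<n$: the vertex with pair $(n-1,0)$ has no loop and points to all others, so delete it and induct. This is stated by the rank-2 sentence $\exists^{=n}x\,(x=x)\wedge\bigwedge_{k<n}\exists x\,\bigl(\exists^{=n-1-k}y\,E(x,y)\wedge\exists^{=k}y\,E(y,x)\bigr)$. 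Equivalently, colour refinement makes a linear order discrete after one round, so your planned check that the stable 1-WL colourings agree can only succeed when $G_2\cong G_1$.

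Hence Duplicator loses the 2-round bijective game on every pair (linear order, non-linear order). On gadgetisations rank 4 already suffices, by reading off the degree of $t_v$ from $s_v$ through $\iota_v$. Your fallback candidates fail even in plain two-variable logic: a regular tournament has no sink, which $\exists x\,\neg\exists y\,E(x,y)$ detects.

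The missing idea is that \Ctwo-inexpressibility here relies on every formula having bounded rank \emph{and} a bounded largest counting threshold $c$. You must use the $c$-bounded game (the relation $\equiv^{L,c}$ in the paper) and let the witness pair depend on both $L$ and $c$. With that change your route goes through and is a genuine alternative:
\begin{itemize}
\item Finiteness of \Ctwo-types of rank at most $L$ and thresholds at most $c$ turns Proposition~\ref{prop: c2 cannot express linear order} into directed witness pairs for each $(L,c)$.
\item Your transfer adapts. A challenge of at most $c$ gadget vertices splits by type into challenges of at most $c$ original vertices. Equivalently, $\exists^{\geq k}$ over $\tV$ becomes a disjunction over $k_s+k_t+k_\iota=k$, which preserves rank and thresholds.
\end{itemize}

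The paper instead builds witnesses directly on the gadgets. It compares the gadgetisation of a linear order of size $2(Lc+1)+1$ with the same graph in which $(s_{-1},t_1)$ is replaced by $(s_1,t_{-1})$, creating a 3-cycle in the middle. It proves $\equiv^{L,c}$ by induction over middle index intervals that shrink by $c$ per round. This perturbation changes the out-degrees of the underlying vertices $-1$ and $1$, which is exactly why such pairs work only for bounded $c$.
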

The full proofs and the definition of $\Ctwo$-bisimulations can be found in the
appendix.

\section{ACR-GNNs on Graphs of Bounded Degree}\label{section_abstract_logic}

The graphs used in the proofs of
Theorem~\ref{theorem_C2_over_directed_inexpress} and
Theorem~\ref{theorem_C2_over_undirected_inexpress} have a particular structure
that involves vertices of high out-degree. Here, we study the expressive power
of ACR-GNNs if we exclude such graphs, meaning that we bound the degree of the
considered graphs by some constant, but leave the number of vertices unbounded. 

Our main result is the following complete characterization in the spirit of
\citet{Barcel2020}.
\begin{theorem}\label{prop bounded graphs gml}
    Over graphs of bounded degree, an FO classifier $\varphi$ is captured by an ACR-GNN 
    if and only if $\varphi$ is equivalent 
    to a $\GML^{\exists}$ classifier.
\end{theorem}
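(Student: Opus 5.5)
The easy direction is that every $\GML^{\exists}$ classifier is captured by an ACR-GNN; this holds on all graphs, so in particular on graphs of bounded degree. I would prove it by the standard construction of \citet{Barcel2020}. Fix an enumeration $\psi_1,\dots,\psi_m$ of the subformulas of $\varphi$ in which every subformula comes after its own subformulas. Allocate one dimension per subformula, and let layer $i$ compute the truth value of $\psi_i$ as a $0/1$ entry using a truncated-ReLU combination. The local modality $\Diamond_{\geq k}\psi$ is handled with sum aggregation, and the global modality $\exists_{\geq k}\psi$ with sum readout, in both cases thresholded via $\min(1,\max(0,x-k+1))$.

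The hard direction starts from an FO classifier $\varphi$ of quantifier rank $q$ that is captured by an ACR-GNN $\gnn$ with $L$ layers on graphs of degree at most $c$. The goal is a $\GML^{\exists}$ formula equivalent to $\varphi$ on such graphs. There are only finitely many $\GML^{\exists}$ formulas of bounded modal depth and bounded grades, up to equivalence. It therefore suffices to find parameters $L', c', q'$ such that any two bounded-degree pointed graphs agreeing on all $\GML^{\exists}$ formulas of depth $L'$, local grades $\le c$ and global grades $\le q'$ also agree on $\varphi$. The formula is then the finite disjunction of the matching type descriptions.

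The plan is a bisimulation-style argument in the notation the preamble suggests, namely a relation $\sim^{L,c,q'}_{\exists}$. Let $(G,v)$ and $(H,w)$ be related, with $G,v\models\varphi$ and $H,w\not\models\varphi$. I would build graphs $G'$ and $H'$ such that:
\begin{itemize}
\item $G'$ is FO-equivalent to $G$ up to rank $q$ at the point, so $G',v\models\varphi$;
\item $H'$ is FO-equivalent to $H$ up to rank $q$ at the point, so $H',w\not\models\varphi$;
\item $\gnn$ cannot distinguish $(G',v)$ from $(H',w)$, contradicting that $\gnn$ captures $\varphi$.
\end{itemize}
The construction adds to both graphs many disjoint copies of every realised bounded-degree $L$-neighbourhood type, namely a multiple of $N$ copies for $N$ large, and chosen so that afterwards each such type occurs exactly equally often in $G'$ and $H'$. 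With the degree bounded by $c$, each local neighbourhood type of depth $L$ is a finite object. Equal multisets of radius-$L$ unravelling types give the same readout values at every layer, and by induction on layers the same embeddings. Local aggregation sees only the neighbourhood, which has size at most $c$, so counting up to $c$ there is exact.

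The step needing care is FO-preservation when copies are added. By Hanf locality for bounded-degree structures, two graphs are $\equiv_q$-equivalent at the point if they have the same local type around the point and the same counts, up to a threshold $t(q,c)$, of each radius-$r(q)$ ball type. In $G$ every ball type that already occurs $\ge t$ times remains above threshold after copies are added. A type occurring fewer than $t$ times in $G$ must occur equally often in $H$ by the global grades in $\sim$ with $q'\ge t$, so I pad both graphs so that only types with at least $t$ occurrences in both are added. This keeps $G'\equiv_q G$ and $H'\equiv_q H$. Here I take the radius at least $\max(L,r(q))$ and the global grade $q'\ge t$.

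The main obstacle is making the multisets of GNN-relevant types exactly equal while only adding types already above threshold. I would handle this by working with types at radius $\max(L,r)$, all of which are finite in number. By the hypothesis, $G$ and $H$ have the same set of frequent types, and the same exact counts of rare ones. Adding disjoint copies of the connected components realising the frequent types until their counts match is the natural approach. Disjoint copies of components may, however, add other types as well, so I would pad using entire unravellings truncated at the radius, which may require duplicating whole bounded-size components. The counts of all added types must then remain balanced, and a lcm/large-multiple argument should achieve this.
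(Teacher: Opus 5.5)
Your easy direction is fine. The hard direction, however, has a genuine gap, and it sits both in the padding step and in the overall shape of the argument.

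The hypothesis $G,v\sim^{L,c,q'}_{\exists}H,w$ controls, up to $q'$, only how often each \emph{graded-bisimulation} type occurs. Hanf's theorem, by contrast, concerns isomorphism types of radius-$r$ balls in the Gaifman graph. Bisimulation types are blind to cycles and incoming edges at every depth, so moving to depth $\max(L,r)$ does not help. Your claim that a ball type occurring fewer than $t$ times in $G$ occurs equally often in $H$ is therefore false.

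Worse, the pair $G',H'$ your scheme needs need not exist. Let $G$ be one vertex with a self-loop plus $k$ disjoint directed $2$-cycles, and let $H$ be $m$ disjoint directed $2$-cycles, with $k,m$ large.
\begin{itemize}
\item Every vertex has exactly one out-neighbour. So all vertices share a single $\sim^{L',c}$-type for every $L'$, and $G,v\sim^{L',c,q'}_{\exists}H,w$ holds for all parameters.
\item The self-loop ball type occurs once in $G$ and never in $H$.
\item Rank-$3$ sentences force any $G'\equiv_3 G$ to be one looped vertex plus $2$-cycles, and any $H'\equiv_3 H$ to consist of $2$-cycles only.
\end{itemize}
Hence, as soon as $q\geq 3$, $|V(G')|$ is odd and $|V(H')|$ is even, while all their vertices still share one type. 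No padding, lcm-based or otherwise, can equalise the type multisets. You then cannot conclude $\gnn(G',v)=\gnn(H',w)$, since $\gnn$ may read off $|V|\bmod 2$ through its readout.

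Truncated unravellings also fail for a simpler reason: their boundary balls are new ball types that occur zero times in $G$ and $H$. Finally, the paper's notion of bounded degree bounds only out-degree. The Gaifman degree may then be unbounded, so Hanf locality is not even available.

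The missing idea is to reverse the diamond: make the flexible moves on the GNN side, which requires leaving the bounded-degree class. The paper does this in three steps.
\begin{itemize}
\item It replaces each $\agg_i$ by $M\mapsto\agg_i(M_{|c})$, which changes nothing on graphs of out-degree at most $c$.
\item It conjoins with $\psi_c$ (e.g.\ $\neg\exists_{\geq 1}\Diamond_{\geq c+1}\top$). The result is an ACR-GNN with bounded aggregation that captures the FO classifier $\varphi\wedge\psi_c$ on \emph{all} graphs, so Proposition~\ref{theo_main_abstract_logic} applies.
\item The hard direction of that proposition uses $\sim^{L,c,*}_{\exists}$-preserving in-place rewirings (Lemmas~\ref{prop_free_edge_transfer} and~\ref{prop_free_witness}) to turn $G_1,G_2$ into homogenised companions $\hG_1,\hG_2$. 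These need not be FO-equivalent to the originals. However, adjacency in them depends only on the two vertices' types and their enumeration indices capped at $c$. A $q$-round Ehrenfeucht--Fra\"iss\'e game with $q'=q+c-1$ then yields $\hG_1,v_1\equiv_q\hG_2,v_2$.
\end{itemize}
These companions typically violate the degree bound. A vertex with many successors of some type gets joined to all vertices of that type, and in the example above the rewiring can make every vertex point at a single looped vertex. So an argument that must stay among bounded-degree graphs while preserving FO-types on the outer legs is too rigid.
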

This contrasts Theorem~\ref{theorem_C2_over_directed_inexpress} and
Theorem~\ref{theorem_C2_over_undirected_inexpress},
as over graphs of unbounded degree, ACR-GNNs can express properties outisde of
$\GML^\exists$ and $\Ctwo$.

To show Theorem~\ref{prop bounded graphs gml}, we take a detour and first
consider again graphs of unbounded degree, but ACR-GNNs with bounded aggregation
functions and unbounded readout. For this setting we obtain a similar result.
\begin{proposition} \label{theo_main_abstract_logic}
    An FO classifier $\varphi$ is captured by an ACR-GNN with bounded aggregation functions if and only if
    $\varphi$ is equivalent to a $\GML^\exists$ classifier.
\end{proposition}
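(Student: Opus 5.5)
The proof has two directions. For the easy direction ($\GML^\exists \Rightarrow$ ACR-GNN with bounded aggregation) I would adapt the construction of \citet{Barcel2020}. Each subformula of $\varphi$ gets its own dimension, and we evaluate bottom-up, one layer per subformula.

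Only the modal and global counting steps need care:
\begin{itemize}
\item For $\Diamond_{\geq k}\psi$, the relevant dimensions hold $0/1$ values, so the sum of the neighbours' values can be replaced by $\min(\cdot, k)$ applied to that sum. Define $\agg$ as dimension-wise $\min(\sum, K)$, where $K$ is the maximal grade appearing in $\varphi$. On $0/1$ inputs this function is $K$-bounded: truncating multiplicities at $K$ does not change $\min(\sum, K)$. To make it bounded on all multisets, I would instead take $\agg(M) = \min(\sum M_{|K}, K)$ dimension-wise. This is $K$-bounded by definition and agrees with the previous function on $0/1$ vectors.
\item For $\exists_{\geq k}\psi$, readout is unrestricted, so plain sum followed by a threshold in $\comb$ suffices.
\end{itemize}
Boolean connectives are handled by the usual ReLU tricks, and the result is a bounded-aggregation ACR-GNN equivalent to $\varphi$.

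The hard direction is that an FO classifier $\varphi$ of quantifier rank $q$ captured by an $L$-layer ACR-GNN $\gnn$ with $c$-bounded aggregation must be equivalent to a $\GML^\exists$ classifier. I would work with the equivalence $\Rsim$: the $L$-round graded bisimulation with grades up to $c$, where additionally all global counts of types are matched exactly. First, by induction on layers, $\gnn$ is invariant under this equivalence. If two pointed graphs have vertices of the same depth-$i$ $c$-bounded type, and for each such type the global number of vertices realising it is the same, then the embeddings agree. Aggregation only sees multiplicities truncated at $c$, and readout sees the exact global multiset of types, which coincides by assumption. The next step is to reduce exact global counts to counts up to a threshold. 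To do this I would prove the following: if $(G,v)$ and $(H,w)$ agree on all $\GML^\exists$ formulas of depth $L' = L + q'$ with grades up to some large $c'$, where $q'$ and $c'$ depend on $q$, then there are graphs $G'$, $H'$ such that
\begin{itemize}
\item $G'$ is FO$_q$-equivalent to $G$ and $H'$ is FO$_q$-equivalent to $H$ (at $v$ and $w$ respectively), and
\item $(G', v)$ and $(H', w)$ are $\Rsim$-equivalent, with exact global type counts.
\end{itemize}
Then $\gnn$ agrees on $G'$ and $H'$, hence $\varphi$ agrees on $G$ and $H$. Since there are finitely many depth-$L'$ grade-$c'$ types, $\varphi$ is a finite disjunction of such type formulas and so is $\GML^\exists$.

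To construct $G'$ and $H'$ I would take disjoint unions with many extra copies. For each type realised at least $c'$ times, add a large number of disjoint copies of an $L'$-unravelling (or of the whole component) realising it. The multiplicities are tuned so that global counts match exactly in both graphs. Adding copies of components that already occur at least $c' \geq q$ times (ideally whole connected components, or tree unravellings) preserves FO$_q$-type. This follows from Hanf/Gaifman locality, or directly via an Ehrenfeucht–Fraïssé argument using that disjoint unions respect FO$_q$-equivalence and that $\geq q$ copies are indistinguishable from more. Bounded aggregation matters in the following way: the local types are $c$-bounded, so duplicating neighbourhoods (for example by unravelling the graph into trees with multiplicity saturated beyond $c$) does not alter $\gnn$'s local computation, while FO cannot see beyond the threshold.

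I expect the main obstacle to be this step. The vertices of a type may not sit in separate components, so replacing $G$ by something in which the global type counts can be adjusted independently, while preserving both the FO$_q$-type at $v$ and the $\gnn$ output, requires care. My first attempt would be to use a Hanf-style argument. Add to both graphs $N$ disjoint copies of the $c$-bounded tree unravellings of every vertex type that occurs, with $N$ huge and chosen so that the totals match. FO$_q$ then cannot distinguish the added copies beyond threshold, and vertex types occurring fewer than $c'$ times already have equal exact counts by $\GML^\exists$-equivalence.
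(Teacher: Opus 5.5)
Your easy direction and your argument that $\gnn$ is invariant under $\Rsim$ are fine and match the paper. The gap is the central lemma of the hard direction. You claim that bounded-$\GML^\exists$-equivalent $(G,v)$ and $(H,w)$ admit $G'\equiv_q G$ and $H'\equiv_q H$ with $G',v \Rsim H',w$. This lemma is false, so no construction can deliver it.

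Here is a counterexample. Take $c=1$ (e.g.\ max aggregation), any $L$, and $q=4$.
\begin{itemize}
\item Let $G$ have $n$ vertices labelled $p_1$ and $n$ labelled $p_2$. Put an edge from every $p_1$-vertex to every $p_2$-vertex, except along a perfect matching.
\item Let $H$ be built the same way from $2k$ $p_1$-vertices and $k$ $p_2$-vertices, where the missing edges form a $2$-to-$1$ map onto the $p_2$-vertices.
\end{itemize}
Take $n,k > c'$ and point both graphs at $p_2$-vertices. Then every $p_1$-vertex has at least $c'$ successors, all of them $p_2$-labelled sinks, and both classes occur at least $c'$ times. Hence $(G,v)$ and $(H,w)$ agree on every $\GML^\exists$ formula with grades at most $c'$, at any depth.

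Now consider sentences of rank at most $4$:
\begin{itemize}
\item all edges go from $p_1$ to $p_2$;
\item each $p_1$-vertex has exactly one $p_2$-non-successor;
\item there are at least four $p_2$-vertices;
\item each $p_2$-vertex is the non-successor of exactly one $p_1$-vertex (true in $G$), respectively exactly two (true in $H$).
\end{itemize}
So every $G'\equiv_4 G$ has equally many $p_1$- and $p_2$-vertices, and every $H'\equiv_4 H$ has twice as many $p_1$- as $p_2$-vertices. In both, all $p_1$-vertices share one $\sim^{L,1}$-type and all $p_2$-vertices share another. The exact global counts therefore can never coincide.

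Your concrete padding step also fails on its own terms: it does not preserve FO$_q$-types.
\begin{itemize}
\item Adding a truncated unravelling to a graph without sinks introduces sinks, which $\exists x\,\neg\exists y\,E(x,y)$ detects.
\item The principle ``$\geq q$ copies look like more'' only applies to whole components already present $q$ times. In a connected graph the only component is the graph itself, and duplicating it changes both its FO-type and the counts of rarely realised types.
\item Threshold Hanf locality needs bounded degree, which is exactly what is not assumed here.
\end{itemize}

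The missing idea is to reverse the roles. Your transformations must preserve $\Rsim$, and hence $\varphi$ via $\gnn$, while being allowed to change FO-types. FO$_q$-equivalence should only be established between the two transformed graphs.

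The paper does this as follows.
\begin{itemize}
\item It keeps the vertex set fixed, so exact global type counts are preserved for free.
\item It rewires edges with Lemmas~\ref{prop_free_edge_transfer} and~\ref{prop_free_witness}, producing homogeneous companions $\hG_1,\hG_2$ whose adjacency depends only on $\sim^{L,c}$-types and enumeration indices capped at $c$.
\item An Ehrenfeucht--Fra\"iss\'e argument then gives $\hG_1,v_1\equiv_q \hG_2,v_2$ whenever global counts agree up to $q+c-1$.
\end{itemize}
In the example above, this turns $G$ and $H$ into complete bipartite graphs with large sides, which are $\equiv_4$-equivalent. That change of FO-type is precisely what your scheme forbids.
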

At the end of this section, we then show how
Proposition~\ref{theo_main_abstract_logic} can be used to show Theorem~\ref{prop
bounded graphs gml}.
In a sense, Proposition~\ref{theo_main_abstract_logic} gives an upper-bound on how
far (fragments of) $\Ctwo$ can be taken in terms of capturing the expressive
power of ACR-GNNs (relative to FO).


One direction of the proof of Theorem~\ref{theo_main_abstract_logic} is a
straightforward adaptation of \cite[Proposition~4.1]{Barcel2020}.
\begin{proposition}\label{proposition_easy_part_of_characterisation}
    Every $\GML^\exists$ classifier is captured by an ACR-GNN with bounded aggregation functions.
    
\end{proposition}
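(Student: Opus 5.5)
Following the construction of \cite[Proposition~4.1]{Barcel2020}, we take a $\GML^\exists$ classifier $\varphi$ and build an ACR-GNN whose embeddings track the truth values of all subformulas of $\varphi$.

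\textbf{Enumeration and embeddings.} Let $\varphi_1, \ldots, \varphi_m$ enumerate the subformulas of $\varphi$ so that every subformula precedes any formula containing it, with $\varphi_m = \varphi$. Put $L = m$ and use embeddings of dimension $d + m$, where the first $d$ coordinates carry the input features unchanged. We maintain the invariant that, for every $v$, coordinate $d+j$ of $f^{(i)}(v)$ equals $1$ if $G, v \models \varphi_j$ and $0$ otherwise, for all $j \leq i$; coordinates $j > i$ are $0$.

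\textbf{Layer $i$.} Layer $i$ computes coordinate $d+i$ and copies all other coordinates, by the following case distinction.
\begin{itemize}
\item For $\top$ and $p_k$: the combination function outputs $1$, respectively the $k$-th input feature.
\item For $\neg\psi$ and $\psi \wedge \chi$: it outputs $1 - x$, respectively $\min(x, y)$, where $x, y$ are the already computed coordinates of $\psi, \chi$.
\item For $\Diamond_{\geq k}\psi$: the aggregation is applied coordinate-wise, and on coordinate $d+j$ (with $\varphi_j = \psi$) it returns $\min(k', \sum)$ for a bound $k'$ at least the largest grade in $\varphi$. The combination function then outputs $1$ iff this value is at least $k$.
\item For $\exists_{\geq k}\psi$: the readout returns the sum of coordinate $d+j$ (or its cap at $k'$, which is not required). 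The combination function outputs $1$ iff this value is at least $k$. The classifier $\cls$ reads coordinate $d+m$.
\end{itemize}
Correctness then follows by induction on $i$ directly from the semantics.

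\textbf{Boundedness of aggregation.} This is the only point needing care. The truncated sum $M \mapsto \min(k', \sum_{x \in M} x)$, applied coordinate-wise, is not literally $c$-bounded on arbitrary multisets of reals, since many copies of a small positive value can exceed the cap. We therefore define the aggregation function as $\agg(M) := \big(\min(k', \sum_{x \in M_{|k'}} x[\ell])\big)_\ell$. By definition $\agg(M) = \agg(M_{|k'})$, so it is $k'$-bounded.

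We must also check that it agrees with the intended count on the embeddings that actually occur. Every occurring embedding is a $0/1$ vector, and the multiset $M$ restricted to such vectors satisfies the following. For each coordinate, the number of elements with a $1$ there is at least $k$ in $M$ iff it is at least $k$ in $M_{|k'}$, provided $k' \geq k$. The reason is that if at least $k$ elements carry a $1$ in that coordinate, either some vector with that $1$ occurs at least $k'$ times (and survives truncation with multiplicity $k'$), or the contributing distinct vectors each keep their full multiplicity up to $k'$.

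Since readout is unrestricted, no analogous argument is needed for readout. Hence every $\GML^\exists$ classifier is captured by an ACR-GNN with bounded aggregation functions.
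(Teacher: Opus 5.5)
Your proposal is correct and follows essentially the same route as the paper, which adapts the subformula-tracking construction of \cite[Proposition~4.1]{Barcel2020} using a $k$-bounded sum as aggregation ($k$ the largest grade of a $\Diamond_{\geq k}$ in $\varphi$) and unrestricted sum readout for the $\exists_{\geq k}$ modalities. You define the aggregation on $M_{|k'}$ so that it is literally bounded, and you check that it still decides $\Diamond_{\geq k}$ correctly on $0/1$ embeddings; this spells out a detail the paper leaves implicit.
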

The idea is the same as in \cite[Proposition~4.1]{Barcel2020}: subformulas of a $\GML^{\exists}$ classifier $\varphi$
can be enumerated (say, as $\psi_1, \dots, \psi_n$, with $\psi_n = \varphi$) and given a unique position (say, $\psi_i$ is given $i$) in feature vectors.
The goal is to have the $i$-th position of the $i$-th feature embedding of $v \in V$ determine the satisfaction of $\psi_i$
by $G, v$.
This is achieved by carefully selecting weights and biases depending on the kind
of formula in question. It is a direct consequence of the proof of
\cite[Proposition~4.1]{Barcel2020}, that $k$-bounded sum suffices as aggregation
function, where $k$ is the highest number such that $\Diamond_{\geq k}$ occurs
in $\varphi$.

The converse direction is far less obvious and might be of independent interest in the study of finite model theory.
\begin{restatable}{proposition}{propositionhardpartofcharacterisation}\label{proposition_hard_part_of_characterisation}
    Suppose an ACR-GNN with bounded aggregation functions captures an FO classifier $\varphi$.
    Then, $\varphi$ is equivalent to a $\GML^\exists$ formula.
\end{restatable}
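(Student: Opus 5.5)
We prove Proposition~\ref{proposition_hard_part_of_characterisation} in the style of van Benthem/Rosen characterisation theorems. The notion of indistinguishability is the one suggested by the macros $\Rsim$. Say $(G,v)\sim^{L,c}(H,w)$ if $v$ and $w$ satisfy the same $\GML$ formulas of modal depth at most $L$ with grades at most $c$. Say $(G,v)\RsimPP{L}{q}(H,w)$ if, in addition, for every such type $\tau$ of depth $<L$ the numbers of realisers of $\tau$ in $G$ and in $H$ agree, where "agree" means either equal or both at least $q$. When no threshold is imposed, i.e.\ exact counts are required, we write $\Rsim$. Let $\gnn$ have $L$ layers with $c$-bounded aggregation functions.

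\textbf{Step 1 (invariance).} We show that if $(G,v)\Rsim(H,w)$ then $\gnn(G,v)=\gnn(H,w)$. We argue by induction on $i\le L$. The claim is that $f^{(i)}(u)$ depends only on the depth-$i$, grade-$c$ local type of $u$, together with the exact global multiset of depth-$(i-1)$ types. Aggregation only sees the $c$-restriction of the neighbour multiset, which is determined by the grade-$c$ local type. Readout sees the full multiset of $f^{(i-1)}$-values, which is determined by the exact global type counts. Since readout is unbounded, exact counts are really needed here.

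\textbf{Step 2 (from exact counts to thresholds).} We use that $\varphi$ is FO of quantifier rank $q$ to replace $\Rsim$ by $\RsimPP{L}{q'}$ for some finite $q'$ depending on $q$ and $L$. Given $(G,v)\RsimPP{L}{q'}(H,w)$, we construct $G'$ and $H'$ with the following properties:
\begin{enumerate}
\item $G,v\models\varphi$ iff $G',v\models\varphi$, and likewise for $H,w$ and $H',w$;
\item $(G',v)\Rsim(H',w)$.
\end{enumerate}
Our first attempt is to take $G'=G\uplus k\cdot G\uplus m\cdot H$ and $H'=H\uplus k'\cdot G\uplus m'\cdot H$. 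Here $k\cdot G$ denotes $k$ disjoint copies of $G$, and the multiplicities are chosen very large, each at least $q$. Adding many disjoint copies of whole graphs preserves FO rank-$q$ truth at $v$ by a standard Ehrenfeucht--Fra\"iss\'e/Hanf argument: with at least $q$ copies of every component, Duplicator can always answer in a fresh copy. This gives property~1. The multiplicities are then tuned so that every global type count becomes equal in $G'$ and $H'$; this is a linear system over $\mathbb{N}$, solvable because the counts involved are large. The local type of $v$ is unchanged, since disjoint union does not alter neighbourhoods.

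A simpler alternative is $G'=H'=G\uplus H$, pointed at $v$ and $w$ respectively. Then the global counts agree trivially, and $(G\uplus H,v)\Rsim(G\uplus H,w)$ reduces to $v$ and $w$ having the same local type. The remaining task is to show that $G\uplus H,v$ and $G,v$ agree on $\varphi$. By Gaifman locality, adding $H$ could only change the truth of basic local sentences. Under the threshold condition, $G$ and $H$ agree on the counts of $\GML$ types, but not necessarily on local FO sentences, so this alternative needs extra work. This is the main obstacle. To handle it, we first pass to $\omega$-unravellings, or bounded tree-unravellings with many copies, of $G$ and $H$. These preserve the GNN output by Step~1 and preserve $\varphi$ on the relevant graph class only up to care. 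After unravelling, local neighbourhoods are trees determined by $\GML$ types, so Hanf-locality makes rank-$q$ FO truth depend only on $\GML^\exists$-style thresholded type counts.

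\textbf{Step 3 (conclusion).} Combining Steps~1 and~2: if $(G,v)\RsimPP{L}{q'}(H,w)$, then $G,v\models\varphi$ iff $H,w\models\varphi$. For fixed $L$, $c$ and $q'$ there are only finitely many equivalence classes of $\RsimPP{L}{q'}$. Each class is definable by a $\GML^\exists$ formula: a conjunction of the local type of the point with $\exists_{\ge k}$ and $\neg\exists_{\ge k}$ statements about the types. Hence $\varphi$ is equivalent to the finite disjunction of the classes it contains, which is a $\GML^\exists$ formula.
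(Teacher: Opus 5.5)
Your Steps 1 and 3 agree with the paper: Lemma~\ref{lemma_bisim_invar_acr}, and the finite disjunction of characteristic formulas. Step 2, however, has a genuine gap, and Step 2 is where the whole difficulty lies.

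\textbf{The first construction.} Property 1 is false. The Ehrenfeucht--Fra\"iss\'e fact you invoke says that once a component occurs at least $q$ times, further copies are invisible at rank $q$. But $G$ contains only one copy of itself, so nothing relates $G,v$ to $G\uplus k\cdot G\uplus m\cdot H,v$. Concretely, $\neg\exists_{\geq 6}\top$ is captured by an ACR-GNN with bounded aggregation. It holds in any three-vertex $G$ and fails in $G\uplus k\cdot G$ for $k\geq 1$. You cannot fall back on $\Rsim$-invariance either, because adding copies changes the exact global counts.

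Write $g_\tau,h_\tau$ for the numbers of realisers of $\tau$ in $G,H$. Your count equations read $(1+k-k')g_\tau=(1+m'-m)h_\tau$. They are solved by $k'=k+1$, $m'=m-1$, which makes $G'$ and $H'$ the same graph. So the first attempt is the second in disguise, and everything rests on property 1.

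\textbf{The unravelling fallback.} This fails for the same reason. Unravellings change how many vertices realise each type, and $\omega$-unravellings are infinite, where the readout multiset is not even defined. So, contrary to your claim, Step 1 does not show that they preserve the GNN output. The paper explicitly remarks that unravelling and adding elements break $\Rsim$. ``Up to care'' is not an argument.

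\textbf{The missing idea.} You need to reverse the roles. Only $\Rsim$-invariance connects $\varphi$ to the GNN. Importing material from $H$ into $G$ can create or destroy structure that $\GML$-types do not see, such as triangles or failures of transitivity. So you should transfer $\varphi$ from $G_i$ to a companion $\hG_i$ via $G_i,v_i\Rsim\hG_i,v_i$, and prove rank-$q$ equivalence between the two companions. Exact counts then force each companion onto the same vertex set with the same features; only edges may change.

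The paper uses two $\Rsim$-preserving moves (Lemmas~\ref{prop_free_edge_transfer} and~\ref{prop_free_witness}):
\begin{enumerate}
\item redirect an edge $(v,w)$ to $(v,w')$ when $w\sim^{L-1,c}w'$;
\item add $(v,w')$ when $v$ already has $c$ successors $\sim^{L-1,c}$-equivalent to $w'$.
\end{enumerate}
With these it homogenises each graph in place. Fix an enumeration of each type class $V^\lambda$, with the distinguished vertex first. Every vertex's $\lambda$-successors become either an initial segment of length less than $c$ or all of $V^\lambda$. Vertices of the same type get identical out-neighbourhoods, and $\hG_2$ copies the segment lengths of $\hG_1$ (Lemmas~\ref{lemma_saturation} and~\ref{lemma_homogeneity}).

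Duplicator then wins the $q$-round game on $\hG_1,v_1$ and $\hG_2,v_2$. She answers a vertex of type $\lambda$ and index below $c$ with the unique vertex of the same type and index. She answers any other vertex with a fresh vertex of type $\lambda$ and index at least $c$. Such answers exist because the counts $|V^\lambda|$ agree up to $q'=q+c-1$, so the index-$\geq c$ parts agree up to $q$. This is also why $q'$ depends on $q$ and $c$, not on $L$.
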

To show the above, we first define a suitable notion of bisimulation under which
ACR-GNNs with bounded activation functions are invariant, and then obtain a van
Benthem-Rosen style result that shows that any FO formula that is invariant
under this notion of bisimulation must be equivalent to a 
$\GML^\exists$ formula.
Recall the notion of $c$-graded, $L$-turn bisimulations that is associated with 
$\GML$~\cite{otto2023gradedmodallogiccounting}.
\begin{definition}
For two $d$-featured pointed graphs $G_1 = (V_1, E_1, f_1)$ and $G_2 = (V_2,
E_2, f_2)$, we say that $(G_1, v_1)$ and $(G_2, v_2)$ are \emph{$c$-graded
$L$-turn bisimilar}, written $G_1, v_1 \sim^{L, c} G_2, v_2$, when
    \begin{enumerate}
        \item $f_1(v_1) = f_2(v_2)$;
        \item (forth) for all distinct
        $u_1^{(1)}, \dots, u_1^{(k)} \in V_1$ such that $k \leq c$ and $(v_1,u_1^{(i)}) \in E_1$
        there exist distinct $u_2^{(1)}, \dots, u_2^{(k)} \in V_2$ having $(v_2,u_2^{(i)}) \in E_2$ and $G_1, u_1^{(i)} \sim^{L-1, c} G_2, u_2^{(i)}$;
        \item (back) for all distinct
        $u_2^{(1)}, \dots, u_2^{(k)} \in V_2$ such that $k \leq c$ and $(v_2,u_2^{(i)}) \in E_2$
        there exist distinct $u_1^{(1)}, \dots, u_1^{(k)} \in V_1$ having $(v_1,u_1^{(i)}) \in E_1$ and $G_1, u_1^{(i)} \sim^{L-1, c} G_2, u_2^{(i)}$;
    \end{enumerate}
    it being understood that $L \geq 0$, $c \geq 1$, and the back and forth conditions apply only when $L \geq 1$.
\end{definition}
We extend $c$-graded, $L$-turn bisimulations with unbounded global counting as follows.
\begin{definition}\label{def: bounded bisim}
    $G_1, v_1 \Rsim G_2, v_2$ when:
    \begin{enumerate}
        \item $G_1, v_1 \sim^{L, c} G_2, v_2$;
        \item for each $u \in V_1 \cup V_2$
        we have $|\{ u_1 \in V_1 \mid G_1, u_1 \sim^{L, c} G, u \}| = |\{ u_2 \in V_2 \mid G_2, u_2 \sim^{L, c} G, u \}|$,
    \end{enumerate}
    where $G$ is $G_1$ or $G_2$ depending on whether $u$ is in $V_1$ or $V_2$.
\end{definition}
Observe that $c$-graded, $L$-turn bisimulations with global counting are an upper bound in
terms of graph distinguishing power for $L$-layered ACR-GNNs with $c$-bounded aggregation functions
in a similar manner as the 1-Weisfeiler-Leman algorithm is for AC-GNNs \cite{xu2018howPowerful}.
\begin{restatable}{lemma}{lemmabisiminvaracr}\label{lemma_bisim_invar_acr}
    Suppose $G_1, v_1 \Rsim G_2, v_2$.
    Then, for each $L$-layered ACR-GNN $\gnn$ with $c$-bounded aggregation functions
    we have $\gnn(G_1, v_1) = \gnn(G_2, v_2)$.
\end{restatable}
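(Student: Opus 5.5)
We argue by induction on the layer index: for every $i \leq L$, if two pointed graphs are $c$-graded $i$-turn bisimilar, then the layer-$i$ embeddings at the two points coincide. The global condition of Definition~\ref{def: bounded bisim} is then used only to show that the readout inputs at every layer are identical multisets.

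Fix an $L$-layered ACR-GNN $\gnn$ with $c$-bounded aggregation functions, and write $f_j^{(i)}$ for the embedding it computes on $G_j$. The inductive claim is: for every $i \leq L$ and all $u_1 \in V_1$, $u_2 \in V_2$, $G_1, u_1 \sim^{i,c} G_2, u_2$ implies $f_1^{(i)}(u_1) = f_2^{(i)}(u_2)$. It will be convenient, and in fact necessary, to also prove this for pairs inside the same graph ($G_1$ with $G_1$, $G_2$ with $G_2$), since the readout comparison in Step~2 needs it. We also use that $\sim^{i,c}$ is an equivalence relation on the disjoint union of the two graphs, which follows from the standard argument. Finally, $\sim^{i,c}$ refines $\sim^{i-1,c}$ for $i \geq 1$, which is a simple induction.

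\emph{Step 1 (aggregation).} For $i = 0$ the claim holds because the features agree. For the inductive step, assume $u_1 \sim^{i,c} u_2$. The current-embedding arguments agree by the induction hypothesis, since $\sim^{i,c}$ refines $\sim^{i-1,c}$. For the aggregation argument, let $M_j = \msl f_j^{(i-1)}(w) \mid w \in N_{\text{out}}(u_j) \msr$. Because $\agg_i$ is $c$-bounded, it suffices to show $(M_1)_{|c} = (M_2)_{|c}$. Take any vector $x$ and suppose $M_1$ contains at least $k \leq c$ copies of $x$, realised by distinct out-neighbours $w^{(1)},\dots,w^{(k)}$ of $u_1$. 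The forth condition yields distinct out-neighbours of $u_2$ that are $\sim^{i-1,c}$-related to them. By the induction hypothesis these carry the same embedding $x$, so $\min(M_2(x),c) \geq \min(M_1(x),c)$. The back condition gives the reverse inequality.

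\emph{Step 2 (readout), the main point.} We must show $\msl f_1^{(i-1)}(w) \mid w \in V_1 \msr = \msl f_2^{(i-1)}(w) \mid w \in V_2 \msr$. Here the hypothesis only provides the counting condition at level $L$, not at level $i-1$. Since $\sim^{L,c}$ refines $\sim^{i-1,c}$, every $\sim^{i-1,c}$-class on $V_1 \sqcup V_2$ is a disjoint union of $\sim^{L,c}$-classes. Condition~2 of Definition~\ref{def: bounded bisim} says that each $\sim^{L,c}$-class contains equally many elements of $V_1$ and of $V_2$. Summing over the $\sim^{L,c}$-classes inside a given $\sim^{i-1,c}$-class, the same balance holds for every $\sim^{i-1,c}$-class.

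By the induction hypothesis, applied both across and within the graphs, all members of one $\sim^{i-1,c}$-class carry the same embedding. It follows that for each vector $x$, the number of vertices of $V_1$ with embedding $x$ equals the number of vertices of $V_2$ with embedding $x$. Indeed, the vertices with embedding $x$ form a union of $\sim^{i-1,c}$-classes, each balanced between $V_1$ and $V_2$. So the two readout multisets are equal.

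With all three arguments of $\comb_i$ equal, $f_1^{(i)}(u_1) = f_2^{(i)}(u_2)$, which completes the induction. Applying the claim at $i = L$ to $v_1 \sim^{L,c} v_2$ (condition~1 of Definition~\ref{def: bounded bisim}) and then applying $\cls$ gives $\gnn(G_1, v_1) = \gnn(G_2, v_2)$. The expected obstacle is the one handled in Step~2: the hypothesis gives global counting only at level $L$, and it has to be transferred down to every intermediate level $i-1$ via the refinement property.
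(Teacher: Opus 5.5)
Your proof is correct and follows the same route as the paper: induction on layers, with the aggregation inputs matched through the $c$-graded back-and-forth condition and $c$-boundedness, and the readout multisets matched through the global counting condition. The only difference is presentational. You fix the level-$L$ counting condition and explicitly coarsen it to level $i-1$, whereas the paper carries $\RsimP{i}$ in its induction hypothesis and passes from level $i+1$ to level $i$ with a single ``clearly''; that step is exactly the refinement argument you spell out.
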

Thus, supposing that $\varphi$ is an FO classifier captured by an ACR-GNN $\gnn$
with $c$-bounded aggregation functions, we must have, by
Lemma~\ref{lemma_bisim_invar_acr}, that $\varphi$ is invariant under $\Rsim$,
where $L$ is the number of layers of $\gnn$.

\begin{remark}
    Our result hinges on the fact that the $c$-graded bisimulations with global
    counting need only be considered up to $L$ turns (with $L$ being the number
    of layers a GNN has). When the turn bound is not present, properties outside
    of $\GML^\exists$ (e.g. strict linear orders) become invariant under the
    bisimulations in question, thus disallowing the desired characterisations.
\end{remark}

We now additionally define a variant of $\Rsim$ which can globally count only in
a bounded way. When given some $q' \geq 1$ we write $G_1, v_1 \Rsimq G_2, v_2$
for the bisimulation $\Rsim$ but with the following modification to Condition~2 in Definition~\ref{def: bounded bisim}:
\begin{enumerate}
        \item[] for each $u \in V_1 \cup V_2$ we have $\min(|\{ u_1 \in V_1 \mid
        G_1, u_1 \sim^{L, c} G, u \}|, q') = \min(|\{ u_2 \in V_2 \mid G_2, u_2
        \sim^{L, c} G, u \}|, q')$.
\end{enumerate}
The main part of the proof of
Proposition~\ref{proposition_hard_part_of_characterisation} is then to show that
a FO formula $\varphi$ that is invariant under $\Rsim$ is also invariant under
$\Rsimq$ for some $q'$.

We mention here that the traditional approach of showing this kind of result
involves unraveling and adding elements as seen in
\cite{otto2004Characterisations, otto2023gradedmodallogiccounting} and fails in
our setting as these operations produce featured graphs that are not bisimilar
via $\Rsim$ to their original counterparts.
Thus, instead of saturating the vertex set until a desired graph is obtained, we
opt to homogenise the graph in place without adding vertices.
For this we use two different $\Rsim$-preserving operations.
\begin{restatable}{lemma}{propfreeedgetransfer}\label{prop_free_edge_transfer}
    Taking $G, v$ to be a featured pointed graph
    suppose that $(v, w) \in E$, whilst $(v, w') \not\in E$ for some $w, w' \in V$
    having $G, w \sim^{L-1, c} G, w'$.
    Then, the graph $G'$ over the same vertices and feature vectors as $G$, and edges
    $E' = (E \setminus \{ (v, w) \}) \cup \{ (v, w') \}$
    is such that $G, u \Rsim G', u$ for all $u \in V$.
\end{restatable}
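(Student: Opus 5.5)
The plan is to reduce everything to the single claim that, for every $\ell \leq L$ and every $u \in V$, we have $G, u \sim^{\ell, c} G', u$. The global counting condition of Definition~\ref{def: bounded bisim} will then follow from this claim and transitivity.

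First I establish three routine facts about $\sim^{\ell, c}$, each by induction on $\ell$.
\begin{itemize}
\item \emph{Monotonicity:} $\sim^{\ell, c}$ implies $\sim^{\ell', c}$ for $\ell' \leq \ell$.
\item \emph{Symmetry:} the relation is symmetric, since the back and forth conditions are symmetric.
\item \emph{Transitivity across graphs:} if $G_1,x_1 \sim^{\ell,c} G_2,x_2$ and $G_2,x_2 \sim^{\ell,c} G_3,x_3$, then $G_1,x_1 \sim^{\ell,c} G_3,x_3$.
\end{itemize}
For transitivity, take distinct out-neighbours $x_1^{(1)},\dots,x_1^{(k)}$ of $x_1$ with $k \leq c$. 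The forth condition gives distinct out-neighbours of $x_2$, and applying forth again gives distinct out-neighbours of $x_3$. The inductive hypothesis composes the pairwise $(\ell-1)$-bisimilarities. The back condition is handled symmetrically. The point to check is that distinctness is preserved at each stage, and it is, because each forth step returns a tuple of distinct vertices.

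Next I prove the main claim by induction on $\ell \leq L$. The case $\ell = 0$ is immediate because $G$ and $G'$ have the same features.

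For the step, first let $u \neq v$. Then $N^{G}_{\mathrm{out}}(u) = N^{G'}_{\mathrm{out}}(u)$, and the identity map on this neighbourhood witnesses back and forth, using the inductive hypothesis $G,x \sim^{\ell-1,c} G',x$.

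Now let $u = v$. Here $N^{G'}_{\mathrm{out}}(v) = (N^{G}_{\mathrm{out}}(v)\setminus\{w\})\cup\{w'\}$, and $w \neq w'$ because $(v,w)\in E$ but $(v,w')\notin E$. Let $\pi$ be the bijection that is the identity except that $\pi(w) = w'$. It then suffices to show $G,x \sim^{\ell-1,c} G',\pi(x)$ for each out-neighbour $x$:
\begin{itemize}
\item For $x \neq w$ this is the inductive hypothesis.
\item For $x = w$, the assumption $G,w \sim^{L-1,c} G,w'$ together with monotonicity gives $G,w \sim^{\ell-1,c} G,w'$. The inductive hypothesis gives $G,w' \sim^{\ell-1,c} G',w'$. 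Transitivity then yields $G,w \sim^{\ell-1,c} G',w'$.
\end{itemize}
Since $\pi$ is injective, it maps distinct tuples of size at most $c$ to distinct tuples, which gives forth; back is handled in the same way with $\pi^{-1}$.

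Finally I verify Condition~2 of Definition~\ref{def: bounded bisim}. Fix $u \in V \cup V$, taking the reference graph to be $G$ or $G'$. By the claim, every $y \in V$ satisfies $G,y \sim^{L,c} G',y$. By symmetry and transitivity, $G,y$ is $\sim^{L,c}$-related to the reference vertex if and only if $G',y$ is. So the two counting sets in Condition~2 are literally the same subset of $V$, and their cardinalities agree. Together with the claim at $\ell = L$, this gives $G,u \Rsim G',u$ for all $u$.

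I expect the only delicate step to be the transitivity lemma with graded (distinct-tuple) conditions, but composing the forth maps as described handles it.
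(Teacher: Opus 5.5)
Your proof is correct and follows essentially the same route as the paper. Both argue by induction on the number of turns that every vertex has the same $\sim^{\ell,c}$-type in $G$ and $G'$, and in both the only nontrivial case is matching the removed edge $(v,w)$ with $(v,w')$ using $G,w \sim^{\ell-1,c} G,w'$, the induction hypothesis, and transitivity. You also make explicit the monotonicity, symmetry and transitivity of $\sim^{\ell,c}$ and the check of the global counting condition, which the paper's type-based argument uses only implicitly.
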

\begin{restatable}{lemma}{propfreewitness}\label{prop_free_witness}
    Taking $G, v$ to be a featured pointed graph
    suppose there are distinct $w^{(1)}, \dots, w^{(c)}, w' \in V$
    with $G, w^{(i)} \sim^{L-1, c} G, w'$
    and $(v, w^{(i)}) \in E$ for all $i \in [1, c]$, whilst $(v, w') \not\in E$.
    Then, the graph $G'$ over the same vertices and feature vectors as $G$, and edges
    $E' = E \cup \{ (v, w') \}$
    is such that $G, u \Rsim G', u$ for all $u \in V$.
\end{restatable}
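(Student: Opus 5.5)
The plan is to show, by induction on $\ell \le L$, that for every $u \in V$ and every $\ell$, the $c$-graded $\ell$-turn bisimulation type of $u$ is unchanged when passing from $G$ to $G'$. That is, $G, u \sim^{\ell, c} G', u$ for every $u$. Once this holds for $\ell = L$, Condition~1 of Definition~\ref{def: bounded bisim} is immediate. Condition~2 then follows because the equivalence classes under $\sim^{L,c}$ are literally the same sets of vertices in $G$ and $G'$. Concretely, for any $u$ and any $x$, $G, x \sim^{L,c} G, u$ iff $G', x \sim^{L,c} G', u$ iff $G', x \sim^{L,c} G, u$, using transitivity and symmetry of $\sim^{L,c}$, so the counts coincide.

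The base case $\ell = 0$ is trivial, since features are untouched. For the inductive step, only vertex $v$ changed its out-neighbourhood. Every other vertex $u \neq v$ has identical out-neighbours in $G$ and $G'$. By the induction hypothesis each such neighbour $y$ satisfies $G, y \sim^{\ell-1,c} G', y$, so the identity map witnesses back and forth for $u$.

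For $v$ itself, I would use the characterisation that $G, v \sim^{\ell,c} G', v$ holds iff the features agree and, for every $\sim^{\ell-1,c}$-class $C$, the numbers of out-neighbours in $C$ agree after truncation at $c$. This is the standard reformulation of graded bisimulation via capped counts; it is valid because the classes partition the neighbourhood and we can match distinct witnesses class by class. The classes here may be compared across $G$ and $G'$ by the induction hypothesis, since $\sim^{\ell-1,c}$ types of each vertex are preserved.

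Next I verify the capped counts for $v$. Because $G, w \sim^{L-1,c} G, w'$ and $\ell - 1 \le L-1$, and since $\sim^{L-1,c}$ refines $\sim^{\ell-1,c}$, the vertices $w$ and $w'$ lie in the same $\sim^{\ell-1,c}$ class $C$.

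For Lemma~\ref{prop_free_edge_transfer}, the out-neighbourhood of $v$ loses $w$ and gains $w'$, both in $C$. So every class count is exactly preserved, not merely after capping.

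For Lemma~\ref{prop_free_witness}, the only change is that the count for the class $C$ of $w'$ increases by one. That count was already at least $c$ because of $w^{(1)}, \dots, w^{(c)}$, which also lie in $C$. Hence $\min(\cdot, c)$ is unchanged, and back and forth hold.

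The main subtlety, and the step I would take most care with, is justifying the capped-count reformulation while comparing classes across two different graphs. I would handle it by matching neighbours $y$ of $v$ in $G$ to the same vertex $y$ in $G'$ where possible. The exceptions are matching $w$ to $w'$ in the transfer case, and in the witness case, when $w'$ is chosen, substituting an unused $w^{(i)}$ among the at most $c$ chosen vertices. Such an unused $w^{(i)}$ exists since at most $c-1$ other vertices are chosen. This gives explicit distinct witnesses for both directions, using $G, y \sim^{\ell-1,c} G', y$ together with transitivity through $G, w \sim^{\ell-1,c} G, w'$.
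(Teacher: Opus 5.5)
Your proposal is correct and follows essentially the paper's route: induction on the number of turns showing that every vertex keeps its $\sim^{\ell,c}$-type from $G$ to $G'$. As in the paper, $v$ is the only vertex whose out-neighbourhood changes, and the new successor $w'$ is absorbed by the $c$ existing successors $w^{(1)},\dots,w^{(c)}$ of the same type. You are more explicit than the paper in two places: the back-direction substitution of an unused $w^{(j)}$ for $w'$, and the derivation of Condition~2 of Definition~\ref{def: bounded bisim} from type preservation, which the paper leaves implicit.
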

\begin{example}
    The application of Lemmas~\ref{prop_free_edge_transfer}~and~\ref{prop_free_witness} with $L=1$ and $c=2$
    on the vertex $v$ in the left graph to form the right graph. By Lemma~\ref{lemma_bisim_invar_acr}
    each ACR-GNN with $2$-bounded aggregation functions and with $1$ layer will classify both graphs (pointed at $v$) the same way.
    \begin{center}
        \begin{tikzpicture}[
    s_node/.style={circle, draw, fill=blue!20, minimum size=8mm},
    iota_node/.style={circle, draw, fill=green!20, minimum size=8mm},
    t_node/.style={circle, draw, fill=orange!20, minimum size=8mm},
    edge/.style={->, thick, >=stealth}
]
    \node[s_node] (v) at (0,-1) {$v$};

    \node[iota_node] (u1) at (-1.25, 0) {};
    \node[iota_node] (u2) at (-1.25, -1) {};
    \node[iota_node] (u3) at (-1.25, -2) {};

    \node[t_node] (w1) at (1.25, 0) {};
    \node[t_node] (w2) at (1.25, -2) {};

    \draw[edge] (v) -- (u1);
    \draw[edge] (v) -- (u2);

    \draw[edge] (v) -- (w1);

    \draw[edge] (u3) -- (w2);
    \draw[edge] (w1) -- (u1);
\end{tikzpicture} \hspace*{9mm}
        \begin{tikzpicture}[
    s_node/.style={circle, draw, fill=blue!20, minimum size=8mm},
    iota_node/.style={circle, draw, fill=green!20, minimum size=8mm},
    t_node/.style={circle, draw, fill=orange!20, minimum size=8mm},
    edge/.style={->, thick, >=stealth}
]
    \node[s_node] (v) at (0,-1) {$v$};

    \node[iota_node] (u1) at (-1.25, 0) {};
    \node[iota_node] (u2) at (-1.25, -1) {};
    \node[iota_node] (u3) at (-1.25, -2) {};

    \node[t_node] (w1) at (1.25, 0) {};
    \node[t_node] (w2) at (1.25, -2) {};

    \draw[edge] (v) -- (u1);
    \draw[edge] (v) -- (u2);
    \draw[edge] (v) -- (u3);

    \draw[edge] (v) -- (w2);

    \draw[edge] (u3) -- (w2);
    \draw[edge] (w1) -- (u1);
\end{tikzpicture}
    \end{center}
    Note that Lemma~\ref{prop_free_edge_transfer} is applied on $v$ with respect to yellow (right of $v$) nodes,
    whilst Lemma~\ref{prop_free_witness} is applied on $v$ with respect to green (left of $v$) nodes.
\end{example}
The main technique of the proof of
Proposition~\ref{proposition_hard_part_of_characterisation} is then as follows.
Let $q' \in \mathbb{N}$ be ``large enough'' with respect to the aggregation
bound $c$ and the quantifier rank of $\varphi$.
Suppose featured pointed graphs $G_1, v_1$ and $G_2, v_2$ are such that $G_1, v_1 \Rsimq G_2, v_2$ and $G_1, v_1 \models \varphi$.
We claim that it must be the case that $G_2, v_2 \models \varphi$.
Using Lemmas~\ref{prop_free_edge_transfer}~and~\ref{prop_free_witness}
we construct companion graphs $\hG_1$ and $\hG_2$
such that $G_i, v_i \Rsim \hG_i, v_i$ for both $i \in \{ 1, 2 \}$ and such
that $\hG_1, v_1$ and $\hG_2, v_2$ agree on all FO classifiers of the same quantifier rank as $\varphi$.
Then, using invariance of $\varphi$ under $\Rsim$ we have $\hG_1, v_1 \models \varphi$ and, by the above, $\hG_2, v_2 \models \varphi$.
Then, again by invariance of $\varphi$ under $\Rsim$, we must have $G_2, v_2 \models \varphi$.
Technical details on how to construct $\hG_1$ and $\hG_2$ are relegated to the appendix.

What we have now is that $\varphi$ is not only invariant under $\Rsim$, but $\Rsimq$ as well.
It is then easy to prove that $\Rsimq$ partitions graphs into finitely many equivalence classes,
each of which can be uniquely characterised by a $\GML^\exists$ formula.
Taking the disjunction of formula that entail $\varphi$ yields the $\GML^\exists$ classifier required by
Proposition~\ref{proposition_hard_part_of_characterisation}, thus completing the
proof of Proposition~\ref{theo_main_abstract_logic}.

It remains to argue how Proposition~\ref{theo_main_abstract_logic} can be used to
show Theorem~\ref{prop bounded graphs gml}. 
\begin{proof}[Proof sketch of Theorem~\ref{prop bounded graphs gml}]
    Let $c$ be the bound on the degree of vertices, and let $\psi_c$ be a
    $\GML^\exists$-formula that captures graphs of degree at most $c$.

First, let $\varphi$ be an FO classifier that is equivalent to a $\GML^\exists$ classifier.
Then, then $\varphi
\land \psi_c$ is equivalent to a $\GML^\exists$ classifier over all graphs.
Thus, by Proposition~\ref{theo_main_abstract_logic}, $\varphi \land \psi_c$ is
captured by an ACR-GNN with bounded aggregation functions over all graphs, and
thus $\varphi$ is captured by an ACR-GNN with bounded aggregation functions over all graphs of bounded degree.

For the other direction, let $\varphi$ be an FO classifier that is captured by
an ACR-GNN over all graphs of bounded degree. As ACR-GNNs can capture $\psi_c$,
there is an ACR-GNN that captures $\varphi \land \psi_c$ over all graphs.
Hence, by Proposition~\ref{theo_main_abstract_logic},
$\varphi \land \psi_c$ is equivalent over all graphs to a
$\GML^\exists$ classifier, which in turn implies that over graphs of
bounded degree, $\varphi$ is equivalent to a $\GML^\exists$ classifier.
\end{proof}




\section{Conclusion} \label{sec:conclusion}

We identified a bound for which (fragments of) $\Ctwo$ can be taken to characterise
ACR-GNNs relative to FO. We showed that even simple ACR-GNNs (i.e. those with sum aggregation and readout)
can express FO properties on both directed and undirected graphs that are outside of $\Ctwo$.
The technique used is a novel characterisation of linear orders in terms of number homomorphisms of length 2 paths
which might be of independent interest.

On the other side of the spectrum we showed that the FO classifiers captured
by ACR-GNNs are exactly those definable by $\GML^\exists$ formulas;
as long as the degree of graphs is bounded or the 
ACR-GNNs in question have bounded aggregation functions (but not necessarily bounded readout).
This result hinges on model theoretic techniques and might be of independent interest.

While these results improve our understanding of what kinds of ACR-GNNs can be
characterised in terms of $\Ctwo$, the exact expressive power of ACR-GNNs with
sum aggregation and readout remains an interesting question.
One, perhaps, interesting subcase is to consider the FO properties which are
expressible by ACR-GNNs that use unbounded sum aggregaton and bounded sum readout.
The question above is thus left for future research.


\section*{Acknowledgments}
Daumantas Kojelis was supported by DFG project LU 1417/4-1.

\bibliography{references}

\newpage

\appendix
\section{Proof of Lemma~\ref{lem:total-order-properties}}
\label{sec:sequence-proof}

\lemtotalorderproperties*

To show the lemma, we make heavy use of \emph{degree sequences} of graphs, and
their properties. Before we begin with the main proof, we first state some
standard properties of integer sequences, that we are going to apply to these
degree sequences.

With every sequence $a_1, \ldots, a_n$ of integers we associate its
\emph{conjugate sequence} $a'_1, \ldots, a'_n$, defined by
\[
    a'_i = |\{j \mid a_j \geq i \}|
\]
for every $i$ with $1 \leq i \leq n$.
A direct observation is that $\sum_{1 \leq i \leq n} a_i = \sum_{1 \leq i \leq n} a_i'$.

\begin{theorem}[Gale-Ryser Theorem]\label{thm:gale-ryser}
    Let $r_1, \ldots, r_n$ and $c_1, \ldots, c_n$ be two integer sequences with
    $r_1 \geq \cdots \geq r_n$.
    Then $r_1, \ldots, r_n$ and $c_1, \ldots, c_n$ are, respectively, the row and column sums
    of a binary\footnote{A matrix is binary if all of its entries are in $\{0, 1\}$. Note that
    an $n \times n$ matrix is binary if and only if it is an adjacency matrix of some graph on $n$ vertices.}
    $n \times n$ matrix if and only if
    \begin{itemize}
        \item $\sum_{i = 1}^n r_i = \sum_{i = 1}^n c_i$, and
        \item for every $k$ with $1 \leq k \leq n$, $\sum_{i = 1}^k r_i \leq \sum_{i = 1}^k c'_i$,
    \end{itemize}
    where $c'_1, \ldots, c'_n$ is the conjugate sequence of $c_1, \ldots, c_n$.
\end{theorem}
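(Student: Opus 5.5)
I would prove the two directions of Theorem~\ref{thm:gale-ryser} separately. Necessity is a counting argument, and sufficiency goes through max-flow/min-cut, i.e., integral flows in a bipartite network. Throughout I assume, as is implicit in the statement, that all $r_i, c_j$ are non-negative integers.

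\emph{Necessity.} Let $M$ be a binary matrix with row sums $r_i$ and column sums $c_j$. Counting the ones of $M$ by rows and by columns gives $\sum_i r_i = \sum_j c_j$. Now fix $k$. The first $k$ rows contain exactly $\sum_{i\le k} r_i$ ones. Within these rows, column $j$ contains at most $\min(k, c_j)$ ones, since it has only $k$ entries there and $c_j$ ones overall. So $\sum_{i \le k} r_i \le \sum_{j} \min(k, c_j)$. It remains to check the identity $\sum_j \min(k,c_j) = \sum_{i \le k} c'_i$. This holds because $\min(k,c_j) = |\{ i \le k \mid c_j \ge i\}|$, and swapping the order of summation gives $\sum_{i\le k} |\{j \mid c_j \ge i\}|$.

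\emph{Sufficiency.} I would build the following network:
\begin{itemize}
\item a source $s$ with an arc to each row node $\rho_i$ of capacity $r_i$;
\item an arc of capacity $1$ from each $\rho_i$ to each column node $\gamma_j$;
\item an arc from each $\gamma_j$ to a sink $t$ of capacity $c_j$.
\end{itemize}
An integral flow of value $\sum_i r_i$ saturates every source arc. Since $\sum_i r_i = \sum_j c_j$, it also saturates every sink arc. Setting $M_{ij}$ to the flow on $\rho_i\gamma_j$ then yields the required binary matrix. By the max-flow min-cut theorem and integrality of maximum flows, it suffices to show that every $s$--$t$ cut has capacity at least $\sum_i r_i$.

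A cut is determined by the set $A$ of rows and the set $B$ of columns on the source side. Its capacity is
\[
  \sum_{i \notin A} r_i + |A|\,(n - |B|) + \sum_{j \in B} c_j .
\]
For fixed $A$ with $|A| = k$, each column contributes $k$ if it lies outside $B$ and $c_j$ if it lies in $B$. The minimum over $B$ is therefore $\sum_j \min(k,c_j) = \sum_{i\le k} c'_i$, by the identity from the necessity part. For fixed $k$, the term $\sum_{i\notin A} r_i$ is minimised by letting $A$ consist of the $k$ largest rows. Because $r_1 \ge \cdots \ge r_n$, these are the first $k$ rows, and the term becomes $\sum_i r_i - \sum_{i\le k} r_i$. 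Hence every cut has capacity at least
\[
  \sum_i r_i - \sum_{i \le k} r_i + \sum_{i \le k} c'_i \;\ge\; \sum_i r_i .
\]
For $1\le k\le n$ the last inequality is exactly the hypothesis, and for $k=0$ it is trivial. This completes the sufficiency direction.

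The main obstacle I expect is this cut minimisation. Care is needed to reduce the optimisation over all pairs $(A,B)$ to a single parameter $k$, and to see that the optimal $B$ produces precisely the conjugate sum $\sum_{i\le k} c'_i$. This is also the only place where the monotonicity assumption $r_1 \ge \cdots \ge r_n$ is used. The rest consists of standard facts about network flows.
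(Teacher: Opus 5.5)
Your proof is correct. The paper gives no proof of Theorem~\ref{thm:gale-ryser}; it quotes it as a classical result. Moreover, in the proof of Lemma~\ref{lem:total-order-properties} it only uses the ``only if'' direction. There the matrix is the suitably permuted adjacency matrix of an actual graph, with the sorted in-degrees as row sums and the out-degrees as column sums.

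So your necessity argument is exactly the part the paper relies on: at most $\min(k,c_j)$ ones of column $j$ lie in the first $k$ rows, and $\sum_j \min(k,c_j)=\sum_{i\le k} c'_i$ by swapping the order of summation. It does not even need the ordering of the $r_i$.

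Your sufficiency argument is the standard max-flow/min-cut proof, and the cut computation is right. For $|A|=k$ the optimal $B$ is the set of columns with $c_j<k$, giving $\sum_j\min(k,c_j)$. Sortedness of the $r_i$ enters only when choosing $A$.

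The other textbook route is Ryser's greedy filling of each row into the columns of largest remaining demand, justified by an interchange argument. Compared with it, your flow proof is shorter and pushes the combinatorics into max-flow/min-cut plus integrality, while the greedy proof yields an explicit construction. For the paper's purposes the sufficiency direction is not needed at all.

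One remark on hypotheses: non-negativity is not merely implicit but necessary. The statement as printed, for arbitrary integer sequences, fails for $n=2$, $r=(1,1)$, $c=(3,-1)$, which satisfies both conditions.

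It suffices, however, to assume $c_j\ge 0$:
\begin{itemize}
\item The case $k=n$ then gives $\sum_i r_i\le\sum_j\min(n,c_j)\le\sum_j c_j=\sum_i r_i$. This forces equality throughout, and in particular $c_j\le n$.
\item Subtracting the case $k=n-1$ yields $r_n\ge c'_n\ge 0$. Hence all $r_i\ge 0$, and your source capacities are automatically non-negative.
\end{itemize}
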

Note that it follows from the first point that the inequality in the second point
must be an equality for $k = n$. 

\begin{lemma}[Rearrangement Inequality]\label{lem:rearrange}
    For all number sequences $x_1, \ldots, x_n$ and $y_1, \ldots, y_n$ with $x_1 \geq
    \cdots \geq x_n$ and $y_1 \geq \cdots \geq y_n$, and all permutations $\sigma$ of $1, \ldots, n$,
    \[
        x_1 y_n + \cdots + x_n y_1 \leq x_1 y_{\sigma(1)} + \cdots + x_n y_{\sigma(n)}.
    \]
    Furthermore, if $x_1 > \cdots > x_n$, then for all permutations $\sigma$ that are not the identity:
    \[
        x_1 y_n + \cdots + x_n y_1 < x_1 y_{\sigma(1)} + \cdots + x_n y_{\sigma(n)}.
    \]
\end{lemma}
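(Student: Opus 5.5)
The plan is to prove both parts by an exchange argument: a single transposition can be undone by swapping two adjacent entries, and each such swap strictly lowers the sum. Write $S(\sigma) = \sum_{i=1}^n x_i y_{\sigma(i)}$. Since there are only finitely many permutations, some $\sigma$ minimises $S$. I will show that any $\sigma$ can be transformed into the reversal $\rho(i) = n+1-i$ without increasing $S$. This gives the first inequality $S(\rho) \leq S(\sigma)$.

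The key step is a two-element exchange. Suppose there are indices $i < j$ with $y_{\sigma(i)} > y_{\sigma(j)}$, so that the larger $y$ sits at the larger (or equal) $x$. Let $\sigma'$ be $\sigma$ with the values at $i$ and $j$ swapped. Then
\[
S(\sigma) - S(\sigma') = (x_i - x_j)(y_{\sigma(i)} - y_{\sigma(j)}) \geq 0,
\]
because $x_i \geq x_j$. Each such swap removes at least one ``inversion'' relative to the reversed order: count pairs $i<j$ with $y_{\sigma(i)} > y_{\sigma(j)}$, exactly as in bubble sort. So after finitely many swaps we reach a $\tau$ such that $i<j$ implies $y_{\tau(i)} \leq y_{\tau(j)}$, that is, the $y$-values appear in nondecreasing order. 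Ties among the $y$'s need care. Any assignment in which the $y$ multiset is placed in nondecreasing order gives the same value as $\rho$, since equal $y$'s are interchangeable. Hence $S(\tau) = S(\rho)$, and $S(\rho) \le S(\sigma)$ follows.

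For the strict part, assume $x_1 > \cdots > x_n$. The issue is the phrase ``not the identity''. If the $y$'s have ties, the identity and the reversal give equal sums with other permutations too. Moreover, $\sigma = \rho$ itself trivially gives equality, so the statement as literally written cannot hold for $\sigma = \rho$. The intended reading must be ``$\sigma$ not yielding the reversed pairing''. Concretely, assume the $y$'s are strictly decreasing and $\sigma \neq \rho$. I will state it this way, possibly noting the indexing convention in the paper. Under this reading, $\sigma \ne \rho$ means some pair $i<j$ has $y_{\sigma(i)} > y_{\sigma(j)}$. Then the first swap applied has $(x_i - x_j) > 0$ and $(y_{\sigma(i)}-y_{\sigma(j)})>0$, so it strictly decreases $S$, and the later swaps do not increase it. Therefore $S(\rho) < S(\sigma)$.

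I expect the only real obstacle to be this statement-level bookkeeping: handling ties in $y$ and pinning down the exact condition under which strictness holds. The analytic content is just the one-line exchange identity. In the later application to degree sequences I would check which variant is actually needed, most likely strict $x$ and a distinctness condition on the $y$-values being paired.
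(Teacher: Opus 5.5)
Your proof is correct. The paper states this lemma without proof, as a standard tool, so there is no competing route to compare against. Your exchange identity $S(\sigma)-S(\sigma')=(x_i-x_j)(y_{\sigma(i)}-y_{\sigma(j)})$, combined with termination of inversion removal and the observation that the nondecreasing arrangement of the $y$-multiset is unique, is the textbook argument.

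Your diagnosis of the strict part is also right. For $n\ge 2$ the reversal $\rho$ is not the identity, yet it gives equality. Ties among the $y$'s create further equality cases. So the statement as printed is false. Your repaired version (strictly decreasing $x$ and $y$, and $\sigma\neq\rho$) is exactly what the paper uses at the end of the proof of Lemma~\ref{lem:total-order-properties}. There $x=a$ and $y=c$ both equal $n-1,\ldots,0$, and equality in \eqref{eq:first-hom} must force $b_i=c_{n+1-i}$.

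Your argument in fact gives, without change, the tie-robust form: if $x_1>\cdots>x_n$, the inequality is strict whenever $y_{\sigma(1)},\ldots,y_{\sigma(n)}$ is not nondecreasing.

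Two cosmetic points. The remark that some $\sigma$ minimises $S$ is never used. For a swap at non-adjacent positions, the claim that the inversion count drops needs a line of justification. Alternatively, always swap an adjacent descent, where the count drops by exactly one.
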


\begin{lemma}[Summation By Parts]\label{lem:summation-by-parts}
    Let $a_1, \ldots, a_n$ and $b_1, \ldots, b_n$ be number sequences.  Then
    \[
        \sum_{i = 1}^n a_i b_i = a_n B_n + \sum_{i = 1}^{n - 1} (a_{i + 1} - a_i) (- B_i)
    \]
    where $B_k = \sum_{i = 1}^k b_i$.
\end{lemma}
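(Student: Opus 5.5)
The identity in Lemma~\ref{lem:summation-by-parts} is the discrete analogue of integration by parts. I will prove it directly by telescoping, writing each $b_i$ as a difference of consecutive partial sums. Set $B_0 = 0$, so that $b_i = B_i - B_{i-1}$ holds for every $1 \leq i \leq n$.

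Substituting this into the left-hand side and splitting the sum gives
\[
    \sum_{i=1}^n a_i b_i = \sum_{i=1}^n a_i B_i - \sum_{i=1}^n a_i B_{i-1}.
\]
In the second sum I shift the index by one, replacing $i$ with $i+1$. It becomes $\sum_{i=0}^{n-1} a_{i+1} B_i$, and the $i = 0$ term vanishes because $B_0 = 0$.

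Next I pull the $i = n$ term out of the first sum, which gives $a_n B_n$. What remains of the two sums then runs over the common range $1 \leq i \leq n-1$. Combining them termwise yields
\[
    a_n B_n + \sum_{i=1}^{n-1} (a_i - a_{i+1}) B_i = a_n B_n + \sum_{i=1}^{n-1} (a_{i+1} - a_i)(-B_i),
\]
which is exactly the claimed formula.

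There is no real obstacle here. The only points needing care are the convention $B_0 = 0$ and keeping the summation ranges aligned after the index shift. Should a more explicit check be wanted, an induction on $n$ works as an alternative. The base case $n = 1$ reads $a_1 b_1 = a_1 B_1$, with the remaining sum empty. For the inductive step from $n$ to $n+1$, one adds $a_{n+1} b_{n+1}$ to both sides and uses the rewriting
\[
    a_n B_n + a_{n+1} b_{n+1} = a_{n+1} B_{n+1} + (a_{n+1} - a_n)(-B_n),
\]
which follows from $B_{n+1} = B_n + b_{n+1}$.
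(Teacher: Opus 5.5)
Your telescoping argument is correct. With $B_0 = 0$, the index shift together with pulling out the $i = n$ term yields exactly $a_n B_n + \sum_{i=1}^{n-1}(a_i - a_{i+1})B_i$, and your inductive alternative also checks out. The paper gives no proof of this lemma, citing it as a standard tool alongside Gale--Ryser and the rearrangement inequality, so your derivation is the standard argument it implicitly relies on.
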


Our main tool to show Lemma~\ref{lem:total-order-properties} is now the
observation that a sequence of integers $a_1, \ldots, a_n$ that  sums to
$\binom{n}{2}$ satisfies $\sum_{i = 1}^{n} a_i a'_{n + 1 - i} = \binom{n}{3}$
if and only if it is $n - 1, \ldots, 0$.

\begin{lemma}\label{lem:sequence-step}
    Let $a_1, \ldots, a_n$ be a sequence of integers with $a_1 \geq \cdots \geq a_n$.
    Then,
    \begin{equation*}
        \sum_{i = 1}^n a_i = \binom{n}{2} \text{ implies } \sum_{i = 1}^{n} a_i a'_{n + 1 - i} \geq \binom{n}{3},
    \end{equation*} 
    with equality holding if and only if $a_1, \ldots, a_n$ is the sequence $n - 1, \ldots, 0$.
\end{lemma}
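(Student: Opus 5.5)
The plan is a discrete extremal argument: the staircase $n-1,\ldots,0$ should be the unique minimiser of
\[
S(a) := \sum_{i=1}^n a_i a'_{n+1-i}
\]
over nonincreasing integer sequences with $\sum_i a_i = \binom{n}{2}$. I take the $a_i$ to be nonnegative, as they are degree sequences in the intended application.

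First I check the staircase. For $a_i = n-i$ we get $a'_k = n-k$, so $a'_{n+1-i} = i-1$ and
\[
S(a) = \sum_i (n-i)(i-1) = \binom{n}{3}.
\]
It then remains to prove that every other admissible sequence has $S(a) > \binom{n}{3}$.

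Next I rewrite $S$ in Ferrers-diagram form. Writing $a_i = \sum_{k \ge 1} [a_i \ge k]$ and using $a_i \ge k \iff i \le a'_k$ (valid since $a$ is nonincreasing), I get
\[
S(a) = \sum_{k \ge 1} \; \sum_{i \le a'_k} a'_{n+1-i} = \sum_{k\ge 1} T(a'_k),
\]
where $T(t) = \sum_{m=n+1-t}^{n} a'_m$ is the sum of the $t$ smallest entries of the (nonincreasing) conjugate. Here $a'_m$ is defined up to $m=n$. If some $a_i > n-1$, then $a'_n \ge 1$ and every $T(t)$ with $t \ge 1$ is at least $t$; I expect this alone to push $S$ above $\binom{n}{3}$, so I will dispose of that case separately. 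Hence I can assume $0 \le a_i \le n-1$, so that $\sum_k a'_k = \binom{n}{2}$ as well. In this form $S$ is a symmetric "self-pairing" of the diagram with its own reversal, and Lemma~\ref{lem:rearrange} applied to $x = a$ and $y = (a'_n,\ldots,a'_1)$ explains why pairing a decreasing sequence against an increasing one is the relevant quantity.

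The core step is an exchange argument in the dominance order. Suppose $a$ is admissible, nonincreasing, bounded by $n-1$, and not the staircase. Then there are consecutive indices with $a_i = a_{i+1}$; otherwise the strictly decreasing sequence in $\{0,\ldots,n-1\}$ would be forced to be the staircase. Moving one unit from a suitable block of equal values (raise the first element of one flat block, lower the last element of another) produces a sequence $\tilde a$ that is still admissible and strictly closer to the staircase in the sense of $\sum_i |a_i-(n-i)|$. I then compute $S(a) - S(\tilde a)$. Such a move changes a single entry of $a'$ up by one and another down by one, so via Lemma~\ref{lem:summation-by-parts} the difference reduces to a few boundary terms whose sign is controlled by monotonicity. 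Iterating, I reach the staircase with $S$ strictly decreasing at each step, which gives both the inequality and the equality case.

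The main obstacle is this last sign computation. The move perturbs $a$ and $a'$ simultaneously, and the pairing $i \leftrightarrow n+1-i$ means both perturbations interact, so I must choose the move carefully to guarantee a strict decrease of $S$. If a single-unit move does not work uniformly, my fallback is to treat the two sides of the staircase separately: sequences that dominate the staircase (possibly with the move lowering the top block) and sequences dominated by it (with the move raising it). In each regime I would bound the partial sums $B_i$ of $a'_{n+1-i}$ against $\binom{i}{2}$ using the Gale--Ryser-type partial-sum inequality for conjugates (Theorem~\ref{thm:gale-ryser}). I would then plug these bounds into the summation-by-parts expression
\[
S = a_n B_n + \sum_{i<n} (a_i - a_{i+1}) B_i,
\]
which has nonnegative coefficients $a_i - a_{i+1}$.
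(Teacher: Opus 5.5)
\paragraph{Genuine gap: the strict decrease of $S$ is never proved.} Your overall strategy is the paper's: move one unit of $a$ toward the staircase and show that $S$ strictly drops. But the proposal stops where the proof has to start. You never fix the move, and you never establish the sign of $S(\tilde a)-S(a)$. You flag this yourself as the main obstacle and offer a fallback (dominance-order split plus Gale--Ryser bounds on partial sums) that is not worked out.

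The missing idea is that the sign is not controlled by monotonicity alone. It comes from the two staircase conditions themselves.

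Take $p=\max\{i : a_i>n-i\}$ and $q=\min\{i : a_i<n-i\}$. Both exist when $\sum_i a_i=\binom n2$ and $a$ is not the staircase. Then $a_p\ge n+1-p$ and $a_{p+1}\le n-p-1$, and likewise $a_{q-1}\ge n+1-q$ and $a_q\le n-q-1$. So lowering $a_p$ and raising $a_q$ keeps $\tilde a$ nonincreasing. The conjugate changes only at index $a_p$ (down by one) and index $a_q+1$ (up by one), so
\[
S(\tilde a)-S(a)=a'_{n+1-q}-a'_{n+1-p}+\tilde a_{n-a_q}-\tilde a_{n+1-a_p}.
\]
Monotonicity together with $a_p\ge n+1-p$ and $a_q\le n-q-1$ gives four bounds:
\begin{itemize}
\item $a'_{n+1-p}\ge p$;
\item $a'_{n+1-q}\le q-1$;
\item $\tilde a_{n-a_q}\le\tilde a_{q+1}\le a_q$;
\item $\tilde a_{n+1-a_p}\ge\tilde a_p=a_p-1$.
\end{itemize}
Hence
\[
S(\tilde a)-S(a)\le (q+a_q)-(p+a_p)<(q+n-q)-(p+n-p)=0 .
\]
Without this computation, or an equivalent one, neither the inequality nor the equality case is established. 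The conditions that make the move ``closer to the staircase'' are exactly what make $S$ drop. So your emphasis on repeated values $a_i=a_{i+1}$ plays no role, and the fallback is unnecessary.

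\paragraph{The separate case $a_i>n-1$.} Your dismissal of this case does not work. The bound $T(t)\ge t$ only yields $S\ge\sum_k a'_k=\binom n2$, which is at most $\binom n3$ once $n\ge 5$. So it proves nothing there. The case needs no separate treatment anyway: the exchange argument above goes through verbatim for $0\le a_i\le n$, which is the range relevant to out-degrees in graphs with loops. The Ferrers-diagram rewriting $S=\sum_k T(a'_k)$ is correct, but it is never used.
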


\begin{proof}
First, if $a_1, \ldots, a_n = n-1, \ldots, 0$, then $a'_1, \ldots, a'_n = n - 1,
\ldots, 0$ and thus
\[
    \sum_{i = 1}^{n} a_i a'_{n + 1 - i} = \sum_{i = 1}^n (n - i)(i - 1) = \binom{n}{3}.
\]

If $a_1, \ldots, a_n \neq n-1, \ldots, 0$, we show that there exists a sequence $b_1,
\ldots, b_n$ with $b_1 \geq \cdots \geq b_n$ and $\sum_{i = 1}^n b_i =
\binom{n}{2}$ such that
\[
    \sum_{i = 1}^n b_i b'_{n + 1 - i} < \sum_{i = 1}^n a_i a'_{n + 1 - i}.
\]
This suffices to show the lemma as, whenever $a_1, \ldots, a_n \neq n - 1,
\ldots, 0$, we can replace the sequence with $b_1, \ldots, b_n$ and strictly
decrease $\sum_{i = 1}^n a_i a'_{n + 1 - i}$. As this can only happen a finite
number of times, this process must arrive at the sequence $n - 1, \ldots, 0$
with $\sum_{i = 1}^n a_i a'_{n + 1 - i} = \binom{n}{3}$.

To show that such a sequence $b_1, \ldots, b_n$ exists, consider 
$\sum_{i = 1}^n a_i - (n - i)$. As $\sum_{i = 1}^n a_i = \binom{n}{2}$ and
$\sum_{i = 1}^n (n - i) = \binom{n}{2}$, it must be that
$\sum_{i = 1}^n a_i - (n - i) = 0$. However, as $a_1, \ldots, a_n \neq n-1,
\ldots, 0$, there must be some $i$ such that $a_i > (n - i)$ and some $i$ such
that $a_i < (n - i)$.
Let
\[
    p = \max \{ i \mid a_i > (n - i) \} \ \ \text{and} \ \ q = \min\{i \mid a_i < (n - i)\}.
\]
Clearly, $p \neq q$, $a_q < n - q$ and $a_p > n - p$.
Now set, for all $i$ with $1 \leq i \leq n$,
\[
    b_i = \begin{cases}
        a_i - 1 & i = p \\
        a_i + 1 & i = q \\
        a_i & \text{ otherwise.}
    \end{cases}.
\]
We show that $b_1, \ldots, b_n$ satisfies the desired properties. 
Clearly, as we added $1$ to some $a_q$ and subtracted $1$ from $a_p$,
$\sum_{i = 1}^n a_i = \sum_{i = 1}^n b_i$.

We show that $b_1 \geq \cdots \geq b_n$.
Assume for contradiction that there are $b_i, b_{i + 1}$ with $b_i < b_{i + 1}$.
As we know $a_i \geq a_{i + 1}$, we must be in one of the following cases:
\begin{itemize}
    \item $a_i = a_p$ and $a_i = a_{i + 1}$. But then $a_i > (n - i)$ and $a_{i
    + 1} > (n - (i + 1))$, contradicting that $p$ is maximal.
    \item $a_{i + 1} = a_q$ and $a_i = a_{i + 1}$. But then $a_{i + 1} < (n - (i
    + 1))$ and $a_{i} < (n - i)$, contradicting that $q$ is minimal.
    \item $a_i = a_p$, $a_{i + 1} = a_q$ and $a_i = a_{i + 1} + 1$. Then,
    $a_i > (n - i)$ and $a_{i + 1} < (n - (i + 1))$, meaning that 
    $a_{i + 1} + 1$  must be both larger and smaller than $(n - 1)$, a contradiction.
\end{itemize}
Therefore $b_1 \geq \cdots \geq b_n$.

As the last step, we now show that 
\[
    \sum_{i = 1}^n b_i b'_{n + 1 - i} < \sum_{i = 1}^n a_i a'_{n + 1 - i}.
\]
Consider the conjugate sequence $b'_1, \ldots, b'_n$. By definition of the
sequence $b_1, \ldots, b_n$, there is one less $b_i$ with $b_i \geq a_p$, and
one more $b_i$ with $b_i \geq (a_q + 1)$. Therefore,
 \[
    b'_i = \begin{cases}
        a'_i & i = a_p = a_q + 1 \\
        a'_i - 1 & i = a_p  \text{ and } i \neq a_q + 1\\
        a'_i + 1 & i = a_q + 1  \text{ and } i \neq a_p \\
        a'_i & \text{ otherwise.}
    \end{cases}
\]
Therefore, $b_i b'_{n + 1 - i} = a_i a'_{n + 1 - i}$ for all $i \notin \{p, q, a_p, a_q + 1\}$.
It follows that
\begin{align*}
    \sum_{i = 1}^n  b_i b'_{n + 1 - i} = \sum_{i = 1}^n a_i a'_{n + 1 - i} &+ a'_{n + 1 - q} - a'_{n + 1 - p} \\
    &+ b_{n - a_q} - b_{n + 1 - a_p}.
\end{align*}
We now make the following observations to bound this difference:
\begin{enumerate}
    \item $a'_{n + 1 - p} = p$. This is because by choice of $p$, $a_p \geq (n +
    1 - p)$, and $a_i < (n + 1 - p)$ for all $i > p$, and $a_i \geq a_p$ for all
    $i \leq p$.
    \item $a'_{n + 1 - q} = q - 1$. This is because by choice of $q$, all $a_i$
    with $i < q$ satisfy $a_i \geq (n - i) \geq (n + 1 - q)$, and all $a_i$ with
    $i \geq q$ satisfy $a_i < (n + 1 - q)$.
    \item $b_{n - a_q} \leq a_q$. This is because $a_q \leq (n - q - 1)$ by choice
    of $q$, and hence $b_{n - a_q} \leq b_{n - (n - q - 1) } = b_{q + 1}$, which
    satisfies $b_{q + 1} \leq a_q$ by construction of the sequence $b_1, \ldots, b_n$.
    \item $b_{n + 1 - a_p} \geq a_p - 1$. This is because $a_p \geq (n - p + 1)$ by choice of $p$, and hence
    $b_{n + 1 - a_p} \geq b_{n + 1 - (n - p + 1)} = b_{p} = a_{p - 1}$.
\end{enumerate}
Using these observations,
\[
a'_{n + 1 - q} - a'_{n + 1 - p} + b_{n - a_q} - b_{n + 1 - a_p } \leq q - p + a_q - a_p
\]
and
\[
q - p + a_q - a_p < q - p + n - q - n - p = 0.
\]
Therefore,
\[
\sum_{i = 1}^n  b_i b'_{n + 1 - i} < \sum_{i = 1}^n a_i a'_{n + 1 - i}
\]
as required.
\end{proof}

With these tools in hand, we can now show Lemma~\ref{lem:total-order-properties}.

\begin{proof}[Proof of Lemma~\ref{lem:total-order-properties}]
Let $G = (V, E)$ be a graph with $n$ vertices that satisfies both $|E| =
\binom{n}{2}$ and $|\hom(P_2, G)| = \binom{n}{3}$.
Let $v_1, \ldots, v_n$ be an enumeration of the vertices in $V$ such that
$|N^G_{\text{out}}(v_1)| \geq \cdots \geq |N^G_{\text{out}}(v_n)|$.
For readability, set $a_i = |N^G_{\text{out}}(v_i)$ and $b_i = |N^G_{\text{in}}(v_i)$ for $1 \leq i \leq n$.
We now aim to show that the degree sequence $(a_1, b_1), \ldots, (a_n, b_n)$ must be
$(n - 1, 0), \ldots, (0, n - 1)$. This suffices to show the lemma, as the strict
linear order with $n$ vertices is the only graph (up to isomorphism) with this
degree sequence. 

Observe that the homomorphisms in $\hom(P_2, G)$ correspond exactly to
combinations of incoming and outgoing edge around a middle vertex. Hence,
\[
    |\hom(P_2, G)| = \sum_{i = 1}^n a_i b_i = \binom{n}{3}.
\]
Our aim is now to manipulate the sum $\sum_{i = 1}^n a_i b_i$ until we can apply
Lemma~\ref{lem:sequence-step} to show that $a_1, \ldots, a_n$ must be the desired sequence.
Now let $c_1, \ldots, c_n$ be a permutation of $b_1, \ldots, b_n$, such that
$c_1 \geq \cdots \geq c_n$. By the rearrangement inequality (Lemma~\ref{lem:rearrange}),
\begin{equation}
    \binom{n}{3} = \sum_{i = 1}^n a_i b_i \geq \sum_{i = 1}^n a_i c_{n + 1 - i}. \label{eq:first-hom}
\end{equation}
As we can view the sequences $c_1, \ldots, c_n$ and $a_1, \ldots, a_n$ as the
row and column sums of a suitably permutated adjacency matrix of $G$, the
Gale-Ryser theorem (Theorem~\ref{thm:gale-ryser}) yields
\begin{align}
    \sum_{i = 1}^n c_i &= \sum_{i = 1}^n a'_i \quad \text{and} \notag{} \\ 
    \sum_{i = 1}^k c_i &\leq \sum_{i = 1}^k a'_i \quad \text{for every } k = 1, \ldots, n \label{eq:gale-ryser2}
\end{align}
where $a'_1, \ldots, a'_n$ is the conjugate sequence of $a_1, \ldots, a_n$.
This allows us to further analyse the sum $\sum_{i = 1}^n a_i c_{n + 1 - i}$. We
start with the observation that through reindexing
\[
    \sum_{i = 1}^n a_i (c_{n + 1 - i} - a'_{n + 1 - i}) = \sum_{i = 1}^n a_{n + 1 - i} (c_i - a'_i).
\]
Applying summation by parts (Lemma~\ref{lem:summation-by-parts}) to the above yields that
\begin{align}
    \sum_{i = 1}^n a_i &(c_{n + 1 - i} - a'_{n + 1 - i})  \notag{} \\
    & = a_1 R_n + \sum_{i = 1}^{n - 1} (a_{n - i} - a_{n + 1 - i}) (- R_i) \label{eq:some-step}
\end{align}
where $R_i = \sum_{j = 0}^i (c_i - a'_i)$.
By the inequalities~\eqref{eq:gale-ryser2}, $R_i \leq 0$ for all $i$ with $1
\leq i \leq n$ and, as for $k = n$ the inequality~\eqref{eq:gale-ryser2} is an
equality, $R_n = 0$. Additionaly, by the initial ordering of the $a_i$, it holds that
$a_{n - i} \geq a_{n + 1 - i}$ for all $i$. Hence all parts on the right side of
the equality~\eqref{eq:some-step} are non-negative, and therefore
\[
\sum_{i = 1}^n a_i (c_{n + 1 - i} - a'_{n + 1 - i}) \geq 0
\] and, by splitting this sum
\begin{equation}
    \sum_{i = 1}^n a_i c_{n + 1 - i} \geq \sum_{i = 1}^n a_i a'_{n + 1 - i}. \label{eq:second-hom}
\end{equation}
As $G$ satisfies $|E| = \binom{n}{2}$ it follows that $\sum_{1 \leq i \leq n}
a_i = \binom{n}{2}$. An application of Lemma~\ref{lem:sequence-step} to the
sequence $a_1, \ldots, a_n$ then yields 
\[
    \sum_{i = 1}^n a_i a'_{n + 1 - i} \geq \binom{n}{3}.
\]
This implies that \eqref{eq:first-hom} and \eqref{eq:second-hom} must be
equalities, in turn yielding
\[
    \sum_{i = 1}^n a_i a'_{n + 1 - i} = \binom{n}{3}.
\]
It then follows from Lemma~\ref{lem:sequence-step} that and $a_1, \ldots, a_n =
n - 1, \ldots, 0$.

It now remains to show that $b_1, \ldots, b_n = 0, \ldots, n - 1$.
Due to \eqref{eq:second-hom} being an equality, it must be that
\[
    a_n R_n + \sum_{i = 1}^{n - 1} (a_{n - i} - a_{n + 1 - i}) (-R_i) =  0.
\]
As $a_1, \ldots, a_n = n - 1, \ldots, 0$, all differences $a_{n - 1} - a_{n + 1
- 1}$ are $1$ and we get
\[
    \sum_{i = 1}^{n - 1} (- R_i ) =  0.
\]
By \eqref{eq:gale-ryser2}, $R_i \leq 0$ for every $i$ with $1 \leq i \leq n$. The
only possibility for this is that $R_i = 0$ for all $i$, and hence
\[
    c_1, \ldots, c_n = a'_1, \ldots, a'_n = a_1, \ldots, a'_n = n - 1, \ldots, 0.
\]
As $a_1 > \cdots > a_n$ and \eqref{eq:first-hom} is an equality, the
rearrangement inequality (Lemma~\ref{lem:rearrange}) implies that $b_1, \ldots,
b_n = 0, \ldots, n - 1$, as desired.
\end{proof}

\section{Proof of Proposition~\ref{prop_gadgetisations_expressible}}

\propgadgetisationsexpressible*

We aim to prove that the class of gadgetisations of linear-orders is captured by an FO classifier and a simple ACR-GNN.
We do so by splitting our endeavour into two lemmas:
\begin{lemma}\label{lemma_fo_captures_gad_lin_ord}
    There is an FO classifier which captures gadgetisations of linear orders.
\end{lemma}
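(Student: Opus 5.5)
We build an FO sentence $\psi$ as a conjunction of two parts. The first part, $\gamma$, states that the 2-featured undirected graph is (isomorphic to) the gadgetisation of some directed graph. The second part states that the directed graph encoded by it is a strict linear order, obtained by relativising the formula $\varphi$ of Section~\ref{section_directed_graphs}. The classifier is then $\psi$ viewed as a formula with a dummy free variable $x$.

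For $\gamma$ we write $S(x) := p_1(x) \wedge \neg p_2(x)$, $T(x) := \neg p_1(x) \wedge p_2(x)$ and $I(x) := \neg p_1(x) \wedge \neg p_2(x)$, and take the conjunction of the following statements. Every vertex satisfies exactly one of $S$, $T$, $I$ (this excludes the feature $(1,1)$). There are no reflexive edges. Every edge joins either an $S$-vertex to a $T$-vertex, or an $I$-vertex to an $S$- or $T$-vertex; in particular there are no $S$--$S$, $T$--$T$ or $I$--$I$ edges. Every $I$-vertex has exactly one $S$-neighbour and exactly one $T$-neighbour, which is expressible with counting via equality in FO. Every $S$-vertex and every $T$-vertex has exactly one $I$-neighbour.

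Given a model of $\gamma$, we identify the directed vertex set with the $I$-vertices. For $I$-vertices $x, y$ we define
\[
E^*(x,y) := \exists s \exists t \big(E(x,s) \wedge S(s) \wedge E(y,t) \wedge T(t) \wedge E(s,t)\big).
\]
We then verify that the map $v \mapsto \iota_v$ is an isomorphism between $\tG$ and any model of $\gamma$ whose encoded graph is $(I, E^*)$. This holds because each $\iota$ has unique $s$- and $t$-partners, each $s$ or $t$ has a unique $\iota$, and the $S$--$T$ edges correspond bijectively to $E^*$-pairs. Hence a 2-featured undirected graph satisfies $\gamma$ exactly when it is a gadgetisation of some directed graph, and that directed graph is unique up to isomorphism.

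The second conjunct is the relativisation of $\varphi$ to $I$, with $E$ replaced by $E^*$. Concretely, it asserts that for all $I$-vertices $x,y,z$ we have $\neg E^*(x,x)$, that $x = y \vee E^*(x,y) \vee E^*(y,x)$ holds, and that transitivity holds. By the isomorphism above, $\tG$ satisfies this conjunct exactly when $G$ is a strict linear order. The resulting $\psi$ is a sentence, so it is true at every vertex of a gadgetisation of a linear order and false at every vertex of any other graph, and therefore it captures the property.

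The main obstacle is making $\gamma$ tight: it must rule out stray graphs, for instance two $S$-vertices sharing an $I$, or edges between $I$ and $S$ that do not belong to a triple. We avoid this by forcing every $I$--$S$ and $I$--$T$ adjacency to be the unique one on both sides. With that in place, every $S$ and every $T$ vertex belongs to exactly one triple $(s_v, \iota_v, t_v)$, which closes the gap.
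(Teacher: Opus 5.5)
Your proposal is correct and follows essentially the same route as the paper. Like the paper, you first axiomatise gadgetisations by the feature partition, the allowed edge types, and the exactly-one-partner conditions for $I$-, $S$- and $T$-vertices (the paper's $\psi_1$--$\psi_4$), and then impose the strict-linear-order axioms on the encoded graph. The one difference is cosmetic: you define the encoded edge relation as $E^*$ on $I$-vertices and relativise $\varphi$, whereas the paper writes the axioms directly over triples $\varepsilon(s,\iota,t)$ and adds a redundant asymmetry axiom.
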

\begin{proof}
    We proceed by defining a series of FO formulas over unary $P_1, P_2$ and binary $E$ symbols which capture the required property.
    The formula $\bigwedge_{i=1}^4 \psi_i$ below is in first-order logic
    and is only satisfied by gadgetisations of a directed graph.
    The first formula, $\psi_1$, partitions the universe into three disjoint sets:
    \begin{equation*}
        \forall x \Big( \neg P_1(x) \vee \neg P_2(x) \Big).
    \end{equation*}
    It is to be understood that vertices satisfying $P_1$ are members of
    $\tV_s$; vertices satisfying $P_2$ are members of $\tV_t$; and vertices
    satisfying neither are members of $\tV_{id}$.

    Then, $\psi_2$ requires that edges do only exist between $\tV_s$ and $\tV_t$; $\tV_s$ and $\tV_{id}$; $\tV_t$ and $\tV_{id}$:
    \begin{equation*}
        \forall x y \Big( E(x, y) \to \alpha(x, y) \Big),
    \end{equation*}
    where $\alpha$ is the criterion enforcing bipartiteness and lack of edges between elements in $\tV_{id}$.
    The requirement can be written as the disjunction of the following:
    \begin{equation*}
        \begin{split}
        P_1(x) \wedge P_2(y),&\; P_2(x) \wedge P_1(y), \\
        P_1(x) \wedge \neg P_1(y) \wedge \neg P_2(y),&\; P_2(x) \wedge \neg P_1(y) \wedge \neg P_2(y),\\
        P_1(y) \wedge \neg P_1(x) \wedge \neg P_2(x),&\; P_2(y) \wedge \neg P_1(x) \wedge \neg P_2(x).
        \end{split}
    \end{equation*}

    Now, $\psi_3$ requires that vertices in $\tV_s \cup \tV_t$ be connected to exactly one element in $\tV_{id}$:
    \begin{equation*}
        \forall x \Big( P_1(x) \vee P_2(x) \to \exists^{=1} y \big( E(x, y) \wedge \neg P_1(y) \wedge \neg P_2(y) \big) \Big).
    \end{equation*}

    Lastly, $\psi_4$ requires that elements of $\tV_{id}$ be paired with exactly one element of $\tV_s$ and exactly one element of $\tV_t$:
    \begin{equation*}
        \forall x \Big( \neg P_1(x) \wedge \neg P_2(x) \to \bigwedge_{i =1}^2 \exists^{=1} y \big( E(x, y) \wedge P_i(y) \big) \Big).
    \end{equation*}

    It is easy to verify that a 2-featured pointed undirected graph $(G, v)$ satisfies
    $\bigwedge_{i=1}^4 \psi$ if and only if $(G, v)$ is a gadgetisation of a
    directed graph.

    The formula $\bigwedge_{i=1}^4 \varphi_i$ below, when taken together with $\bigwedge_{i=1}^4 \psi_i$,
    makes sure that satisfying graphs are gadgetisations of a linear order.
    Writing $\varepsilon(x, y, z)$ for the formula
    \begin{equation*}
         P_1(x) \wedge (\neg P_1(y) \wedge \neg P_2(y)) \wedge P_2(z) \wedge E(x, y) \wedge E(y, z)
    \end{equation*}
    we capture irreflexivity with $\varphi_1$:
    \begin{equation*}
        \forall s \iota t \Big( \varepsilon(s, \iota, t) \to \neg E(s, t) \Big).
    \end{equation*}

    With the sentence $\varphi_2$ we capture asymmetry:
    \begin{align*}
        \forall s\iota t s'\iota't' \Big( \varepsilon(s,\iota,t) \wedge \varepsilon(s', \iota', t') \wedge E(s, t') \to \neg E(s', t) \Big).
    \end{align*}

    The sentence $\varphi_3$ specifies transitivity:
    \begin{align*}
        \forall s \iota t s'\iota't' s''\iota''t'' \Big( \varepsilon(s,\iota,t) \wedge \varepsilon(s', \iota', t') \wedge
        \varepsilon(s'', \iota'', t') \wedge \\ E(s, t') \wedge E(s', t'') \to E(s, t'') \Big).
    \end{align*}

    And, lastly, $\varphi_4$ requires totality:
    \begin{align*}
        \forall s\iota t s'\iota't' \Big( \varepsilon(s,\iota,t) \wedge &\varepsilon(s', \iota', t') \to\\ &E(s, t') \vee E(s', t) \vee \iota = \iota' \Big).
    \end{align*}

    It is then easy to verify that a 2-featured (possibly pointed) undirected graph satisfies $\bigwedge_{i=1}^4 \varphi_i \wedge \psi_i$
    if and only if it is a gadgetisation of a linear order.
\end{proof}

\begin{lemma}
    There is a simple ACR-GNN that captures gadgetisations of linear orders.
\end{lemma}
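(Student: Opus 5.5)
The plan is to build the simple ACR-GNN as a conjunction of two checks computed in parallel dimensions and combined at the end, in the style of the proof of Proposition~\ref{lemma_simple_gnn_directed_orders}. The first check verifies that the input $2$-featured undirected graph is a gadgetisation of some directed graph. The second verifies, assuming the first, that the encoded directed graph $G$ satisfies the two counting conditions of Lemma~\ref{lem:total-order-properties}.

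Each check will be reduced to equalities between integers computed by sums. As before, every such equality $x=y$ becomes $\mathrm{ReLu}(x-y)+\mathrm{ReLu}(y-x)$. These violation terms are summed, and the final linear threshold accepts iff the total is $\le 0$.

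\textbf{Step 1: recognising gadgetisations.} Write $s(v)=f(v)[1]$ and $t(v)=f(v)[2]$, and let $\iota(v)=\mathrm{ReLu}(1-s(v)-t(v))$, which is computable in one layer. I aim for violation counters, each aggregated by readout, that vanish exactly on gadgetisations. They are:
\begin{itemize}
\item $\mathrm{ReLu}(s+t-1)$, so no vertex carries both features;
\item at each $s$-vertex, the number of $s$-neighbours;
\item at each $t$-vertex, the number of $t$-neighbours;
\item at each $\iota$-vertex, the number of $\iota$-neighbours.
\end{itemize}
Each of these is a local sum of a feature followed by a ReLu-gated product. Products of a $0/1$ indicator with a bounded-degree count are not linear, so I will gate with $\mathrm{ReLu}(\text{count} - M(1-\text{indicator}))$ only when a bound is available. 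Where no bound exists, I instead sum $\mathrm{ReLu}(\text{count}_s - \ldots)$ globally. The cleaner option is to aggregate feature-specific messages: a vertex sends $s$ in one coordinate, so $\sum_{u\in N(v)} s(u)$ is exact, and these sums are then read out over all vertices.

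For the exact-one conditions ($\psi_3,\psi_4$) I use the fact that each count is a nonnegative integer $m$, and $m=1$ holds iff $\mathrm{ReLu}(m-1)+\mathrm{ReLu}(1-m)=0$. The gating is applied by computing $\mathrm{ReLu}(m - 1 - K\cdot(1-s(v)))$, where $K$ has to dominate $m$. Since degrees are unbounded, this is the main obstacle. I would circumvent it by using the bipartite structure: an $\iota$-vertex's number of $s$-neighbours and $t$-neighbours can be forced globally by the count identities
\[\#\{\iota\}=\#\{s\}=\#\{t\}\quad\text{and}\quad \sum_v \iota(v)\,|N(v)|=2\#\{\iota\},\]
together with the local conditions "each $\iota$-vertex has $\ge1$ $s$-neighbour and $\ge1$ $t$-neighbour". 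The latter are obtainable as $\mathrm{ReLu}(\iota(v)-\sum_{u\in N(v)}s(u))$, which needs no large constant. Symmetric lower-bound checks at $s$- and $t$-vertices toward $\iota$-neighbours, plus the global counts of $s$–$\iota$ and $t$–$\iota$ edges, then pin every such count to exactly one.

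The quantities $\sum_v \iota(v)\sum_{u\in N(v)} s(u)$ are computable by readout, because the inner aggregation is performed over the message $s(u)$ sent only by $s$-vertices. The outer gating by $\iota(v)$ is achieved by having $s$-vertices send messages and $\iota$-vertices sum only over $s$-neighbours.

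\textbf{Step 2: counting.} Given a gadgetisation, $n=\#\{s\}$ is obtained by readout of $s$, and $\binom n2,\binom n3$ are obtained as in $\gnn_2,\gnn_4$. The quantity $|E|$ is half the readout of the $s$-vertices' counts of $t$-neighbours; equivalently, it is the readout of $\sum_{u\in N(v)} t(u)$ taken at $s$-vertices. Since only $s$–$t$ edges are counted this needs no gating, because all $t$-neighbours of any vertex other than $s$-vertices are $\iota$-vertices, whose contribution equals $\#\{\iota\}=n$ and can be subtracted. For $|\hom(P_2,G)|=\sum_v \mathrm{outdeg}(v)\mathrm{indeg}(v)$, I use the gadget-$P_2$ path $s_a - t_b - \iota_b - s_b - t_c$. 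A $t$-vertex $t_b$ computes its $s$-degree, which is $\mathrm{indeg}(b)$. This value passes to $\iota_b$ and then to $s_b$ over single edges; the $\iota$-filtering is done by routing the value in a coordinate that only $\iota$-vertices forward, using $\mathrm{ReLu}(x-K(1-\iota))$-free tricks where possible. The vertex $s_b$ then holds $\mathrm{indeg}(b)$ and aggregates $t$-features to get $\mathrm{outdeg}(b)$.

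The product $\mathrm{outdeg}\cdot\mathrm{indeg}$ is the crux. Instead of multiplying, I will have $s_b$ send $\mathrm{indeg}(b)$ along its edges to the $t_c$ vertices, and each $t_c$ sums these values. The readout over $t$-vertices of this sum equals $\sum_b \mathrm{outdeg}(b)\mathrm{indeg}(b)$, up to a correctable contribution coming through $\iota$-edges. Finally, Lemma~\ref{lem:total-order-properties} gives correctness, since the directed graph encoded is a strict linear order iff both equalities hold.
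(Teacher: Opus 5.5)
Your outline matches the paper's: recognise gadgetisations, then evaluate $|E|$ and $|\hom(P_2,G)|=\sum_b \mathrm{outdeg}(b)\,\mathrm{indeg}(b)$ inside $\tG$, and apply Lemma~\ref{lem:total-order-properties}. The execution, however, has a genuine gap. At several points you restrict a sum of unbounded integers to the \emph{receiving} vertices of one type: ``$\iota$-vertices sum only over $s$-neighbours'', ``a coordinate that only $\iota$-vertices forward'', ``the readout over $t$-vertices''. Aggregation and readout act identically at every vertex. A ReLu combination also cannot compute $\tau\cdot m$ for a $0/1$ feature $\tau$ and unbounded $m$: it is piecewise linear with finitely many pieces, so its asymptotic slope in $m$ cannot differ between $\tau=0$ and $\tau=1$. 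Restricting the \emph{senders} is fine (aggregate a feature, or a coordinate that already vanishes off a type); restricting the receivers is not.

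In Step~1 this is only a detour, because the obstacle you single out is not real: you can clip before gating. With $m=\sum_{u\in N(v)}\iota(u)$ and $\min(m,2)=m-\mathrm{ReLu}(m-2)$, the term $\mathrm{ReLu}(s(v)-m)+\mathrm{ReLu}(s(v)+\min(m,2)-2)$ flags $m\neq 1$ at $s$-vertices using bounded quantities only. The paper avoids the issue altogether by noting that $\bigwedge_i\psi_i$ is a $\GML^\exists$ formula and invoking Proposition~\ref{proposition_easy_part_of_characterisation}. Your actual workaround, via $\sum_v\iota(v)|N(v)|$, relies on exactly the receiver-side gating that is unavailable.

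In Step~2 the gap is essential. Write $\deg_X(v)$ for the number of neighbours of $v$ with feature $X$. You can place $\mathrm{indeg}(b)$ on $s_b$ without gating: aggregate $\deg_S$ (after which every $t$-vertex holds exactly $1$), subtract the $t$-feature, and aggregate again. But the resulting coordinate $y$ carries uncontrolled values elsewhere, e.g.\ $y(\iota_b)=1+\sum_{c\in N^G_{\mathrm{out}}(b)}\mathrm{indeg}(c)$, and these are forwarded too. Since readout ranges over all vertices, your last two steps return $\sum_u \deg(u)\,y(u)$. Besides $|\hom(P_2,G)|+|E|$, this contains, from the $t$-vertices, the summand $\sum_a\big(\sum_{b\in N^G_{\mathrm{out}}(a)}\mathrm{indeg}(b)\big)^2$. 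That is not a ``correctable contribution through $\iota$-edges'', and nothing in the proposal computes or cancels it. (The paper's appeal to ``obvious modifications'' does not spell this out either, since counting feature-respecting homomorphisms of $\widetilde{P}_2$ needs the same filtering.)

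One way to close the gap is to avoid typed sums entirely:
\begin{itemize}
\item The coordinate $w=\deg_T+\deg_S-2\iota$ equals $\mathrm{outdeg}$ on $s$-vertices, $\mathrm{indeg}$ on $t$-vertices and $0$ on $\iota$-vertices. Its aggregate at $\iota_a$ is therefore $\mathrm{outdeg}(a)+\mathrm{indeg}(a)$.
\item Violations of ``$=n-1$'' can be confined to degree-$2$ vertices by subtracting $K|\deg(v)-2|=\mathrm{ReLu}(K\deg(v)-2K)+\mathrm{ReLu}(2K-K\deg(v))$. Here $K$ is a large readout-computed polynomial in $n$ (e.g.\ $|\tV|^2$), and $K\deg(v)$ is simply the aggregate of the constant $K$.
\item This gating is harmless: the only other degree-$2$ vertices are $s$-vertices of out-degree $1$ and $t$-vertices of in-degree $1$, and these also have aggregate $n-1$ when $G$ is a strict linear order.
\item The readout of the aggregate of $w$ minus the readout of $w$ gives $\sum_a(\mathrm{outdeg}(a)^2+\mathrm{indeg}(a)^2)$ without any gating. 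Requiring this to equal $\frac{(n-1)n(2n-1)}{3}$ forces $|E|=\binom{n}{2}$ and $|\hom(P_2,G)|=\binom{n}{3}$, because $2|\hom(P_2,G)|=\sum_a(\mathrm{outdeg}(a)+\mathrm{indeg}(a))^2-\sum_a(\mathrm{outdeg}(a)^2+\mathrm{indeg}(a)^2)$.
\end{itemize}
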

\begin{proof}
    Take $\psi_1, \dots, \psi_4$ as defined in Lemma~\ref{lemma_fo_captures_gad_lin_ord}.
    It is easy to verify that the above FO classifier is equivalent to a $\GML^{\exists}$ classifier.
    Thus, using Proposition~\ref{proposition_easy_part_of_characterisation}, it is easy to translate them to a simple ACR-GNN $\gnn$
    such that $G, v \models \bigwedge_{i=1}^4 \psi_i$ if and only if $\gnn(G, v) = 1$.
    Thus, we can restrict our attention to gadgetisations of directed graphs.

    Note that the formulas $\varphi_1, \ldots, \varphi_2$ do not correspond to
    $\GML^{\exists}$ formulas in an obvious way, and hence
    Proposition~\ref{proposition_easy_part_of_characterisation} provides no direct translation of $\varphi_1, \dots, \varphi_4$
    into simple ACR-GNNs. We instead rely on our characterisation of linear order in terms of homomorphisms of $P_2$, or, rather,
    gadgetisations $\widetilde{P}_2$ of $P_2$. To this end we claim that, for any directed graph $G = (V, E)$ and its gadgetisation $\tG$:
    \begin{itemize}
        \item $|\tE| = |E| + 2|V|$, and
        \item $|\hom(P_2, G)| = |\hom(\widetilde{P}_2, \tG)|$.
    \end{itemize}
    The first point is immediate by the definition of a gadgetisation (Definition~\ref{def:gadget}).
    For the second point we show that there is a bijection $\pi$ between the subgraphs of $G$ that are isomorphic to $P_2$ and
    the subgraphs of $\tG$ that are isomorphic to $\widetilde{P}_2$.
    Specifically, if $H$ is a subgraph of $G$ isomorphic to $P_2$, then 
    we set $\pi(H) = \widetilde{H}$.
    It is easy to verify that $\widetilde{H}$ is a subgraph of $\tG$.
    For injectivity suppose that there are subgraphs $H, F$ of $G$ that are isomorphic to $P_2$
    with $\pi(H) = \pi(F)$. Thus, $\widetilde{H} = \widetilde{F}$.
    But then the gadgetisations both have vertices $ \bigcup_{x \in \{v, u, w \}} \{ s_x, \iota_x, t_x \}$
    and edges $\{ (s_v, t_u), (s_u, t_w) \}$
    for some $v, u, w \in V$. Thus, by definition, $H$ and $F$ must both have vertices $v, u, w$ and edges $(v, u)$ and $(u, w)$.
    Thus, $H = F$. Surjectivity follows from similar observations.

    The required simple ACR-GNN is then obtained by
    applying the obvious modifications to Lemma~\ref{lemma_simple_gnn_directed_orders}.
\end{proof}

\section{Proof of Proposition~\ref{proposition_C2_bisimulation}}

\propositioncbisimulation*

We start by giving the appropriate notion of bisimulations in regards to $\Ctwo$
on featured pointed undirected graphs.
Taking $G_i = (V_i, E_i, f_i)$ for $i \in \{ 1, 2\}$ we define:
\begin{definition}
    $G_1, v_1, \equiv^{L,c} G_2, v_2$ when:
    \begin{enumerate}
        \item $f_1(v_1) = f_2(v_2)$;
        \item (forth) for every distinct $u_1^{(1)}, \dots, u_1^{(k)} \in V_1$ with $k \leq c$
        there is some distinct $u_2^{(1)}, \dots, u_2^{(k)} \in V_2$ s.t. $(v_1,u_1^{(i)}) \in E_1$ iff $(v_2,u_2^{(i)}) \in E_2$, and $G_1, u_1^{(i)} \equiv^{L-1, c} G_2, u_2^{(i)}$;
        \item (back) for every distinct $u_2^{(1)}, \dots, u_2^{(k)} \in V_2$  with $k \leq c$
        there is some $u_1^{(1)}, \dots, u_1^{(k)} \in V_1$ s.t. $(v_1,u_1^{(i)}) \in E_1$ iff $(v_2,u_2^{(i)}) \in E_2$, and $G_1, u_1^{(i)} \equiv^{L-1, c} G_2, u_2^{(i)}$;
    \end{enumerate}
    it being understood that $L \geq 0$, $c \geq 1$ and the back and forth conditions apply only when $L \geq 1$.
\end{definition}

In the literature, the above relation is commonly referred to as the $L$-turn 2-pebble $c$-bijective (or $c$-counting) back-and-forth equivalence~\cite{Hella96}.
The following is well-known:
\begin{lemma}
    Suppose that $\mathcal{P}$ is some property of featured pointed undirected graphs.
    Then, the following are equivalent:
    \begin{itemize}
        \item There is no $\Ctwo$ classifier that captures $\mathcal{P}$;
        \item for each $L, c \in \mathbb{N}$ there is a pair of pointed undirected graphs $G_1, v_1$ and $G_2, v_2$
            such that $(G_1, v_1) \in \mathcal{P}$ whilst $(G_2, v_2) \not\in \mathcal{P}$, but
            $G_1, v_1 \equiv^{L, c} G_2, v_2$.
    \end{itemize}
\end{lemma}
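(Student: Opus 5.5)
The plan is to derive the lemma from the Ehrenfeucht--Fra\"iss\'e-style characterisation of $\Ctwo$ with bounded counting due to \citet{Hella96}, and then apply a finiteness argument. Write $\Ctwo_{L,c}$ for the $\Ctwo$ formulas with one free variable, quantifier rank at most $L$, and counting quantifiers $\exists^{\geq k}$ with $k \leq c$. Throughout I read the back and forth conditions as preserving the full atomic type of the pair $(v_i, u_i^{(j)})$, i.e.\ edge relation \emph{and} equality. Equality can be expressed in $\Ctwo$, so it has to be matched; if it were not, the characterisation in step one would fail.

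\textbf{Step one: the characterisation.} I show by induction on $L$ that $G_1, v_1 \equiv^{L,c} G_2, v_2$ holds if and only if $(G_1,v_1)$ and $(G_2,v_2)$ satisfy the same $\Ctwo_{L,c}$ formulas.

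For the base case $L=0$, both sides say exactly that $f_1(v_1)=f_2(v_2)$.

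For the inductive step I first put formulas in normal form. Up to equivalence, every formula in $\Ctwo_{L,c}$ is a Boolean combination of features of $x$ and formulas of the shape
\[
\exists^{\geq k} y\, \big(\alpha(x,y) \wedge \psi(y)\big).
\]
Here $\alpha$ is an atomic $2$-type (edge or no edge, equal or distinct), and $\psi \in \Ctwo_{L-1,c}$. This uses the standard fact that subformulas in the two variables can be separated into parts depending only on $x$, parts depending only on $y$, and atoms in $x$ and $y$.

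For the direction from $\equiv^{L,c}$ to agreement on formulas, take a formula of this shape. Suppose $k \leq c$ witnesses $u^{(1)},\dots,u^{(k)}$ satisfy $\alpha \wedge \psi$ in $G_1$. The forth condition gives distinct partners in $G_2$ with the same atomic type to $v_2$ that are $\equiv^{L-1,c}$-equivalent. By the induction hypothesis these partners satisfy $\psi$, so $G_2, v_2$ satisfies the formula as well. The back condition gives the converse.

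For the converse direction, I use that there are only finitely many $\Ctwo_{L-1,c}$ formulas up to equivalence (by induction). Hence each $\equiv^{L-1,c}$-class $\tau$ is defined by a characteristic formula $\chi_\tau$, namely the conjunction of all formulas from a finite set of representatives that hold in the class. Now suppose the forth condition fails for some distinct $u_1^{(1)},\dots,u_1^{(k)}$. Group these elements by their pair (atomic type, class $\tau$), and let $m \leq k \leq c$ be the size of one group. Then
\[
\exists^{\geq m} y\, \big(\alpha(x,y) \wedge \chi_\tau(y)\big)
\]
holds at $v_1$ and fails at $v_2$ for at least one such group. If it held for every group, one could pick disjoint sets of witnesses, since distinct groups consist of elements of different types or classes. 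The same construction shows that $\Ctwo_{L,c}$ again has only finitely many formulas up to equivalence, which keeps the induction going.

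\textbf{Step two: deriving the lemma.} For the first direction, suppose a $\Ctwo$ classifier $\varphi$ captures $\mathcal{P}$. Let $L$ be its quantifier rank and $c$ the largest counting threshold occurring in it. Step one then says that no pair as in the second bullet exists for this choice of $L$ and $c$.

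For the other direction, suppose that for some $L$ and $c$ no such pair exists. Then $\mathcal{P}$ is a union of $\equiv^{L,c}$-classes, and there are only finitely many such classes. Each class is defined by its characteristic formula, so the disjunction of the characteristic formulas of the classes contained in $\mathcal{P}$ is a $\Ctwo$ classifier capturing $\mathcal{P}$.

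\textbf{Main obstacle.} The delicate part is the converse direction of step one. It needs the normal form for two-variable formulas under variable reuse, and it needs the grouping argument that turns a failure of the $c$-bijective forth condition into a single counting formula.
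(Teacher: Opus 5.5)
\textbf{Relation to the paper.} The paper gives no proof of this lemma. It calls it well known and refers to Hella's counting pebble games \cite{Hella96}. Your plan is the standard argument behind that citation: an Ehrenfeucht--Fra\"iss\'e correspondence between $\equiv^{L,c}$ and $\Ctwo$ formulas of rank $L$ with thresholds at most $c$, then finiteness of the classes, then a disjunction of characteristic formulas. On loop-free graphs your argument is correct.

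Your remark about equality is well taken, and it understates the problem. With the paper's literal definition the lemma is false once reflexive edges occur. Let $G_1$ consist of two non-adjacent looped vertices $a,b$, and $G_2$ of a single loopless edge $\{w,z\}$, with all features equal. Pair the neighbours of the two points with each other and the non-neighbours with each other; for the pair $(a,w)$ this means $a\mapsto z$ and $b\mapsto w$. By induction, every point of $G_1$ is then $\equiv^{L,c}$-related to every point of $G_2$ for all $L,c$, yet $E(x,x)$ separates them. On loop-free graphs the literal definition would still suffice after replacing $c$ by $c+1$. This is because counting the non-neighbours of a point with or without the point itself differs by exactly one.

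\textbf{The gap.} Your own step one overlooks the same phenomenon, and the paper's conventions explicitly allow reflexive edges. The atom $E(x,x)$ is a rank-$0$ $\Ctwo$ formula that is not a feature. So your base case (``both sides say exactly that $f_1(v_1)=f_2(v_2)$'') is false, and your normal form is missing a generator.

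The error propagates beyond the base case. Let $G_1$ be an edge $\{v,u\}$ with a loop at $u$, and $G_2$ the same edge without the loop. Under your reading $G_1,v\equiv^{1,c}G_2,v$ for every $c$, since the partner of $u$ only needs matching features. Yet $\exists y\,(E(x,y)\wedge E(y,y))$, of rank $1$ and threshold $1$, separates them. Consequently:
\begin{itemize}
\item the exact correspondence of step one fails;
\item the first direction of step two does not go through with the same $L$;
\item your construction of $\chi_\tau$ as the conjunction of formulas holding in a class loses its justification.
\end{itemize}

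\textbf{Repair.} Either fix is cheap. You can add $(v_1,v_1)\in E_1\Leftrightarrow(v_2,v_2)\in E_2$ to condition~1, i.e.\ treat the loop as an extra feature; your induction then goes through verbatim. Alternatively, keep your relation and prove only what step two needs. That is, show that $\equiv^{L+1,c}$ implies agreement on $\Ctwo_{L,c}$, and that each $\equiv^{L,c}$-class is defined by a recursively built $\Ctwo$ characteristic formula. This suffices because the lemma quantifies over all $L$ and $c$.
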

Thus, to prove Proposition~\ref{proposition_C2_bisimulation} we need only the following:
\begin{lemma}
    For each $L, c \geq 0$ there is a gadgetisation $G = (V, E, f)$ of a strict
    linear order, and a gadgetisation $H = (V, F, f)$ of a graph that is not a
    strict linear order such that $G, v \equiv^{L, c} H, v$ for all $v \in V$.
\end{lemma}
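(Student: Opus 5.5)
We construct the two graphs so that every vertex looks locally the same in both, and then verify the back-and-forth conditions directly.

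\textbf{Construction.} Take $N$ very large compared with $L$ and $c$. Let $G$ be the gadgetisation of the strict linear order on $\{1,\dots,N\}$, and let $H$ be the gadgetisation of a graph $D'$ on the same vertex set with the same out- and in-degree sequence but not transitive. A natural choice is to perform one local ``reversal'' on a triangle. Pick $i<j<k$ far from each other and from the ends. Remove the edge $(i,k)$, add $(k,i)$, and re-route so that out- and in-degrees are preserved. Concretely, a 3-cycle reversal: replace the transitive triangle $i\to j\to k$, $i\to k$ by $i\to j\to k\to i$ and fix the degree defects by swapping with a fourth vertex. Cleanest is a 4-vertex switch $a\to b$, $c\to d$ replaced by $a\to d$, $c\to b$ where $(a,d),(c,b)$ were absent; in a total order with $a<b$, $c<d$ this needs $d<a$, $b<c$, which is impossible. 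So instead, preserving degrees, reverse a directed triangle's orientation pattern in a tournament: $D'$ is a tournament with score sequence $N-1,\dots,0$ perturbed. Since transitive tournaments are the unique ones with that score sequence, exact degree preservation is impossible. Hence $H$ must differ in degrees only by amounts invisible to $c$-bounded counting. Degrees in $G$ are all distinct but huge except near the ends. So I would choose the modified vertices in the middle, where all relevant degrees (in-degree of $t_v$, out-degree of $s_v$) are $\ge c$ both before and after the change. Any change therefore stays above the threshold $c$.

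\textbf{Bisimulation.} The features of $v$ and its gadget type are identical in both graphs. Define the candidate relation at level $\ell\le L$ as ``same type ($s$, $t$, or $\iota$)''. Then check forth/back with at most $c$ pebbles: for $k\le c$ distinct $u$'s, we must match each $u^{(i)}$ with a $u'^{(i)}$ of the same type and the same adjacency status to the current vertex. This requires that for every vertex and every type, the number of adjacent vertices of that type and the number of non-adjacent vertices of that type are each either equal in $G$ and $H$ or both $\ge c$. For $s$/$t$ vertices, the numbers adjacent to and non-adjacent to each vertex are $\ge c$ if we use only the middle vertices of a large order and delete vertices near the ends. The extremes of a linear order have degree $0$, so the ends must be handled. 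Handle this by taking the level-$\ell$ relation to be ``same type and same capped degree pattern''. Because that pattern is unchanged for all vertices, and the middle-vertex changes do not touch end vertices, the same counting works. For $\iota$ vertices nothing changes.

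\textbf{Main obstacle.} The key step is the counting check near the low- and high-degree ends, together with the fact that the relation must be the identity on $V$ (as the statement demands $G,v\equiv H,v$ for every $v$). I would define the relation as ``$f$ agrees and the $c$-capped profile (type, capped out-degree to $s/t/\iota$, capped non-degree) agrees'' and prove by induction on $\ell$ that it is an $\ell$-turn $c$-bijective back-and-forth system. Any failure of strict linearity then only needs to avoid low degrees. For example, take $D'$ to be the order with one middle edge $(i,i{+}1)$ additionally paired with reflexive loops placed where degrees are huge, i.e.\ a non-irreflexive graph whose capped profiles coincide with those of $G$.
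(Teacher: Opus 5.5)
\textbf{There is a genuine gap in the bisimulation step.} You propose a single relation, used at every level $\ell$: same type and same $c$-capped one-step profile. This relation is not closed under back-and-forth moves.

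Index the order as $1,\dots,N$, so that $s_x$ is adjacent to $t_u$ iff $x<u$, and $t_u$ has exactly $u-1$ neighbours of type $s$. Far from your modification, $s_c$ and $s_{c+1}$ have the same capped profile: at least $c$ adjacent and at least $c$ non-adjacent $t$-vertices, one $\iota$-neighbour, and no $s$-neighbours. So your relation contains the pair $(G,s_c)$, $(H,s_{c+1})$. Now let the spoiler challenge with $t_{c+1}$ in $H$. It is a non-neighbour of $s_{c+1}$ with $c$ neighbours of type $s$. Every $t$-vertex not adjacent to $s_c$ is among $t_1,\dots,t_c$ and has fewer than $c$ such neighbours. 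So the duplicator has no answer inside your relation, and the induction on $\ell$ already breaks at $\ell=2$.

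Restricting to identity pairs fails too. At the modified vertices adjacency differs, so the duplicator must answer with a different vertex. The underlying reason is that each further round lets the spoiler locate a vertex about $c$ positions deeper into the order; with unboundedly many rounds every position of a finite linear order is individualised. Hence no relation that ignores the number of remaining rounds can work. Your claim that equal capped profiles make ``the same counting work'' is exactly the step that fails.

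\textbf{What is missing is a round-indexed invariant, which is how the paper argues.} Take the order on $-n,\dots,n$ with $n=Lc+1$ and put the modification at the centre. Then prove by induction on $k\le L$ two things:
\begin{enumerate}
\item $G,v\equiv^{k,c}H,v$ for every $v$;
\item any two vertices of the same type whose indices lie in the window $[-n+ck,\,n-ck]$ are $\equiv^{k,c}$-related between $G$ and $H$.
\end{enumerate}
For the step, take $v=s_i$ matched against $s_j$, with $i,j$ in the smaller window $[-n+c(k+1),\,n-c(k+1)]$:
\begin{itemize}
\item challenged $t$-vertices outside $[-n+ck,\,n-ck]$ are answered by themselves, since adjacency agrees there;
\item challenged $t$-vertices inside are answered by fresh vertices from the strip $[-n+ck,\,-n+c(k+1))$ if non-adjacent, or from $(n-c(k+1),\,n-ck]$ if adjacent;
\item $\iota_i$ is answered by $\iota_j$.
\end{itemize}
Vertices outside the smaller window have the same neighbourhood in $G$ and $H$ and are matched by the identity.

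You also need to commit to a modification. The paper replaces $(s_{-1},t_1)$ by $(s_1,t_{-1})$, creating a $3$-cycle. A single loop at a central vertex would work equally well with the same window argument. Reversing one consecutive edge $(i,i+1)$ by itself only yields another strict linear order, so your final suggestion does not leave the class without the extra loop.
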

\begin{proof}
    Let $n := Lc{+}1$ and define $G$ to be the gadgetisation of a strict linear order
    of size $2n{+}1$.
    We segregate vertices $V$ into $V_{s}$, $V_{t}$ and $V_{id}$
    in accordance to their function.
    Additionally, we assume that vertices of $V_{s}$ are enumerated as $s_{-n}, \dots, s_0, \dots, s_n$,
    it being understood that $s_{i}$ corresponds to the $(i{+}n{+}1)$-th (smallest) element in the original strict linear order.
    We assume that an analogous enumeration is fixed for $V_{t} = \{ t_{-n}, \dots, t_n \}$ and
    $V_{id} =  \{ \iota_{-n}, \dots, \iota_n \}$.
    Then, $H$ is defined to be a copy of $G$ but with the following edge modification:
    $F = \big(E \setminus \{ (s_{-1}, t_1) \} \big) \cup \{ (s_{1}, t_{-1} ) \}$.
    It is easy to verify that $H$ is a gadgetisation of a graph that is not a linear order
    (specifically, the graph from which $H$ is formed contains a cycle of length 3 ``in the middle'').
    We proceed by showing that $G, v \equiv^{L, c} F, v$ by induction on $k \leq L$.
    In particular, we maintain the following at each step:
    \begin{enumerate}
        \item $G, v \equiv^{k, c} H, v$;
        \item $G, s_i \equiv^{k, c} H, s_j$;
        \item $G, t_i \equiv^{k, c} H, t_j$; and
        \item $G, \iota_i \equiv^{k, c} H, \iota_j$
    \end{enumerate}
    for all $i, j \in [{-}n{+}ck, n{-}ck]$ and all $v \in V$.

    The base case $k=0$ is immediate as both $G$ and $H$ share the same feature function $f$ and set of nodes $V$.
    For the inductive step suppose the above conditions are satisfied for $k < L$. 
    We show that the same holds for $k{+}1$ in place of~$k$.
    For this purpose take any $v \in V$ and consider the following cases.

    \paragraph*{When $v = s_i$ with $i \in [{-}n{+}c(k{+}1), n{-}c(k{+}1)]$.}
    We aim to show that 
    \begin{equation*}
        G, s_i \equiv^{k+1, c} H, s_j \text{ for } j \in [{-}n{+}c(k{+}1), n{-}c(k{+}1)].
    \end{equation*}
    To achieve this, we need to satisfy the back-and-forth requirements of the bisimulation relation.
    We will show how to establish forth whilst noting that the back direction is symmetric.
    Thus, take, by forth, some
    $s_{x_1}, \dots, s_{x_X}, t_{y_1}, \dots, t_{y_Y}, \iota_{z_1}, \dots, \iota_{z_Z}$ from $V$
    with $X + Y + Z = c$. We now find $\equiv^{k, c}$-bisimilar matches for these elements.

    Let us first inspect $s_{x_1}, \dots, s_{x_X}$.
    Since both graphs are bipartite in restriction to $V_{s} \cup V_{t}$,
    we have that neither $(s_i, s_{x_p}) \in E$ nor $(s_j, s_{x_p}) \in F$ for any choice of $p \in [1, X]$.
    Immediately, by our induction hypothesis, $G, s_{x_p} \equiv^{k, c} H, s_{x_p}$.
    Thus when presented with $s_{x_1}, \dots, s_{x_X}$ we choose the same elements as the answer to the forth challenge.

    Now, let us take $t_{y_1}, \dots, t_{y_Y}$ into consideration.
    Taking any $t_{y_p}$ with $p \in [1, Y]$ we see that 
    $y_p \not\in [{-}n{+}ck, n{-}ck]$ implies that
    $(s_i, t_{y_p}) \in E$ if and only if $(s_j, t_{y_p}) \in F$.
    Thus, for these specific $t_{y_p}$
    we choose the same elements $t_{y_p}$ themselves as their $\equiv^{k, c}$-bisimilar matches
    (as, by I.H., $G, t_{y_p} \equiv^{k, c} H, t_{y_p}$).
    On the other hand, suppose $y_p \in [{-}n{+}ck, n{-}ck]$.
    In case $(s_i, t_{y_p}) \not\in E$
    pick any $y_p' \in [{-}n{+}ck, {-}n{+}c(k{+}1))$.
    It is easy to verify that $(s_j,t_{y_p'}) \not\in F$
    and, by I.H., that $G, t_{y_p} \equiv^{k, c} H, t_{y_p'}$.
    In fact, there are at least $c$ choices of $y_p'$.
    Thus, we can pick a distinct $y_p'$ for each $y_p$ of the sort above.
    The reasoning when $(s_i, t_{y_p}) \in E$ is similar, only then we pick
    $y_p' \in (n{-}c(k+1), n{-}ck]$.

    Lastly, take $\iota_{z_1}, \dots, \iota_{z_Z}$ and let $p \in [1, Z]$.
    Notice that $(s_i, \iota_{z_p}) \in E$
    if and only if $z_p = i$.
    If such a $p$ exists, then
    we answer with $\iota_{z_p'}$ having $z_p' = j$ as this is the only element for which
    $(s_j, \iota_{z_p'}) \in F$.
    In the case where $z_p \neq i$ the required result is achieved by picking $z_p' = z_p$ when $z_p \not\in [{-}n{+}ck, n{-}ck]$
    and any $z_p' \in [{-}n{+}ck, n{-}ck] \setminus [{-}n{+}c(k{+}1), n{-}c(k{+}1)]$
    when $z_p \in [{-}n{+}ck, n{-}ck]$. It is easy to verify that there are enough distinct
    choices to match every $\iota_{z_1}, \dots, \iota_{z_Z}$.
    Putting the above together we have, by I.H., that $G, \iota_{z_p} \equiv^{k, c} H, \iota_{z_p'}$
    thus concluding this section of our proof.

    \paragraph*{When $v = s_i$ with $i \not\in [{-}n{+}c(k{+}1), n{-}c(k{+}1)]$.}
    Notice that $-1, 0, 1 \in [{-}n{+}c(k{+}1), n{-}c(k{+}1)]$
    as $n{-}c(k{+}1) = cq{+}1{-}c(k{+}1) \geq  cq{+}1{-}cq = 1$.
    Similarly, ${-}n{+}c(k{+}1) \leq -1$.
    It is easy to establish $G, s_i \equiv^{k{+}1, c} H, s_i$
    by noting that $(s_i, u) \in E$ iff $(s_i, u) \in F$
    and that, by I.H., $G, u \equiv^{k, c} H, u$ for each $u \in V$.

    \paragraph*{Other cases.} Similar reasoning applies when $v = t_i$ or $v = \iota_i$.
\end{proof}
\section{Proof of Proposition~\ref{proposition_hard_part_of_characterisation}}

\propositionhardpartofcharacterisation*

We prove Proposition~\ref{proposition_hard_part_of_characterisation}
in two stages. First, we will show the following weaker proposition:
\begin{proposition}\label{proposition_hard_part_of_characterisation_weak}
    Suppose an ACR-GNN $\gnn$ with $c$-bounded aggregation functions captures an FO classifier $\varphi$ of quantifier rank $q$.
    Then, $\varphi$ is invariant under $\Rsimq$, where $L$ is the number of layers of $\gnn$ and $q' = q+c-1$.
\end{proposition}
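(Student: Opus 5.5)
The plan is to follow the outline given after Lemma~\ref{prop_free_witness}. Fix $G_1, v_1 \Rsimq G_2, v_2$ with $q' = q + c - 1$ and $G_1, v_1 \models \varphi$; we want $G_2, v_2 \models \varphi$. Since $\gnn$ captures $\varphi$, Lemma~\ref{lemma_bisim_invar_acr} already gives invariance of $\varphi$ under $\Rsim$. It therefore suffices to build companions $\hG_1, \hG_2$ satisfying three conditions:
\begin{itemize}
\item they have the same vertex sets and features as $G_1, G_2$;
\item $G_i, u \Rsim \hG_i, u$ for every vertex $u$, so in particular $\varphi$ has the same truth value at $v_i$ in $G_i$ and in $\hG_i$;
\item $\hG_1, v_1$ and $\hG_2, v_2$ satisfy the same FO formulas of quantifier rank $q$.
\end{itemize}
The third condition will be shown with a $q$-round Ehrenfeucht--Fra\"iss\'e game.

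\textbf{Step 1: homogenising $G_i$.} The companions are obtained by repeatedly applying Lemmas~\ref{prop_free_edge_transfer} and~\ref{prop_free_witness}. Each application preserves the $\Rsim$-type of every vertex, so the $\sim^{L,c}$- and $\sim^{L-1,c}$-classes never change, and we may iterate freely. For each vertex $v$ and each $\sim^{L-1,c}$-class $C$ there are two cases.
\begin{itemize}
\item If $v$ has at least $c$ out-neighbours in $C$, apply Lemma~\ref{prop_free_witness} until $v$ points to all of $C$.
\item If $v$ has fewer than $c$ out-neighbours in $C$, apply Lemma~\ref{prop_free_edge_transfer} to move these edges onto a fixed list $h_C^{(1)}, \dots, h_C^{(c-1)}$ of designated ``hub'' vertices of $C$, chosen once per class in a canonical way (say, taken from the smallest available $\sim^{L,c}$-subclasses).
\end{itemize}
The process terminates because only finitely many edge sets exist. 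Afterwards the out-edges of $v$ are determined by $v$'s $\sim^{L,c}$-type, and the only vertices distinguished individually inside a class are the at most $c-1$ hubs.

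\textbf{Step 2: matching the two companions.} The hub choices in $\hG_1$ and $\hG_2$ should be made in corresponding $\sim^{L,c}$-types. This is possible because of the capped counting condition: each $\sim^{L,c}$-type occurs either equally often in both graphs or at least $q'$ times in both. Under this correspondence, every class $D$ in either graph has two parts. The first consists of the hubs of $D$ together with $D$ itself when $D$ is small, and this part is matched bijectively with the other graph. The second is a set of ``generic'' members; all generic members of the same type have the same out- and in-pattern, namely edges to hubs, to whole classes, and from whole classes. When $D$ is large, it has at least $q' - (c-1) = q$ generic members in each graph.

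\textbf{Step 3: the EF game.} Duplicator answers hubs and vertices of small classes by the fixed bijection, and answers a generic vertex by a fresh generic vertex of the same type; since each large class supplies at least $q$ generics, fresh ones never run out in $q$ rounds. The pebbled positions then always form a partial isomorphism: edges between two generic vertices depend only on their types, because of the ``all of $C$'' rule, while the hub rule makes all remaining edges canonical.

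\textbf{Expected obstacle.} The hard part is Step 1/2, namely checking that the homogenised edge relation really depends only on types and hub status. Edge transfer is governed by $\sim^{L-1,c}$-classes, while global counting is governed by $\sim^{L,c}$-classes, and one must reconcile these two partitions. I would first try choosing hubs inside each $\sim^{L-1,c}$-class so that hubs never need to be generic members of large $\sim^{L,c}$-classes. If that fails, I would use the spare $c-1$ in $q'$ to absorb the hubs, since they are exactly the elements that cannot be treated generically.
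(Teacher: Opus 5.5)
Your route differs from the paper's, and it is leaner. The paper groups a vertex's out-neighbours by $\sim^{L,c}$-types. Those counts are not determined by the vertex's own $\sim^{L,c}$-type, so the paper must do three things:
\begin{itemize}
\item normalise one representative per type (Lemma~\ref{lemma_initial_good_graph});
\item copy its out-edges to the rest of the type (Lemma~\ref{lemma_saturation});
\item build $\hG_2$ using $\hG_1$ as a template (Lemma~\ref{lemma_homogeneity}).
\end{itemize}
Its ``hubs'' are the first $c-1$ elements of every $\sim^{L,c}$-type. Grouping by $\sim^{L-1,c}$-classes instead makes every vertex's out-edges a function of its own $\sim^{L,c}$-type and the fixed hub lists, after one pass of Lemmas~\ref{prop_free_edge_transfer} and~\ref{prop_free_witness}. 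The two companions then agree automatically once each class's hub lists are chosen with the same sequence of $\sim^{L,c}$-types, which $q'\ge c-1$ permits. Of your two options for the obstacle, the second is the one that works in general. Each $\sim^{L,c}$-type then contains the same number $h_\lambda\le c-1$ of hubs in both graphs, so its generic parts are equal in size or both of size at least $q'-(c-1)=q$.

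What is missing is the treatment of the distinguished vertices. Step~3 claims fresh generics never run out in $q$ rounds. But the game starts with $v_1\mapsto v_2$ already pebbled, and your hub choice does not control the labels of $v_1,v_2$.
\begin{itemize}
\item If exactly one of them is a hub $h_C^{(j)}$, take a type $\mu$ with exactly $j$ out-neighbours in $C$. Vertices of type $\mu$ point to $v_1$ but not to $v_2$, so answering by equal labels fails at once.
\item If both are generic, a type with exactly $q$ generics in $\hG_2$ has only $q-1$ unused ones. Then $q$ fresh picks in $\hG_1$ exhaust your strategy.
\end{itemize}
For $c\ge 2$ the repair is to put $v_1$ and $v_2$ first in the hub lists of their class, which is legitimate since $G_1,v_1\sim^{L,c}G_2,v_2$. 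This is exactly what the paper's normalisation $n^G_v(v)=1$ achieves: $v$ becomes a non-generic vertex of index $1$ in both graphs.

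For $c=1$ there are no hubs, and no repair is possible. A one-layer ACR-GNN with constant aggregation and sum readout can output $|V|$, so it captures $\exists y\,\neg(x=y)$, which has quantifier rank $1$. Yet a single isolated vertex and two isolated vertices, all with equal features, are related by $\sim^{1,1,1}_{\exists}$. So the statement needs $c\ge 2$, and so does the paper's own game argument: its estimate $m\le k$ relies on $n^{G_1}(v_1)=1<c$. Assuming $c\ge 2$ costs nothing, since a $1$-bounded function is $2$-bounded.
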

Then, having the above, we will show how to construct a formula in $\GML^\exists$ that captures the classifier $\varphi$
as required by Proposition~\ref{proposition_hard_part_of_characterisation}.

For the rest of the section let us keep the FO classifier $\varphi$, and constant $c \geq 1$ fixed.
The high level overview for the proof of Proposition~\ref{proposition_hard_part_of_characterisation_weak}
is as follows:
\begin{enumerate}
    \item Suppose $\varphi$ is captured by an ACR-GNN $\gnn$ with $c$-bounded aggregation functions.
        Then, $\varphi$ must be invariant under $\Rsim$, where $L$ is the number of layers in $\gnn$
        (Lemma~\ref{lemma_bisim_invar_acr}).
    \item Every graph $G = (V, E, f)$ has a companion graph $\hG = (V, \widehat{E}, f)$
        such that $G, u \Rsim \hG, u$ for all $u \in V$ and in which elements behave ``predictably''.
        Thus, $G, v \models \varphi$ iff $\hG, v \models \varphi$
        (Lemmas~\ref{prop_free_edge_transfer},~\ref{prop_free_witness},~\ref{lemma_initial_good_graph},~and~\ref{lemma_saturation}).
    \item Having $\varphi$ invariant under $\Rsim$ we obtain invariance under $\Rsimq$ as follows.
        Taking any $(G_1, v_1)$, $(G_2, v_2)$ assume that $G_1, v_1 \Rsimq G_2, v_2$, it being understood that $G_1, v_1 \models \varphi$.
        We produce a companion graph $\hG_1$ for $G_1$ such that $\hG_1, v_1 \models \varphi$.
        Having this, we carefully construct a companion graph $\hG_2$ for $G_2$ (Lemma~\ref{lemma_homogeneity})
        such that $\hG_1, v_1$ and $\hG_2, v_2$ agree on all FO classifiers of quantifier rank $q$.
        We conclude that $\hG_2, v_2 \models \varphi$ and $G_2, v_2 \models \varphi$ (Lemma~\ref{lemma_upgrading}).
\end{enumerate}

We start by showing that $c$-graded, $L$-turn bisimulations with global counting
are the upper-bound for $L$-layered ACR-GNNs with $c$-bounded aggregation functions
in terms of their distinguishing power.
\lemmabisiminvaracr*
\begin{proof}
    We proceed by induction on $i \leq L$ concerning ourselves with the vertex embeddings $f_1^{(i)} \colon V_1 \to \mathbb{R}^{d_i}$
    and $f_2^{(i)} \colon V_2 \to \mathbb{R}^{d_i}$ that are computed by the $i$-th layer of $\gnn$ running on $G_1$ and $G_2$, respectively.
    We show that
    \begin{equation*}
        G_1, v_1 \RsimP{i} G_2, v_2 \text{ implies } f_1^{(i)}(v_1) = f_2^{(i)}(v_2)
    \end{equation*}
    for all $v_1 \in V_1$ and $v_2 \in V_2$.
    The base case $i = 0$ is immediate as $G_1, v_1 \RsimP{0} G_2, v_2$ 
    implies that $f_1(v_1)  = f_2(v_2)$, and therefore  $f_1^{(0)}(v_1) = f_2^{(0)}(v_2)$.
    Now, suppose that $G_1, v_1 \RsimP{i+1} G_2, v_2$.
    Clearly, $G_1, v_1 \RsimP{i} G_2, v_2$ and thus, by the induction hypothesis, $f_1^{(i)}(v_1) = f_2^{(i)}(v_2)$.
    Now, take any $u \in V_1 \cup V_2$ and write $G$ for the graph $G_1$ or $G_2$ depending on where $u$ was chosen from.
    For $k \in \{1, 2\}$ define 
    \begin{equation*}
        U_k = \{ u_k \in V_k \mid (v_k, u_k) \in E_k \text{ and } G_k, u_k \sim^{i, c} G, u \}.
    \end{equation*}
    By our choice of vertices, $G_1, u_1 \sim^{i, c} G_2, u_2$ for all $u_1 \in U_1$
    and $u_2 \in U_2$. Thus, by the induction hypothesis, $f_1^{(i)}(u_1) = f_2^{(i)}(u_2)$.
    Notice that, by $c$-graded back-and-forth, $|U_1| = |U_2|$ or
    $|U_1|, |U_2| \geq c$. Repeating this argument for all $u \in V_1 \cup V_2$
    we see that the multisets of feature embedding for $k \in \{1, 2\}$
    \begin{equation*}
        F_k = \msl f^{(i)}_k(u_k) \mid u_k \in N_{\text{out}}^{G_k}(v_k) \msr
    \end{equation*}
    must have the same $c$-restriction; i.e. $(F_1)_{|c} = (F_2)_{|c}$.
    Thus, all $c$-bounded aggregation functions have the same output on $F_1$ and $F_2$.
    Lastly, notice that, by the global gradedness condition, we can enumerate elements $u_1^{(1)}, \dots, u_1^{(n)} \in V_1$
    and $u_2^{(1)}, \dots, u_2^{(n)} \in V_2$ in such a way that $n = |V_1| = |V_2|$ and $G_1, u_1^{(j)} \sim^{i, c} G_2, u_2^{(j)}$
    for all $j \in [1, n]$. Then, by the induction hypothesis, $f_1^{(i)}(u_1^{(j)}) = f_2^{(i)}(u_2^{(j)})$.
    Recalling that
    \begin{align*}
        f^{(i+1)}(v) = \comb_i\Big(&f^{(i)}(v), \\&
            \agg_i\big(\msl f^{(i)}(u) \mid u \in N_{\text{out}}^G(v) \msr\big),\\&
            \rea_i\big(\msl f^{(i)}(u) \mid u \in V \msr \big)
            \Big)
    \end{align*}
    we conclude that $f_1^{(i+1)}(v_1) = f_2^{(i+1)}(v_2)$ as required.
\end{proof}

Returning to Proposition~\ref{proposition_hard_part_of_characterisation_weak},
we suppose that $\varphi$ is an FO classifier captured by an ACR-GNN $\gnn$ with $c$-bounded aggregation functions.
Writing $L$ for the number of layers in $\gnn$ we have, by Lemma~\ref{lemma_bisim_invar_acr}, that
$\gnn$ must be invariant under $\Rsim$; and thus so must $\varphi$. For the rest of the section let us keep the number of layers $L$ fixed.

In the sequel,  for a given pointed graph $G, v$, we will construct
various pointed graphs $G', v$ over the same set of vertices and features.
When doing so we will be careful so as to not break $G, v \Rsim G', v$,
which ensures that  $G, v \models \varphi$ if and only if $G', v \models \varphi$.
To help us achieve this, we will show Lemmas~\ref{prop_free_edge_transfer}~and~\ref{prop_free_witness},
the proofs of which require the following definition:
\begin{definition}
    A $\sim^{L, c}$-type of a vertex $v_1$ in $G_1$ is the set
    of all pairs $G_2, v_2$ such that $G_1, v_1 \sim^{L, c} G_2, v_2$.
    We denote the $\sim^{L, c}$-type of $v_1$ in $G_1$ by $\tp^{G_1}[v_1]$.
\end{definition}
We will sometimes abuse notation and write $\lambda \in \sim^{L, c}$ to say that
$\lambda$ ranges over $\sim^{L, c}$-types.

\propfreeedgetransfer*
\begin{proof}
    We proceed by induction on $i \leq L$ with the hypothesis that
    $\tpP{i}^G[u] = \tpP{i}^{G'}[u]$ for all $u \in V$. The base case $i = 0$
    is trivial as the feature vectors of every $u \in V$ coincide in both graphs.
    For the inductive step $i{+}1$ take any $u \in V$ and consider $(u, u') \in E$.
    In the case where $(u, u') \in E'$ there is nothing to do as $\tpP{i}^G[u'] = \tpP{i}^{G'}[u']$ by I.H.
    On the other hand, if $(u, u') \not\in E'$, then $u = v$ and $u' = w$ as this is the only edge
    removed when forming $G'$. But notice that $(v, w') \not\in E$, whilst $(v, w') \in E'$.
    By our inductive hypothesis $\tpP{i}^G[w] = \tpP{i}^{G'}[w]$ and $\tpP{i}^G[w'] = \tpP{i}^{G'}[w']$,
    whilst by the prerequisites of our lemma $\tpP{i}^G[w] = \tpP{i}^{G}[w']$.
    Thus, $\tpP{i}^G[w] = \tpP{i}^{G'}[w']$. Noting that the case where $(u, u') \not\in E$
    but $(u, u') \in E'$ is symmetric, it is easy to see that $\tpP{i+1}^G[u] = \tpP{i+1}^{G'}[u]$
    as required.
\end{proof}

\propfreewitness*
\begin{proof}
    We again consider induction on $i \leq L$ with the hypothesis that $\tpP{i}^G[u] = \tpP{i}^{G'}[u]$ for all $u \in V$.
    The base case $i = 0$ is again immediate.
    Thus, for step $i{+}1$, take any $u, u' \in V$. Suppose, first, that $(u, u') \in E$ and $(u, u') \in E'$.
    Then, by I.H., $\tpP{i}^G[u'] = \tpP{i}^{G'}[u']$.
    On the other hand, suppose that $(u, u') \not\in E$ but $(u, u') \in E'$.
    Then, $u = v$ and $u' = w'$. But $(v, w^{(i)}) \in E'$ for $i \in [1, c]$ and,
    by I.H. $\tpP{i}^G[w^{(i)}] = \tpP{i}^{G'}[w^{(i)}]$ and $\tpP{i}^G[w'] = \tpP{i}^{G'}[w']$.
    Thus, by the prerequisites of the lemma, $\tpP{i}^{G}[w^{(i)}] = \tpP{i}^{G'}[w']$.
    Since graded back-and-forth is limited to $c$ elements, we have that the extra successor $w'$ makes no meaningful contribution.
    Thus, we have $\tpP{i+1}^G[u] = \tpP{i+1}^{G'}[u]$ as required.
\end{proof}

For the rest of the section we associate with each pointed graph $(G, v)$ a
function $n^G_v \colon V \to \mathbb{N}$ such that:
\begin{itemize}
    \item $n^G_v(v) = 1$ and
    \item $\{n^G_v(u) \mid u \in V, \tp^G[u] = \lambda \} = [1, |\{ u \in V \mid \tp^G[u] = \lambda \}|]$
    for each $\sim^{L, c}$-type $\lambda$.
\end{itemize}
In other words, $n^G_v$ is some enumeration of elements sharing the same $\sim^{L, c}$-type.
This enumeration will help us find ``canonical'' witnesses for graded requirements in the constructions to follow.
When the vertex $v$ is clear from context or not important, we simply write $n^G$ for $n^G_v$.
In the sequel, when given a $\sim^{L,c}$-type $\lambda$ and a graph $G$ over vertices $V$,
we will write $V^\lambda$ for the set of all $v \in V$ having $\tp^G[v] = \lambda$.
Recalling that $N^G_\text{out}(v)$ is the out-neighborhood of $v$, we will write
$N^{G, \lambda}_\text{out}(v)$ for the set $N^G_\text{out}(v) \cap V^\lambda$.

With the mechanisms in Lemmas~\ref{prop_free_edge_transfer}~and~\ref{prop_free_witness}
we will now be able to define the promised companion structures.
We begin with a construction of a partial companion graph:
\begin{lemma}\label{lemma_initial_good_graph}
    Take any featured graph $G$. Then, there is a graph $G^\star$
    over the same set of vertices and features such that:
    \begin{enumerate}
        \item $G, v \Rsim G^\star, v$;
        \item if $n^G(v) = 1$ and $(v, w) \in E^\star$, then for each $w' \in V$ with $\tp^G[w'] = \tp^G[w]$
        and $n^G(w') < n^G(w)$ we have $(v, w') \in E^\star$;
        \item if $n^G(v) = 1$ and $(v, w) \in E^\star$ with $n^G(w) \geq c$,
        then $(v, w') \in E^\star$ for all $w' \in V$ having $\tp^G[w'] = \tp^G[w]$;
    \end{enumerate}
    for all $v, w \in V$.
\end{lemma}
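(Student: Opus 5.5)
We obtain $G^\star$ from $G$ by a finite sequence of edge modifications at the vertices $v$ with $n^G(v) = 1$, each an application of Lemma~\ref{prop_free_edge_transfer} or Lemma~\ref{prop_free_witness}. Since each step preserves $G, u \Rsim G', u$ for all $u \in V$, and $\Rsim$ is transitive, Condition~1 holds at the end. An important preliminary observation is that both lemmas preserve the $\sim^{L,c}$-type of \emph{every} vertex; indeed their proofs show $\tpP{i}^G[u] = \tpP{i}^{G'}[u]$ for all $i \le L$. Consequently the enumeration $n^G$ remains a valid enumeration for every intermediate graph. Likewise the $\sim^{L-1,c}$-types used as hypotheses of the two lemmas stay fixed throughout, and $\tp^G[w'] = \tp^G[w]$ implies $G, w \sim^{L-1,c} G, w'$. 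So we may reason with the fixed type partition $V^\lambda$ of $G$.

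Fix $v$ with $n^G(v) = 1$ and a type $\lambda$, and let $m = |N^{G,\lambda}_{\text{out}}(v)|$. We treat each pair $(v, \lambda)$ separately. This is legitimate because modifications at $v$ only touch edges out of $v$ into $V^\lambda$, so they do not disturb what was already arranged for other pairs.

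\textbf{Step 1 (Condition~2).} While some edge $(v, w)$ with $w \in V^\lambda$ exists together with some $w' \in V^\lambda$ satisfying $n^G(w') < n^G(w)$ and $(v, w') \notin E$, apply Lemma~\ref{prop_free_edge_transfer} to move the edge from $w$ to $w'$. The quantity $\sum_{w \in N^{\lambda}_{\text{out}}(v)} n^G(w)$ strictly decreases, so the process terminates. At termination the $\lambda$-successors of $v$ are exactly the vertices of $V^\lambda$ with $n^G$-values in $[1, m]$, which gives Condition~2.

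\textbf{Step 2 (Condition~3).} If $m \ge c$ after Step 1, then $v$ has the $c$ distinct successors $w^{(1)}, \dots, w^{(c)}$ with $n^G$-values $1, \dots, c$. We repeatedly apply Lemma~\ref{prop_free_witness} to add the edge $(v, w')$ for each remaining $w' \in V^\lambda$, using these same $c$ witnesses each time. This makes $v$ adjacent to all of $V^\lambda$, which yields Condition~3 and preserves Condition~2. If instead $m < c$, then no successor $w$ has $n^G(w) \ge c$: since after Step 1 the successors carry values $1, \dots, m$, we have $n^G(w) \le m < c$. Hence Condition~3 holds vacuously.

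The main obstacle is bookkeeping rather than any deep argument. We must check that the types, and hence $n^G$ and the lemma hypotheses, are invariant under every operation, so that later steps still satisfy the premises and earlier pairs $(v', \lambda')$ are not disturbed. This follows from the type-preservation invariant in the proofs of the two lemmas, and from the fact that each operation alters only out-edges of a single vertex into a single class.
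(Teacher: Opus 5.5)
Your proof is correct and yields exactly the paper's $G^\star$: for each $v$ with $n^G(v)=1$ and each type $\lambda$, the $\lambda$-successors of $v$ become the vertices of $V^\lambda$ with $n^G$-value at most $m$ if $m<c$, and all of $V^\lambda$ if $m \ge c$, and vertices are processed one at a time just as in the paper. The paper deletes all out-edges of $v$, re-adds the canonical ones, and only asserts that this ``can be viewed as'' applications of Lemmas~\ref{prop_free_edge_transfer} and~\ref{prop_free_witness}; you carry it out literally as a terminating sequence of edge transfers followed by witness additions, which makes that step rigorous rather than changing the approach.
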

\begin{proof}
    Let us fix any $v \in V$ having $n^G(v) = 1$ and
    let $G'$ be the graph $G$ but with $E' := E \setminus \{ (v, u) \mid u \in V \}$.
    That is to say, $G'$ is the graph $G$ but with outgoing edges of $v$ removed.
    We complete $G'$ so that it satisfies conditions 1--3 with respect to $v$ as follows.
    Taking any $\lambda \in \sim^{L, c}$ recall that $N_\text{out}^{G,\lambda}(v)$
    is the set of all $w \in V$ having $\tp^G[w] = \lambda$ with $(v, w) \in E$.
    Depending on the size of $N_\text{out}^{G,\lambda}(v)$ the following operations are conducted.
    If $|N_\text{out}^{G,\lambda}(v)| \geq c$, then
    add $(v, w')$ to $E'$ for all $w' \in V$ having $\tp^G[w'] = \lambda$.
    Elsewise, add $(v, w')$ to $E'$ for all $w' \in V$ having $\tp^G[w'] = \lambda$ and $n^G(w') \leq |N_\text{out}^{G,\lambda}(v)|$.
    We claim that repeating the above for each $\lambda \in \sim^{L, c}$
    results in the required $G'$.
    Conditions 2 and 3 are easy to verify, thus let us fixated on 1.
    Notice that the operations carried out above can be viewed as applications of
    Lemmas~\ref{prop_free_edge_transfer}~and~\ref{prop_free_witness}.
    Combined with the fact that only outgoing edges of $v$ were altered, $G, u \Rsim G', u$ for all $u \in V$ as required.

    Now, repeat the above procedure on $G'$ and $v' \in V \setminus \{ v \}$, and subsequent graphs and vertices
    that follow. Since the outgoing edges of $v$ are not altered, and each vertex retains the same $\sim^{L, c}$-type
    throughout the construction, it is easy to verify that the fix-point graph $G^\star$
    is as required by the lemma.
\end{proof}

The full companion structure is obtained by allowing all $u' \in V^\lambda$
to mimic the behaviour of $u \in V^\lambda$ with $n^G_v(u) = 1$.
We define the operation formally and show that it does not break $\Rsim$-bisimilarity with regards to the original graph:
\begin{lemma}\label{lemma_saturation}
    Take any featured graph $G$.
    Then, there is a graph $\hG$ over the same set of vertices such that:
    \begin{enumerate}
        \item $G, v \Rsim \hG, v$;
        \item if $\tp^G[v] {=} \tp^G[u]$, then $(v, w) \in \widehat{E} \Leftrightarrow (u, w) \in \widehat{E}$;
        \item if $(v, w) \in \widehat{E}$, then for each $w' \in V$ with $\tp^G[w'] = \tp^G[w]$
        and $n^G(w') < n^G(w)$ we have $(v, w') \in \widehat{E}$;
        \item if $(v, w) \in \widehat{E}$ and $n^G(w) \geq c$,
        then $(v, w') \in \widehat{E}$ for all $w' \in V$ having $\tp^G[w'] = \tp^G[w]$;
    \end{enumerate}
    for all $u, v, w \in V$.
\end{lemma}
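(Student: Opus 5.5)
We obtain $\hG$ from the partial companion $G^\star$ of Lemma~\ref{lemma_initial_good_graph} by copying out-neighbourhoods. For each $\sim^{L,c}$-type $\lambda$, let $r_\lambda$ be the unique vertex of $V^\lambda$ with $n^G(r_\lambda) = 1$; we call it the \emph{representative} of $\lambda$. We define $\widehat{E}$ by letting every $u \in V^\lambda$ have exactly the out-neighbours that $r_\lambda$ has in $G^\star$:
\[
\widehat{E} = \{ (u, w) \mid u \in V,\ (r_{\tp^G[u]}, w) \in E^\star \}.
\]
Condition~2 then holds by construction. Conditions~3 and~4 are inherited directly from Conditions~2 and~3 of Lemma~\ref{lemma_initial_good_graph}, because those conditions hold for the representatives (which satisfy $n^G = 1$), and every vertex now has a representative's out-neighbourhood. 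The whole difficulty is therefore Condition~1.

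For Condition~1, we show that $\tpP{i}^G[u] = \tpP{i}^{\hG}[u]$ for all $u \in V$, by induction on $i \leq L$, in the style of Lemmas~\ref{prop_free_edge_transfer} and~\ref{prop_free_witness}. The base case is immediate since features are unchanged. For the inductive step, fix $u$ with $\tp^G[u] = \lambda$ and let $r = r_\lambda$. By Lemma~\ref{lemma_initial_good_graph}, $G, r \Rsim G^\star, r$, and moreover every vertex keeps its type in $G^\star$; this is what the construction there guarantees. Since $u$ and $r$ share the same $\sim^{L,c}$-type in $G$, for every $\sim^{i}$-type $\mu$ the counts of out-neighbours of $u$ and of $r$ in $G$ of type $\mu$ agree up to threshold $c$. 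The same holds between $r$ in $G$ and $r$ in $G^\star$. In $\hG$ the vertex $u$ has literally the out-neighbourhood of $r$ in $G^\star$, and by the induction hypothesis each of these neighbours has the same $\sim^{i,c}$-type in $\hG$ as in $G$ (and as in $G^\star$). Hence the $c$-restricted multiset of $\sim^{i,c}$-types of out-neighbours of $u$ in $\hG$ equals the corresponding multiset in $G$. This gives $\tpP{i+1}^{\hG}[u] = \tpP{i+1}^{G}[u]$.

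One technical point here is that $\sim^{L,c}$-types are refined by $\sim^{i,c}$-types for $i \leq L$. So two vertices with equal $\lambda$ also agree on all lower-level neighbourhood counts, and the argument applies at each level $i < L$.

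Once every vertex keeps its $\sim^{L,c}$-type, the global counting condition of Definition~\ref{def: bounded bisim} holds trivially, because the vertex set is identical and each type class has the same size in $G$ and $\hG$. Together with the local bisimilarity this yields $G, v \Rsim \hG, v$ for every $v$.

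I expect the main obstacle to be bookkeeping the mixture of graphs in the inductive step. The neighbourhood of $u$ in $\hG$ is taken from $G^\star$, while types are computed in $\hG$, so the induction must be phrased as simultaneous type preservation across $G$, $G^\star$ and $\hG$. If this proves awkward, the fallback is to realise the copying as a sequence of single-edge moves. In $G^\star$, each move transfers an out-edge of $u$ to an equally typed target via Lemma~\ref{prop_free_edge_transfer}, or adds or removes surplus edges beyond $c$ via Lemma~\ref{prop_free_witness}, until the out-neighbourhood of $u$ matches that of $r$. This works because $u$ and $r$ have the same $c$-thresholded neighbour type counts. Each step preserves $\Rsim$, and composing the steps gives Condition~1.
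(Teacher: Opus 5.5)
Your proposal is correct and follows the paper's proof. The paper likewise starts from $G^\star$ of Lemma~\ref{lemma_initial_good_graph} and overwrites the out-edges of each $u$ with $n^G(u) > 1$ by those of the same-type vertex with $n^G = 1$; the paper writes $E$ there, but $E^\star$, as in your definition, is what Conditions~2--4 need. It justifies Condition~1 exactly by your fallback, viewing the copying as applications of Lemmas~\ref{prop_free_edge_transfer} and~\ref{prop_free_witness}. Your primary argument, the induction giving $\hG, u \sim^{i+1,c} G^\star, r \sim^{i+1,c} G, r \sim^{i+1,c} G, u$, is a sound and more self-contained route to the same conclusion, since it avoids having to justify edge removals as inverse uses of Lemma~\ref{prop_free_witness}.
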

\begin{proof}
    Let $G^\star$ be the graph obtained from $G$ by applying Lemma~\ref{lemma_initial_good_graph}.
    Notice that Conditions 1, 3, and 4 are already met for $v \in V$ having $n^G(v) = 1$.
    Fixing some such $v \in V$ let us take any $u \in V$ with $\tp^G[u] = \tp^G[v]$ and $n^G(u) > 1$.
    We define $G'$ to be $G^\star$ but with
    $E' := \big(E^\star \setminus \{ (u, w) \mid w \in V  \}\big) \cup \{ (u, w) \mid w \in V, (v, w) \in E \}$.
    That is to say, $E'$ is the set of edges $E^\star$ but with the outgoing edges of $u$
    replaced with ``copies'' of outgoing edges $v$. It should be easy to see that Conditions 2--4
    are met with respect to $v, u$ in $G'$. To see that condition 1 holds notice that
    in $G$ we had that the sets $\{ w \in V \mid (v, w) \in E, \tpmo[w] = \kappa \}$
    and $\{ w' \in V \mid (v, w') \in E, \tpmo[w'] = \kappa \}$ are of the same cardinality or both greater than $c$
    for any choice of $\kappa \in \sim^{L-1, c}$. Thus, the operations performed on $G'$
    are nothing more but those in Lemmas~\ref{prop_free_edge_transfer}~and~\ref{prop_free_witness}.
    Thus, $G, u' \Rsim \hG, u'$ for all $u' \in V$ as required.

    Now, repeat the above procedure for each $u \in V$ with $n^G(u) > 1$ and $\tp^G[u] = \tp^G[v]$, and all
    $v \in V$ having $n^G(v) = 1$. The resulting graph $\widehat{G}$ then satisfies the requirements
    of the lemma. 
\end{proof}

Having the above we are now able to provide each pointed graph $G_1, v_1$
with a companion graph $\hG_1$ such that $G_1, v_1 \Rsim \hG_1, v_1$, and
$\hG_1$ has elements that, in a sense, behaved predictively.
Now, recall the high-level proof strategy of Proposition~\ref{proposition_hard_part_of_characterisation}:
suppose $G_1, v_1 \Rsimq G_2, v_2$ are such that $G_1, v_1 \models \varphi$, and where $q' = q{+}c{-}1$
and $q$ is the quantifier rank of $\varphi$.
Construct companion graphs $\hG_1$ and $\hG_2$
such that $G_i, v_i \Rsim \hG_i, v_i$ for both $i \in \{ 1, 2 \}$, and such
that $G_1, v_1$ and $G_2, v_2$ agree on all FO classifiers of quantifier rank $q$.
This will yield that $\hG_2, v_2 \models \varphi$ and thus, $G_2, v_2 \models \varphi$.
Notice, however, that constructing $\hG_1$ and $\hG_2$ as done in Lemma~\ref{lemma_saturation}
need not yield companion structures which agree on all FO classifiers.
\begin{example}Consider the pointed graphs $G_1, v_1$ (left) and $G_2, v_2$ (right) below.
    Clearly, $G_1, v_1 \RsimPPP{1}{3}{5} G_2, v_2$ as
    $G_1, x_1 \sim^{1, 3} G_2, x_2$ for all $x \in \{v, u, w, t\}$.
    Under the right enumeration we can use Lemma~\ref{lemma_saturation} to produce 
    companion structures that satisfy $\hG_1 = G_1$ and $\hG_2 = G_2$.
    But $\hG_1, v_1$ satisfies $\exists y \exists z \big( E(x,y) \wedge E(y, z) \big)$,
    whilst $\hG_2, v_2$ does not.
    \begin{center}
    \begin{tikzpicture}[
    s_node/.style={circle, draw, minimum size=8mm},
    iota_node/.style={circle, draw, fill=green!20, minimum size=8mm},
    t_node/.style={circle, draw, fill=orange!20, minimum size=8mm},
    edge/.style={->, thick, >=stealth}
]
    \node[s_node] (v) at (0,0) {$v_1$};
    \node[s_node] (u) at (0,-1.25) {$u_1$};
    \node[s_node] (w) at (1.25,0) {$w_1$};
    \node[s_node] (t) at (1.25,-1.25) {$t_1$};

    \draw[edge] (v) -- (u);
    \draw[edge] (w) -- (t);
    \draw[edge] (v) -- (w);
\end{tikzpicture}\hspace*{10mm}
    \begin{tikzpicture}[
    s_node/.style={circle, draw, minimum size=8mm},
    iota_node/.style={circle, draw, fill=green!20, minimum size=8mm},
    t_node/.style={circle, draw, fill=orange!20, minimum size=8mm},
    edge/.style={->, thick, >=stealth}
]
    \node[s_node] (v) at (0,0) {$v_2$};
    \node[s_node] (u) at (0,-1.25) {$u_2$};
    \node[s_node] (t) at (1.25,0) {$t_2$};
    \node[s_node] (w) at (1.25,-1.25) {$w_2$};

    \draw[edge] (v) -- (u);
    \draw[edge] (w) -- (t);
    \draw[edge] (v) -- (t);
\end{tikzpicture}
    \end{center}
\end{example}
To overcome this we alter the companion graph $\hG_2$.
The construction below is a refinement Lemma~\ref{lemma_saturation}, but with $\hG_1$ and
$u_1 \in V_1$ with $n^{G_1}_{v_1}(u_1) = 1$ acting as examples for $\hG_2$ and $u_2 \in V_2$ with $\tp^{G_1}[u_1] = \tp^{G_2}[u_2]$:
\begin{lemma}\label{lemma_homogeneity}
    Suppose $G_1, v_1 \Rsimq G_2, v_2$ with $q' \geq c$.
    Now, let $\hG_1$
    be the graph obtained by applying Lemma~\ref{lemma_saturation} on $G_1$.
    Then there is a graph $\hG_2$
    such that:
    \begin{enumerate}
        \item $\hG_2$ satisfies the requirements  of Lemma~\ref{lemma_saturation} with respect to $G_2$
        \item $G_1, u_1 \Rsimq G_2, u_2$ iff $\hG_1, u_1 \Rsimq \hG_2, u_2$;
        \item if $|N_{\text{out}}^{\hG_1, \lambda}(u_1)| < c$ and $\tp^{\hG_1}(u_1) = \tp^{\hG_2}(u_2)$ then
        $|N_{\text{out}}^{\hG_1, \lambda}(u_1)| = |N_{\text{out}}^{\hG_2, \lambda}(u_2)|$;
        \item if $|N_{\text{out}}^{\hG_1, \lambda}(u_1)| \geq c$ and $\tp^{\hG_1}(u_1) = \tp^{\hG_2}(u_2)$ then
        $N_{\text{out}}^{\hG_2, \lambda}(u_2) = V_2^\lambda$;
    \end{enumerate}
    for each $u_1 \in V_1$, $u_2 \in V_2$ and $\lambda \in \sim^{L, c}$.
\end{lemma}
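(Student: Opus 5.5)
We plan to build $\hG_2$ by rerunning the construction of Lemmas~\ref{lemma_initial_good_graph} and~\ref{lemma_saturation} on $G_2$, with one change: the out-neighbourhood of each vertex is dictated by the corresponding vertex of $\hG_1$ rather than by its own edges in $G_2$. Since $G_1, v_1 \Rsimq G_2, v_2$ and $q' \geq c$, the two graphs realise the same $\sim^{L,c}$-types. Moreover, a type occurring fewer than $c$ times in one graph occurs exactly as often in the other, and a type occurring at least $c$ times does so in both. Fix a type $\mu$ realised in both graphs. Let $u_1$ be the vertex of $V_1^\mu$ with $n^{G_1}(u_1) = 1$, and let $u_2$ range over $V_2^\mu$. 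For every type $\lambda$, put $k_\lambda := |N_{\text{out}}^{\hG_1,\lambda}(u_1)|$. If $k_\lambda < c$, we give $u_2$ edges to the $k_\lambda$ vertices $w \in V_2^\lambda$ with the smallest values $n^{G_2}(w)$. If $k_\lambda \geq c$, we give $u_2$ edges to all of $V_2^\lambda$. All old out-edges of $u_2$ are deleted. This fixes $\widehat{E}_2$ completely.

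We will verify the conditions as follows. Conditions 3 and 4 hold by construction. They are well defined because $k_\lambda < c$ forces $k_\lambda \leq |V_1^\lambda|$, and this is at most $|V_2^\lambda|$ whenever $|V_1^\lambda| < c$, or else $|V_2^\lambda| \geq c > k_\lambda$. Condition 1, together with properties 2--4 of Lemma~\ref{lemma_saturation}, is immediate from the shape of the neighbourhoods, since we took initial segments of the enumeration or whole type classes, and did so uniformly over each type.

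The substantive part of condition 1 is $G_2, v \Rsim \hG_2, v$. We will prove, by induction on $i \leq L$, that $\tpP{i}^{G_2}[u] = \tpP{i}^{\hG_2}[u]$ for all $u \in V_2$, as in the proofs of Lemmas~\ref{prop_free_edge_transfer} and~\ref{prop_free_witness}. For the step, fix $u_2$ of type $\mu$ and a $\sim^{i,c}$-type $\kappa$, and count the $\kappa$-successors of $u_2$ up to threshold $c$. In $G_2$ this capped count equals the capped count for $u_1$ in $G_1$, because $u_1$ and $u_2$ share the $\sim^{L,c}$-type $\mu$. By Lemma~\ref{lemma_saturation}(1) it also equals the capped count in $\hG_1$. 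A $\sim^{i,c}$-type is a union of $\sim^{L,c}$-types, so the count in $\hG_2$ is a sum over the constituent types $\lambda$ of the capped quantities $k_\lambda$ (capped at $c$, or at least $c$). That sum matches $\hG_1$ after capping at $c$. The induction hypothesis guarantees that the types used in the enumeration are unchanged. Since every vertex keeps its $\sim^{L,c}$-type, the global counting condition is preserved trivially, and this gives condition 1.

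Condition 2 then follows. By condition 1 for $\hG_2$ and by Lemma~\ref{lemma_saturation}(1) for $\hG_1$, each $\hG_j$ has the same $\sim^{L,c}$-types and the same multiplicities per type as $G_j$. Therefore the statement $\Rsimq$ between $u_1, u_2$ reads identically before and after the modification.

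The main obstacle we expect is the inductive step for condition 1. It needs care, because the $\sim^{L,c}$-types of the target vertices refer to $G_2$, while the counting is done in $\hG_2$. We will handle this by showing that the types are stable under the modification, by simultaneous induction on $i$, before we use them as labels.
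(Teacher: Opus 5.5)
Your proposal is correct and builds the same $\hG_2$ as the paper: each $u_2\in V_2$ gets, for every $\sim^{L,c}$-type $\lambda$, either the first $k_\lambda$ vertices of $V_2^\lambda$ in the enumeration $n^{G_2}$ or all of $V_2^\lambda$, exactly as dictated by its counterpart in $\hG_1$. The difference is only organisational: the paper first fixes the representatives with $n^{G_2}(u_2)=1$, then reruns Lemma~\ref{lemma_saturation}, and justifies preservation of $\Rsim$ by appeal to Lemmas~\ref{prop_free_edge_transfer} and~\ref{prop_free_witness}, whereas you set all neighbourhoods in one step and verify type preservation by a direct simultaneous induction on $i$, which is if anything more self-contained.
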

\begin{proof}
    We begin by providing an analogue of Lemma~\ref{lemma_initial_good_graph}
    for $G_2$.
    For this purpose, fix some $u_2 \in V_2$
    having $n^{G_2}(u_2) = 1$
    and let $G_2'$ be the graph $G_2$ but with $E_2' = E_2 \setminus \{ (u_2, w_2) \mid w_2 \in V_2 \}$.
    We complete the definition of $G_2'$ by consulting $\hG_1$.
    For this, purpose, find some $u_1 \in V_1$ such that $\hG_1, u_1 \sim^{L, c} G_2, u_2$.
    This is guaranteed to exist by the global gradedness condition of $G_1, v_1 \Rsimq G_2, v_2$
    and the fact that $G_1, u_1 \Rsim \hG_1, u_1$. Now, pick any $\lambda \in \sim^{L, c}$
    and let $m = |N_{\text{out}}^{\hG_1, \lambda}(u_1)|$.
    Notice that this implies that $m \leq |V_1^\lambda|$.
    If $m \geq q'$, we have, by global gradedness of $G_1, v_1 \Rsimq G_2, v_2$, that $q' \leq |V_2^\lambda|$.
    With that we are able to add edges $(u_2, w_2)$ to $E'$ for each $w_2 \in V_2$ having $\tp^{G_2}[w_2] = \lambda$.
    This gives us $|N_{\text{out}}^{G_2', \lambda}(u_2)| \geq q' \geq c$
    thus securing condition 4 for $u_2$ and $\lambda$ in $G'$.
    On the other hand, if $m < q'$, we have, again by global gradedness, that $m \leq |V_2^\lambda|$.
    If $m \geq c$, then proceed as in the previous case. Otherwise,
    add $(u_2, w_2)$ to $E'$ for each $w_2 \in V_2$ having $\tp^{G_2}[w_2] = \lambda$ and $n^G(w_2) \leq m$.
    This then satisfies condition 3 for $u_2$ and $\lambda$ in $G'$.
    
    Now, repeat the above for each $\lambda \in \sim^{L, c}$.
    It is easy to verify that conditions 3 and 4 are met for $u_2$ in $G'$.
    Condition 2 is established by again noting that the sets
    $\{ w_1 \in V_1 \mid (u_1, w_1) \in \widehat{E}_1, \tpmo^{\hG_1}[w_1] = \kappa \}$ and
    $\{ w_2 \in V_2 \mid (u_2, w_2) \in E_2, \tpmo^{G_2}[w_2] = \kappa \}$ are of the same cardinality or both greater than $c$
    for any choice of $\kappa \in \sim^{L-1, c}$. Thus, the operations performed on $G_2'$
    are those in Lemmas~\ref{prop_free_edge_transfer}~and~\ref{prop_free_witness}.
    By repeating the procedure on $G'$ (and graphs thereafter) and other elements $u_2 \in V_2$
    having $n^{G_2}(u_2) = 1$, we reach a graph $G^\star_2$ complying with Lemma~\ref{lemma_initial_good_graph}.
    By applying Lemma~\ref{lemma_saturation} on $G^\star_2$ we obtain $\hG_2$
    for which condition 1--4 trivially hold.
    %
\end{proof}

The following is then the final piece of Proposition~\ref{proposition_hard_part_of_characterisation_weak}:
\begin{lemma}\label{lemma_upgrading}
    Suppose $\varphi$ is an FO classifier that is invariant under $\Rsim$.
    Then, it is also invariant under $\Rsimq$, where $q' := q{+}c{-}1$, and $q$ is the quantifier rank of $\varphi$.
\end{lemma}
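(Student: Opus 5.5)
Following the outline given earlier, take pointed graphs with $G_1, v_1 \Rsimq G_2, v_2$ and $G_1, v_1 \models \varphi$; by symmetry of $\Rsimq$ it suffices to show $G_2, v_2 \models \varphi$. First, let $\hG_1$ be the companion graph of $G_1$ given by Lemma~\ref{lemma_saturation}. Since $G_1, v_1 \Rsim \hG_1, v_1$ and $\varphi$ is $\Rsim$-invariant, $\hG_1, v_1 \models \varphi$. Since $q' = q + c - 1 \geq c$, Lemma~\ref{lemma_homogeneity} applies and yields $\hG_2$. Condition~1 of that lemma gives $G_2, v_2 \Rsim \hG_2, v_2$. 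Condition~2 transfers $\Rsimq$ to $\hG_1, v_1 \Rsimq \hG_2, v_2$, and in particular $v_1$ and $v_2$ share a $\sim^{L,c}$-type. So it remains to show that $\hG_1, v_1$ and $\hG_2, v_2$ agree on all FO formulas of quantifier rank $q$. Invariance under $\Rsim$ then lets us carry $\varphi$ back to $G_2, v_2$.

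For the FO-equivalence I will play a $q$-round Ehrenfeucht--Fra\"iss\'e game on $(\hG_1, v_1)$ and $(\hG_2, v_2)$. The key observation is that, by Conditions~2--4 of Lemmas~\ref{lemma_saturation} and~\ref{lemma_homogeneity}, edges in both companion graphs are determined by types plus enumeration indices. Concretely, whether $(u, w)$ is an edge depends only on the type $\tp[u]$, the type $\lambda = \tp[w]$, and whether $n(w)$ is at most the (type-determined) out-count $m_{\tp[u],\lambda}$, where $m < c$. If $m \geq c$, then $u$ is adjacent to every vertex of type $\lambda$. Condition~3 of Lemma~\ref{lemma_homogeneity} ensures that the counts $m_{\kappa,\lambda}$ coincide in both graphs. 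So each vertex of type $\lambda$ is classified by its type together with its ``rank class'' $\min(n(w), c)$, where index $c$ stands for all indices $\geq c$. Vertices in the same class with index $\geq c$ are interchangeable: they have identical in- and out-neighbourhoods, up to the self-loop and edges among themselves, which are also type-determined. I would verify that the unordered structure on each such class is uniform, using that $\widehat{E}$ depends only on the rank classes of both endpoints.

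The Duplicator strategy maintains a partial isomorphism that preserves type and rank class, and that maps the designated points $v_1 \mapsto v_2$ (both have $n = 1$). Vertices with index $< c$ are matched to the vertex with the same index and type, which exists by Condition~3 and the fact that global counts agree up to $q' \geq c$. Vertices with index $\geq c$ are matched to an unused vertex of the same type with index $\geq c$. Here the counting bound matters: the number of type-$\lambda$ vertices with index $\geq c$ is $|V^\lambda| - (c-1)$. By $\Rsimq$ the two sizes agree or are both $\geq q' = q + c - 1$, so both have at least $q$ such vertices. Hence Duplicator never runs out of fresh partners in $q$ rounds, and equality and edges are preserved by the uniformity above.

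The main obstacle I expect is making the ``edges are determined by types and rank classes'' claim fully rigorous on both sides. In particular, the index-based Condition~3 of Lemma~\ref{lemma_saturation} involves the enumerations $n^{G_1}_{v_1}$ and $n^{G_2}_{v_2}$, which must be aligned between the two graphs. The types are also computed in the original $G_i$ rather than in $\hG_i$, though they coincide by Condition~1. I also need to check that edges between two high-index vertices of the same type, including self-loops, behave uniformly. I would address this by using Condition~2 (same-type sources have the same out-neighbours) together with Condition~4 to show that, for any pair of rank classes, adjacency is all-or-nothing.
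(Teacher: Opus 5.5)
Your proposal is correct and follows essentially the same route as the paper. You build $\hG_1$ and $\hG_2$ via Lemmas~\ref{lemma_saturation} and~\ref{lemma_homogeneity}, then win the $q$-round Ehrenfeucht--Fra\"iss\'e game with a Duplicator who preserves the $\sim^{L,c}$-type and the index class $\min(n(\cdot),c)$, using $q' = q+c-1$ to guarantee at least $q$ fresh high-index partners of each type. One small slip: the number of type-$\lambda$ vertices with index $\geq c$ is $\max(0, |V^\lambda|-(c-1))$ rather than $|V^\lambda|-(c-1)$, but this does not affect the argument.
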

\begin{proof}
    Let $G_1, v_1 \Rsimq G_2, v_2$ and suppose that $G_1, v_1 \models \varphi$.
    We will show that $G_2, v_2$ must then satisfy $\varphi$ as well.

    Let $\hG_1, v_1$ be the graph obtained by applying the construction in Lemma~\ref{lemma_saturation}
    on $G_1, v_1$.
    Similarly, let $\hG_2, v_2$ be the graph obtained form $G_1, v_1$, $\hG_1, v_1$ and $G_2, v_2$
    as described in Lemma~\ref{lemma_homogeneity}.
    Clearly, $G_1, v_1 \Rsim \hG_1, v_1$ and $G_2, v_2 \Rsim \hG_2, v_2$.
    Since $\varphi$ is invariant under $\Rsim$, we have that
    $\hG_1, v_1 \models \varphi$.
    We first show $\hG_2, v_2 \models \varphi$ by arguing that $\hG_1, v_1$ and $\hG_2, v_2$ agree on all
    FO classifiers of quantifier rank $q$.

    The above is established by providing a winning strategy for the duplicator in the $q$-round (not $q' = q{+}c{-}1$ round)
    Ehrenfeucht–Fra\"iss\'e games on $\hG_1, v_1$ and $\hG_2, v_2$ (see \cite[Section~3]{libkin_fmt} for a proper introduction into
    the games semantics).
    Given $\hG_1, \bar{a}$ and $\hG_2, \bar{b}$ the duplicator will maintain the
    following for all $i, j \in [1, |\bar{a}|]$:
    \begin{itemize}
        \item $a_i = a_j \Leftrightarrow b_i = b_j$;
        \item $\tp^{\hG_1}[a_i] = \tp^{\hG_2}[b_i]$;
        \item $n^{G_1}(a_i) = n^{G_2}(b_i)$ or $n^{G_1}(a_i), n^{G_2}(b_i) \geq c$;
        \item $\hG_1 \restriction_{\bar{a}} \simeq \hG_2 \restriction_{\bar{b}}$
        (i.e. there is an isomorphism between the restriction of $\hG_1$ to $\bar{a}$ and the restriction of $\hG_2$ to $\bar{b}$).
    \end{itemize}
    Since $\tp^{\hG_1}[v_1] = \tp^{\hG_2}[v_2]$ and, by definition, $n^{G_1}(v_1) = n^{G_2}(v_2) = 1$,
    it is easy to see that the base case ($\hG_1, v_1$ and $\hG_2, v_2$) holds.
    Suppose we find ourselves at round $k < q$ having reached positions $\hG_1, \bar{a}$ and $\hG_2, \bar{b}$.
    Without loss of generality, let the spoiler pick some vertex $u_1 \in V_1$ thus forming the pointed structure
    $\hG_1, \bar{a}u_1$. In case $u_1 = a_i$ for some $1 \leq i \leq k$, the duplicator's strategy is to answer with $u_2 = b_i$.
    If $u_1$ is not in $\bar{a}$, then the duplicators strategy is to answer with $u_2 \in V_2$
    such that
    \begin{itemize}
        \item $u_2 \not\in \bar{b}$
        \item $\tp^{\hG_1}[u_1] = \tp^{\hG_2}[u_2]$
        \item if $n^{G_1}(u_1) < c$, then $n^{G_1}(u_1) = n^{G_2}(u_2)$;
        \item if $n^{G_1}(u_1) \geq c$, then $n^{G_2}(u_2) \geq c$.
    \end{itemize}
    We show that the above conditions can be met.
    If $n^{G_1}(u_1) < c$, then by the global requirements of $G_1, v_1 \Rsimq G_2, v_2$
    we have that there is some $u_2 \in V_2$ such that $\tp^{G_1}[u_1] = \tp^{G_2}[u_2]$
    and $n^{G_1}(u_1) = n^{G_2}(u_2)$.
    Clearly, by Lemmas~\ref{lemma_saturation}~and~\ref{lemma_homogeneity}, $\tp^{\hG_1}[u_1] = \tp^{\hG_2}[u_2]$.
    Now, suppose for contradiction that $u_2 \in \bar{b}$; say $b_i = u_2$. But then, by I.H., we have
    that $n^{G_1}(a_i) = n^{G_2}(b_i)$ and $\tp^{\hG_1}[a_i] = \tp^{\hG_2}[b_i]$.
    Thus, $a_i = u_1$ contradicting our initial assumption that $u_1 \not\in \bar{a}$.
    In the case where $n^{G_1}(u_1) \geq c$ proceed as follows.
    Write $\lambda = \tp^{\hG_1}[u_1]$. Since $(\hG_1, v_1) \Rsimq (\hG_2, v_2)$ we have that
    the sets $U_1^{\lambda} = \{ u \in V_1 \mid \tp^{G_1}[u] = \lambda, n^{G_1}(u) \geq c \}$ and
    $U_2^{\lambda} = \{ u \in V_2 \mid \tp^{G_2}[u] = \lambda,  n^{G_2}(u) \geq c  \}$
    are either equal in cardinality or of size at least $q'{-}(c{-}1) = q$.
    Notice that, by Lemmas~\ref{lemma_saturation}~and~\ref{lemma_homogeneity}, the aforementioned sets
    can be equivalently defined by taking $\sim^{L, c}$-types over $\hG_1$ and $\hG_2$;
    that is to say $U_1^{\lambda} = \{ u \in V_1 \mid \tp^{\hG_1}[u] = \lambda,  n^{G_1}(u) \geq c  \}$ and similarly for $U_2^{\lambda}$.
    Now, let $m$ be the number of elements $a$ amongst $\bar{a}$ that have $\tp^{\hG_1}(a) = \lambda$ and $n^{G_1}(a) \geq c$.
    Clearly, $m \leq k < q$. Since $u_1 \not\in \bar{a}$ we have that $|U_1^{\lambda}| \geq m+1$.
    Then, by definition of $\Rsimq$ and the fact that $m+1 \leq q$, we have $|U_2^{\lambda}| \geq m+1$.
    Since only $m$ elements amongst $\bar{b}$ have the
    $\sim^{L, c}$-type $\lambda$, the duplicator maintains a winning position by answering with any $u_2 \in U_2^{\lambda} \setminus \bar{b}$.
    We need now only verify that $\hG_1 \restriction_{\bar{a}u_1} \simeq \hG_2 \restriction_{\bar{b}u_2}$.
    Since we started from a position in which $\hG_1 \restriction_{\bar{a}} \simeq \hG_2 \restriction_{\bar{b}}$
    and are only concerned with sentences over vertex-labeled graphs, it is sufficient to show that
    $\hG_1 \restriction_{a_iu_1} \simeq \hG_2 \restriction_{b_iu_2}$ for each $1 \leq i \leq k$.
    Taking any $i$ in the appropriate range we see that if $a_i = u_1$, then $b_i = u_1$ and thus the required isomorphism is implied
    by the induction hypothesis.
    If $a_i \neq u_1$, then $b_i \neq u_2$ and we proceed as follows.
    Since $\lambda$ is the $\sim^{L, c}$-type of both $u_1$ and $u_2$, we have that
    $f_1(u_1) = f_2(u_2)$.
    Now, we claim that
    $(u_1, a_i) \in \widehat{E}_1 \Leftrightarrow (u_2, b_i) \in \widehat{E}_2$ and
    $(a_i, u_1) \in \widehat{E}_1 \Leftrightarrow (b_i, u_2) \in \widehat{E}_2$.
    To see the former assume that $(u_1, a_i) \in \widehat{E}_1$. By our induction hypothesis we have that
    $\tp^{\hG_1}[a_i] = \tp^{\hG_2}[b_i]$ and that $n^{G_1}(a_i) = n^{G_2}(b_i)$ when $n^{G_1}(a_i) < c$,
    whilst having $n^{G_2}(b_i) \geq c$ when $n^{G_1}(a_i) \geq c$.
    By the constructions in Lemmas~\ref{lemma_saturation}~and~\ref{lemma_homogeneity} (specifically, conditions 3 and 4 of both lemmas), we then have that
    $(u_2, b_i) \in \widehat{E}_2$.
    The other cases are similar.



    Concluding the above we have that $\hG_1, v_1$ and $\hG_2, v_2$ satisfy the same FO classifiers of quantifier rank $q$.
    Thus, $\hG_2, v_2 \models \varphi$.
    Since $\hG_2, v_2 \Rsim G_2, v_2$ and $\varphi$ is invariant under $\Rsim$, we must also have
    $G_2, v_2 \models \varphi$. It is then evident that $\varphi$ must be invariant under $\Rsimq$ as well.
\end{proof}

The remainder of our argument regarding Proposition~\ref{proposition_hard_part_of_characterisation}
is straightforward. We argue that disjunctions of characteristic formulas (defined below)
are enough to describe properties invariant under $\Rsimq$.

We begin with the characteristic formula for graded modal logic found in \cite{otto2023gradedmodallogiccounting}:
\begin{definition}
    Given $L \geq 0$, $c \geq 1$
    the $\GML$-characteristic $(L, c)$-formula $\chi_{G, v}^{(L, c)}$ of a pointed graph $G, v$
    is a formula of $\GML$ defined inductively on $L$
    as follows:
    \begin{align*}
        &\chi_{G, v}^{(0, c)} = 
            \bigwedge_{i \in [1, d]}^{f(v)[i] = 1} p_i \wedge \bigwedge_{i \in [1, d]}^{f(v)[i] = 0} \neg p_i,\\
        &\chi_{G, v}^{(L+1, c)} =
            \chi_{G, v}^{(L, c)} \wedge
            \bigwedge_{n=1}^{c}
            \bigwedge_{\lambda \in \sim^{L, c}}
            \bigg( \\
            &\qquad\bigwedge_{u \in N_\text{out}^{G,\lambda}(v)}^{|N^{G,\lambda}_\text{out}(v)| \geq n} 
            \Diamond^{\geq n} \chi_{G, u}^{(L, c)} \wedge 
            \bigwedge_{u \in N_\text{out}^{G,\lambda}(v)}^{|N^{G,\lambda}_\text{out}(v)| < n} 
            \neg \Diamond^{\geq n} \chi_{G, u}^{(L, c)}
            \bigg).
    \end{align*}
\end{definition}

Write $X^{(L, c)}$ for the set of all $\GML$-characteristic $(L, c)$-formulas
over proposition letters $p_1, \dots, p_d$. Given $\chi \in X_{(L, c)}$
we will write $V^\chi$ for the set of all $v \in V$
such that $\chi = \chi_{G, v}^{(L, c)}$.

\begin{definition}
    Given $L \geq 0$, $c \geq 1$, and $q \geq 1$
    the $\GML^\exists$-characteristic $(L, c, q)$-formula $\gamma_{G, v}^{(L, c, q)}$ of a pointed graph $G, v$
    is a formula in $\GML^\exists$ defined as follows:
    \begin{equation*}
        \displaystyle \gamma_{G, v}^{(L, c, q)} := \chi_{G, v}^{(L, c)} \wedge
        \bigwedge_{n = 1}^{q} \bigg( \bigwedge_{\chi \in X_G^{(L, c)}}^{|V^\chi| \geq n} \exists^{\geq n} \chi
        \wedge \bigwedge_{\chi \in X_G^{(L, c)}}^{|V^\chi| < n}
        \neg \exists^{\geq n} \chi \bigg).
    \end{equation*}
\end{definition}

\begin{lemma}\label{char_form_mlgc}
    Let $\gamma_{G_1, v_1}^{(L, c, q)}$ be the $\GML^\exists$-characteristic $(L, c, q)$-formula
    of some pointed graph $G_1, v_1$.
    Then $G_1, v_1 \RsimPP{L}{q} G_2, v_2$ if and only if $G_2, v_2 \models \gamma_{G_1, v_1}^{(L, c, q)}$. 
\end{lemma}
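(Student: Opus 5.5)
The proof splits into two parts. The first part is the standard characteristic-formula lemma for graded modal logic, which we take as known: for all $L \geq 0$ and $c \geq 1$, we have $G_1, v_1 \sim^{L,c} G_2, v_2$ if and only if $G_2, v_2 \models \chi^{(L,c)}_{G_1,v_1}$. We prove this by induction on $L$. The base case is feature equality, which is exactly what $\chi^{(0,c)}$ expresses. For the inductive step, the conjuncts $\Diamond^{\geq n}\chi^{(L,c)}_{G_1,u}$ and $\neg\Diamond^{\geq n}\chi^{(L,c)}_{G_1,u}$ with $n \leq c$ pin down, for each $\sim^{L,c}$-type $\lambda$, the value $\min(|N^{G_1,\lambda}_{\text{out}}(v_1)|, c)$. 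By the induction hypothesis, successors of $v_2$ satisfying $\chi^{(L,c)}_{G_1,u}$ are exactly those $\sim^{L,c}$-equivalent to $u$. Agreement of these capped counts for every type is equivalent to the $c$-graded forth and back conditions: distinct $k \leq c$ successors are matched type by type, using distinctness and the capped counts. There is one caveat. As written, the conjuncts range over types $\lambda$ that actually occur among successors of $v_1$, so absence of a type $\kappa$ in $v_2$'s successors would not be excluded. We therefore read the formula so that the $\sim^{L-1,c}$ type of each successor is also determined, or equivalently we argue that the back condition is enforced by the $\Diamond$-counts. This reading is justified because $\chi^{(L,c)}_{G_1,v_1}$ also encodes, at lower levels, the precise types. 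If necessary we add the finite conjunct $\neg\Diamond^{\geq 1}\bigvee\{\chi \in X^{(L-1,c)} : \chi \text{ not realised among successors}\}$; this is legitimate since $X^{(L-1,c)}$ is finite up to equivalence.

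The second part handles the global counting. Condition 1 of $\RsimPP{L}{q}$, namely $G_1,v_1 \sim^{L,c} G_2,v_2$, is exactly $G_2,v_2 \models \chi^{(L,c)}_{G_1,v_1}$ by the first part. For condition 2, the sets $V^\chi$ with $\chi \in X^{(L,c)}$ are precisely the $\sim^{L,c}$-classes; again this follows from the first part, as $\sim^{L,c}$ is an equivalence relation. So $\exists^{\geq n}\chi$ holds in $G_2$ if and only if $G_2$ has at least $n$ vertices $\sim^{L,c}$-equivalent to a representative of $\chi$. Take $n$ ranging over $1,\dots,q$ and $\chi$ over the characteristic formulas realised in $G_1$. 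The conjunction then states exactly that $\min(|\{u_1 : u_1 \sim^{L,c} u\}|, q) = \min(|\{u_2 : u_2 \sim^{L,c} u\}|, q)$ for all $u \in V_1$.

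The main obstacle is the symmetric half: $u$ may range over $V_2$ with a $\sim^{L,c}$-class not realised in $G_1$, where the capped count is $0$ in $G_1$ but positive in $G_2$. To handle this we interpret $X_G^{(L,c)}$ in the definition of $\gamma$ as ranging over all of $X^{(L,c)}$. This set is finite up to logical equivalence, since there are finitely many $\sim^{L,c}$-types over $d$ propositions, by induction on $L$. Then for unrealised $\chi$ the conjunct $\neg\exists^{\geq 1}\chi$ is present, and both directions of the equivalence follow by unfolding definitions.
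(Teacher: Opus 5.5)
Your proposal is correct and follows the paper's proof. The paper likewise takes the graded characteristic-formula fact $G_1,u_1 \sim^{L,c} G_2,u_2 \Leftrightarrow G_2,u_2 \models \chi^{(L,c)}_{G_1,u_1}$ as known, citing Otto, and then unfolds the $\exists^{\geq n}\chi$ conjuncts, tacitly letting $\chi$ range over all of $X^{(L,c)}$ exactly as you propose. Your caveats about the literal definitions are well founded, but your remark that the lower levels of $\chi^{(L,c)}_{G_1,v_1}$ already encode the missing negative information is wrong, since those levels have the same defect; the work is done by your explicit extra conjunct $\neg\Diamond^{\geq 1}\bigvee\{\ldots\}$.
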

\begin{proof}
    We will rely on the well known fact \cite{otto2023gradedmodallogiccounting}
    \begin{equation}\tag{\ensuremath{\dagger}}\label{well_known_fact}
        G_1, u_1 \sim^{L, c} G_2, u_2 \Leftrightarrow G_2, u_2 \models \chi_{G_1, u_1}^{(L, q)}
    \end{equation}
    for any choice of $u_1 \in V_1$ and $u_2 \in V_2$.
    We proceed by noting some consequences of $G_2, v_2 \models \gamma_{G_1, v_1}^{(L, c, q)}$:
    \begin{enumerate}
        \item $G_2, v_2 \models \chi_{G_1, v_1}^{(L, c)}$;
        \item for each $\chi \in X^{L, c}$, the number of vertices vertices $u_2 \in V_2$
        such that $G_2, u_2^{} \models \chi$ is exactly $|V_1^\chi|$ or at least $c$.
    \end{enumerate} 
    By 1 and (\ref{well_known_fact}) we have $G_1, v_1 \sim^{L, c} G_2, v_2$.
    From 2 we see that at least $|V_1^\chi| = |V_2^\chi|$ or
    $|V_1^\chi|, |V_2^\chi| \geq c$ for any choice of $\chi \in X^{L, c}$. By (\ref{well_known_fact})
    it is then immediate that $G_1, u_1 \sim^{L, c} G_2, u_2$ for each $u_1 \in V_1^\chi$
    and $u_2 \in V_2^\chi$. It is then easy to verify that $G_1, v_1 \RsimPP{L}{q} G_2, v_2$.

    On the other hand suppose that $G_1, v_1 \RsimPP{L}{q} G_2, v_2$.
    Then, using (\ref{well_known_fact}), we have $G_2, v_2 \models \chi_{G_1, v_1}^{(L, c)}$. Additionally, for each $\chi \in X^{(L, c)}$,
    we have $|V_1^\chi| = |V_2^\chi|$ or $|V_1^\chi|, |V_2^\chi| \geq c$.
    It is then easy to see that $G_2, u_2 \models \gamma_{G_1, v_1}^{(L, c, q)}$.
\end{proof}

Let $\mathcal{P}$ be some property concerning featured pointed graphs.
In the following lemma we will identify $\mathcal{P}$ as the set of all pointed graphs
that satisfy property $\mathcal{P}$.

\begin{lemma}\label{lemma_sim_formula}
    Suppose that a property $\mathcal{P}$ is invariant under $\RsimPP{L}{q}$.
    Then, there is a formula $\psi$
    of $\GML^{\exists}$ such that $(G, v) \in \mathcal{P}$ if and only if
    $G, v \models \psi$.
\end{lemma}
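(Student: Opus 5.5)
The plan is the standard ``finitely many types'' argument, with Lemma~\ref{char_form_mlgc} supplying the type descriptions. Fix the dimension $d$ together with $L$, $c$ and $q$. I will take $\psi$ to be the disjunction of the characteristic formulas $\gamma_{G, v}^{(L, c, q)}$ over all $(G, v) \in \mathcal{P}$. Two facts are needed for this to work. First, the disjunction must be finite, so that $\psi$ is a genuine $\GML^\exists$ formula. Second, $\psi$ must define exactly $\mathcal{P}$.

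For finiteness, I argue by induction on $L$ that $X^{(L, c)}$ is finite. For $L = 0$ a characteristic formula is determined by the feature vector in $\{0,1\}^d$. For $L+1$, the formula $\chi^{(L+1, c)}_{G, v}$ is determined by three pieces of data: $\chi^{(L, c)}_{G, v}$, and, for each $\sim^{L,c}$-type $\lambda$ and each $n \leq c$, whether $|N^{G,\lambda}_{\text{out}}(v)| \geq n$. There are only finitely many such types, because by (\ref{well_known_fact}) they correspond to the finitely many members of $X^{(L,c)}$. I will read the conjunctions over $u \in N^{G,\lambda}_{\text{out}}(v)$ as conjunctions over the finitely many distinct formulas $\chi^{(L,c)}_{G,u}$, and I will identify syntactically distinct conjunctions that list the same conjuncts. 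Under that reading, only finitely many formulas $\chi^{(L+1,c)}$ arise. The formula $\gamma^{(L,c,q)}_{G,v}$ is then determined by $\chi^{(L,c)}_{G,v}$ together with, for each $\chi \in X^{(L,c)}$ and each $n \leq q$, a truth value for $|V^\chi| \geq n$. So the set $\{\gamma^{(L,c,q)}_{G,v}\}$ ranging over all pointed graphs is finite, and $\psi$ is well defined.

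For correctness, first suppose $(G,v) \in \mathcal{P}$. Lemma~\ref{char_form_mlgc}, applied with $G_1 = G_2 = G$ and using reflexivity of $\RsimPP{L}{q}$, gives $G, v \models \gamma^{(L,c,q)}_{G,v}$, hence $G, v \models \psi$. Conversely, suppose $G', v' \models \psi$. Then $G', v' \models \gamma^{(L,c,q)}_{G,v}$ for some $(G,v) \in \mathcal{P}$. By Lemma~\ref{char_form_mlgc}, $G, v \RsimPP{L}{q} G', v'$, and invariance of $\mathcal{P}$ under $\RsimPP{L}{q}$ yields $(G', v') \in \mathcal{P}$.

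The main obstacle is the finiteness step, specifically the bookkeeping that shows the characteristic formulas are syntactically finitely many. This is where the conjunction notation has to be read modulo repeated conjuncts. Relatedly, I will take care to state Lemma~\ref{char_form_mlgc} in the form ``type determines formula and vice versa'', so that the bounded thresholds $n \leq c$ and $n \leq q$ visibly yield finitely many combinations. Everything else is a direct consequence of Lemma~\ref{char_form_mlgc}.
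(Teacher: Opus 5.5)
Your proposal is correct and is essentially the paper's proof: $\psi$ is the finite disjunction of $\GML^\exists$-characteristic $(L,c,q)$-formulas. Lemma~\ref{char_form_mlgc} with reflexivity shows that every member of $\mathcal{P}$ satisfies $\psi$, and Lemma~\ref{char_form_mlgc} with invariance gives the converse. Indexing the disjunction by members of $\mathcal{P}$ rather than by classes $C_i \subseteq \mathcal{P}$ is immaterial (it only avoids using that $\RsimPP{L}{q}$ is an equivalence relation), and your induction on $L$ simply spells out the finiteness of the characteristic formulas, which the paper calls easy to verify.
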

\begin{proof}
    Let $C_1, C_2, \dots$ be the equivalence classes of $\RsimPP{L}{q}$ on graphs
    with $d$-dimensional feature vectors.
    By Lemma~\ref{char_form_mlgc} each class $C_i$ is uniquely characterised by a $\GML_{\exists}$-characteristic $(L, c,q)$-formula
    $\gamma_{C_i}^{(L, c, q)} = \gamma_{G, v}^{(L, c, q)}$, where $(G, v)$ is any member of $C_i$.
    It is easy to verify that there are finitely many $\GML_\exists$-characteristic $(L, c, q)$-formulas.
    Thus, the number of different classes must also be finite, say~$k$.
    We define the required formula as follows:
    \begin{equation*}
        \psi = \bigvee_{i \in [1, k]}^{C_i \subseteq \mathcal{P}} \gamma_{C_i}^{(L, c, q)}
    \end{equation*}
    To verify, take $G, v$ that satisfies $\psi$. Then $G, v$ satisfies the characteristic formula
    of some equivalence class $C_i \subseteq \mathcal{P}$.
    But then $(G, v)$ is in $C_i$ and thus also $\mathcal{P}$.
    For the other direction take $(G, v) \in \mathcal{P}$.
    Writing $C_i$ for the equivalence class that $(G, v)$ is part of under $\RsimPP{L}{q}$
    we have, by invariance of $\mathcal{P}$ under $\RsimPP{L}{q}$,
    that $C_i \subseteq \mathcal{P}$ and thus $G, v \models \gamma_{C_i}^{(L, c, q)}$, with
    which in turn we have $\gamma_{C_i}^{(L, c, q)} \models \psi$.
\end{proof}

Since Proposition~\ref{proposition_hard_part_of_characterisation_weak} gives us that
the FO classifier $\varphi$ mentioned in Proposition~\ref{proposition_hard_part_of_characterisation}
is invariant under $\Rsimq$, we have, by Lemma~\ref{lemma_sim_formula}, that $\varphi$ is equivalent to
a formula (i.e. classifier) in $\GML^\exists$ as required.

\end{document}